\newcommand{\citep}[1]{\cite{#1}}
\newcommand{\citet}[1]{\cite{#1}}
\newcommand{\defeq}{\stackrel{\text{def}}{=}}
\newcommand{\ignore}[1]{}
\theoremstyle{plain}
\newtheorem{theorem}{Theorem}
\newtheorem{lemma}[theorem]{Lemma}
\newtheorem{corollary}[theorem]{Corollary}
\newtheorem{proposition}[theorem]{Proposition}
\newtheorem{fact}[theorem]{Fact}
\newtheorem*{theorem*}{Theorem}
\newtheorem*{lemma*}{Lemma}
\newtheorem*{corollary*}{Corollary}
\newtheorem*{proposition*}{Proposition}
\newtheorem*{claim*}{Claim}
\newtheorem*{fact*}{Fact}
\newtheorem*{observation*}{Observation}
\theoremstyle{definition}
\newtheorem{definition}[theorem]{Definition}
\newtheorem{remark}[theorem]{Remark}
\newtheorem*{definition*}{Definition}
\newtheorem*{remark*}{Remark}
\newtheorem*{example*}{Example}
 \theoremstyle{plain}
\newtheorem*{theoremaux}{\theoremauxref}
\gdef\theoremauxref{1}
\DeclareMathAlphabet{\mathbfsf}{\encodingdefault}{\sfdefault}{bx}{n}
\DeclareMathOperator*{\argmin}{arg\,min}
\newcommand{\abs}[1]{\left|#1\right|}
\newcommand{\eps}{\varepsilon}
\renewcommand{\leq}{~\le~}
\renewcommand{\geq}{~\ge~}
\let\oldtfrac\tfrac
\renewcommand{\tfrac}[2]{\smash{\oldtfrac{#1}{#2}}}
\newcommand{\pa}[1]{\left(#1\right)}
\newcommand{\bra}[1]{\left[#1\right]}
\renewcommand{\abs}[1]{\left|#1\right|}
\newcommand{\Nbb}{\mathbb{N}}
\newcommand{\N}{\mathcal{N}}
\newcommand{\R}{\mathbb{R}}
\newcommand{\norm}[1]{\left\lVert#1\right\rVert}
\newcommand{\Tcheb}{\mathcal{T}}
\newcommand{\Ucheb}{\mathcal{U}}
\newcommand{\out}{\mathrm{out}}
\newcommand{\Fcal}{\mathcal{F}}
\newcommand{\Gcal}{\mathcal{G}}
\newcommand{\Pcal}{\mathcal{P}}
\newcommand{\Bcal}{\mathcal{B}}
\newcommand{\Vcal}{\mathcal{V}}
\newcommand{\bits}{\mathrm{bits}}
\newcommand{\kappahat}{\widehat{\kappa}}
\newcommand{\acosh}{\mathrm{acosh}}
\newcommand{\goodcolor}[1]{{\color{blue}#1}}
\newcommand{\badcolor}[1]{{\color{red}#1}}
\newenvironment{manualtheorem}[1]{%
  \manualtheoreminner
}{\endmanualtheoreminner}
\newenvironment{manualproposition}[1]{%
  \manualpropositioninner
}{\endmanualpropositioninner}
\def\arxiv{} 
\title{Acceleration via Fractal Learning Rate Schedules}
\author{
  Naman Agarwal$^1$ \qquad Surbhi Goel$^2$ \qquad Cyril Zhang$^{2}$ \\
  \vspace{-2mm} \\
  $^1$Google AI Princeton \\
  $^2$Microsoft Research \\
  \vspace{-2mm} \\
   \texttt{namanagarwal@google.com}, \\ \texttt{\{goel.surbhi, cyrilzhang\}@microsoft.com} 
}
\begin{document}

\maketitle

\begin{abstract}
In practical applications of iterative first-order optimization, the learning rate schedule remains notoriously difficult to understand and expensive to tune. We demonstrate the presence of these subtleties even in the innocuous case when the objective is a convex quadratic. We reinterpret an iterative algorithm from the numerical analysis literature as what we call the \emph{Chebyshev learning rate schedule} for accelerating vanilla gradient descent, and show that the problem of mitigating instability leads to a \emph{fractal} ordering of step sizes. We provide some experiments to challenge conventional beliefs about stable learning rates in deep learning: the fractal schedule enables training to converge with locally unstable updates which make negative progress on the objective.
\end{abstract}

\section{Introduction}

In the current era of large-scale machine learning models, a single deep neural network can cost millions of dollars to train. Despite the sensitivity of gradient-based training to the choice of learning rate schedule, no clear consensus has emerged on how to select this high-dimensional hyperparameter, other than expensive end-to-end model training and evaluation. Prior literature indirectly sheds some light on this mystery, showing that the learning rate schedule governs tradeoffs between accelerated convergence and various forms of algorithmic stability.

In this work, we highlight the surprising consequences of these tradeoffs in a very simple setting: first-order optimization of a convex quadratic function. We start by pointing out the existence of a non-adaptive step size schedule, derived from the roots of Chebyshev polynomials, which allows plain gradient descent to obtain accelerated convergence rates without momentum. These learning rates overshoot the region of guaranteed local progress, resulting in unstable optimization trajectories. Extending a relatively obscure line of work motivated by numerical imprecision in PDE solvers \cite{lebedev1971order}, we show that stable acceleration is achieved by selecting a \emph{fractal} permutation of the Chebyshev step sizes.

Acceleration via large step sizes may provide an useful alternative to momentum: it is less stable according to our worst-case bounds, but inherits the memory-efficiency and statelessness of vanilla gradient descent. More broadly, we discuss how this form of acceleration might implicitly present itself in settings like deep learning, introducing hidden entanglements and experimental confounds. We hope that these ideas will lead to new adaptive algorithms which overstep the ``edge of stability'' (the largest constant learning rate at which model training converges) \citep{giladi2019stability,cohen2020gd}, and accelerate training via carefully scheduled negative progress. We provide some supporting experiments towards bridging the theory-practice gap, as well as open questions for future investigation.

\begin{figure}
    \centering
\ifdefined\arxiv 
    \includegraphics[height=0.2\linewidth]{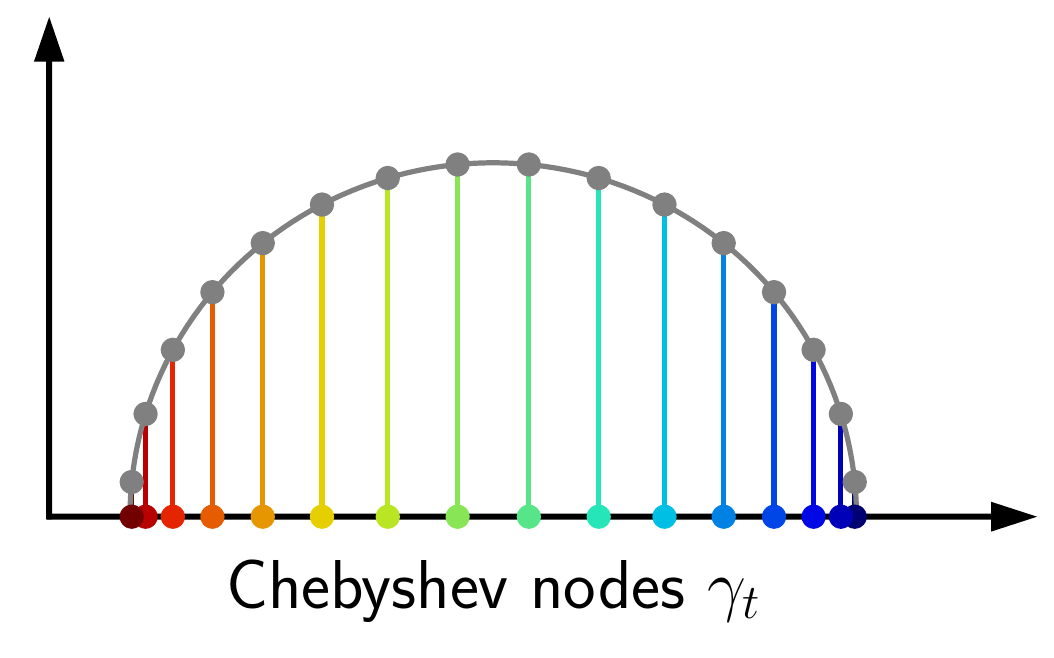}
    \includegraphics[height=0.2\linewidth]{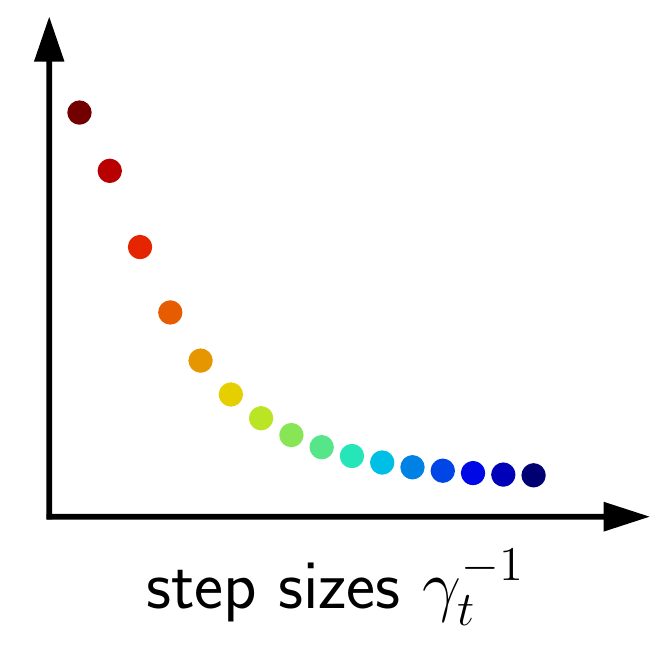}
    \includegraphics[height=0.2\linewidth]{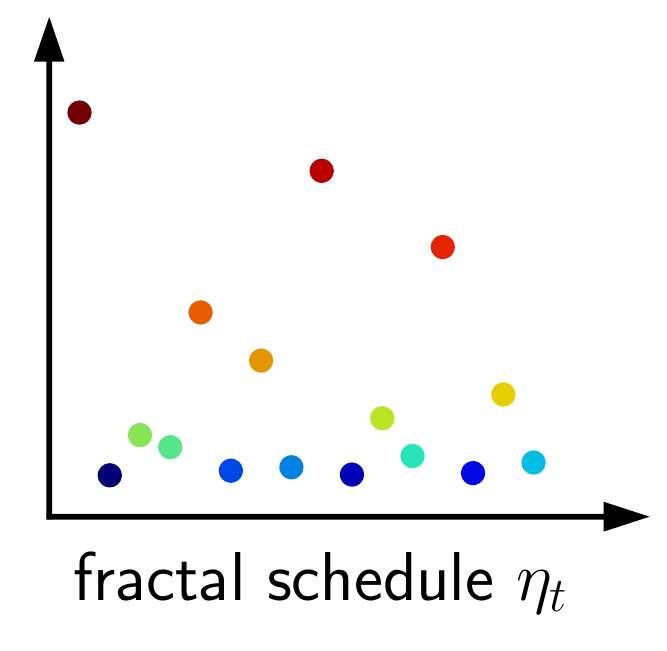}
\else
    \includegraphics[height=0.26\linewidth]{figures/semicircle.pdf}
    \includegraphics[height=0.26\linewidth]{figures/sched_monotone.pdf}
    \includegraphics[height=0.26\linewidth]{figures/sched_sigma.pdf}
\fi
    \caption{Visualization of the Chebyshev nodes $\gamma_t$, their corresponding step sizes $\gamma_t^{-1}$, and the fractal permutation \cite{lebedev1971order} studied in this paper.}
    \label{fig:cheb-lr-overview}
\end{figure}

\subsection{Our contributions}
\paragraph{Provably stable acceleration without momentum.} We revisit an oft-neglected variant of the Chebyshev iteration method for accelerating gradient descent on convex quadratics. In lieu of momentum, it uses a recursively-defined sequence of large step sizes derived from Chebyshev polynomials, which we call the fractal Chebyshev schedule. We prove a new stability guarantee for this algorithm: under bounded perturbations to all the gradients, no iterate changes by more than $O(\mathrm{poly}(\kappa))$, where $\kappa$ is the condition number of the problem. We also some provide theoretically-grounded practical variants of the schedule, and negative results for function classes beyond convex quadratics.

\paragraph{Empirical insights on stable oscillating schedules.} We demonstrate empirically that the fractal Chebyshev schedule stabilizes gradient descent on objectives beyond convex quadratics. We observe accelerated convergence on an instance of multiclass logistic regression, and convergent training of deep neural networks at unstable learning rates. These experiments highlight the power of optimizing the ``microstructure'' of the learning rate schedule (as opposed to global features like warmup and decay). We discuss how these findings connect to other implicit behaviors of SGD and learning rate schedules.

\subsection{Related work}
The predominant algorithms for accelerated first-order optimization are the momentum methods of \citet{polyak1964some} and \citet{nesterov1983method}. The former, known as the \emph{heavy-ball} method, only achieves provable acceleration on quadratic objectives. The latter achieves minimax optimal convergence rates for general smooth convex objectives. Both are widely used in practice, far beyond their theoretical scope; for instance, they are the standard options available in deep learning frameworks.

\paragraph{Empirical challenges and tradeoffs.} \citep{bottou2007tradeoffs} discuss the competing objectives of stability, acceleration, and computation in large-scale settings, where one cannot afford to consider a single asymptotically dominant term. \citet{devolder2014first,chen2018stability,agarwal2020stochastic} study this specifically for acceleration. Optimizing the learning rate schedule remains a ubiquitous challenge; see Section~\ref{subsec:experiments-dl} and Appendix~\ref{subsec:appendix-lr-schedule-practice} for references.

\paragraph{Numerical methods and extremal polynomials.} There are many connections between algorithm design and approximation theory \citep{vishnoi2012laplacian,sachdeva2013faster}. We emphasize that the beautiful idea of the fractal permutation of Chebyshev nodes is an innovation by \citet{lebedev1971order,lebedev1973solution,lebedev1976utilization}; our technical results are generalizations and refinements of the ideas therein.
We give an overview of this line of work in Appendix~\ref{subsec:appendix-lebedev-literature}.

\paragraph{Learning rate schedules in stochastic optimization.} Bias-variance tradeoffs in optimization are studied in various theoretical settings, including quadratics with additive and multiplicative noise \citep{lan2012optimal,ge2019step,gorbunov2020unified}. Many of them also arrive at theoretically principled learning rate schedules; see Appendix~\ref{subsec:appendix-lr-schedule-theory}. On the more empirical side, \citet{zhang2019algorithmic} use a noisy quadratic model to make coarse predictions about the dynamics of large-scale neural net training. Cyclic learning rate schedules have been employed in deep learning, with various heuristic justifications \cite{loshchilov2016sgdr,smith2017cyclical,fu2019cyclical}. In parallel work, \cite{oymak2021super} considers a cyclic ``1 high, $n$ low'' schedule, which gives $\log(\kappa)$ convergence rates in the special case of convex quadratics whose Hessians have bimodal spectra. We discuss in Appendix~\ref{subsec:appendix-vanilla-spiky} why this approach does not provide acceleration in the general case; the MNIST experiments in Appendix~\ref{subsec:appendix-deeplearning} include a comparison with this schedule.
\section{Preliminaries}

\subsection{Gradient descent}
We consider the problem of iterative optimization of a differentiable function $f : \R^d \rightarrow \R$, with a first-order oracle $\nabla f : \R^d \rightarrow \R^d$ which computes the gradient of $f$ at a query point. The simplest algorithm in this setting is gradient descent, which takes an arbitrary initial iterate $x_1 \in \R^d$ and executes $T$ update steps
\begin{equation}
\label{eq:gd}
\{ x_{t+1} \leftarrow x_t - \eta_t \nabla f(x_t) \}_{t=1}^{T}
\end{equation}
according to a learning rate schedule $(\eta_1, \ldots, \eta_T)$, producing a final iterate $x_{\out} := x_{T+1}$. When the $\{\eta_t\}$ do not depend on $T$, an analogous infinite sequence of iterates $\{x_t\}_{t \in \Nbb}$ can be defined.

There are many ways to choose the learning rate schedule, depending on the structure of $f$ and uncertainty in the gradient oracle. Some schedules are static (non-adaptive): $(\eta_1, \ldots, \eta_T)$ are chosen before the execution of the algorithm. For instance, when $f$ is an $M$-smooth convex function, $\eta_t = 1/M$ achieves the classical convergence rates.

Adaptive choices of $\eta_t$ are allowed to depend on the observed feedback from the current execution (including $x_t$ and $\nabla f(x_t)$), and are considerably more expressive. For example, $\eta_t$ can be chosen adaptively via line search, adaptive regularization, or curvature estimation.

\subsection{The special case of quadratics}
\label{subsec:prelims-quadratic-opt}

Consider the case where the objective is of the form
\[f(x) = \frac{1}{2} x^\top A x - b^\top x,\]
where $A \in \R^{d\times d}$ is symmetric and positive definite, and $b \in \R^d$,
so that $\nabla f(x) = Ax - b$ is an affine function of the query point $x$. Then, the mapping $\Gcal : x_t \mapsto x_{t+1}$ induced by gradient descent is also affine. Let $x^* := \min f$ (a fixed point of $\Gcal$). Then,
\begin{align*}
x_{t+1} - x^* &= \Gcal(x_t) - x^* = \Gcal(x_t) - \Gcal(x^*) \\
&= (I - \eta_t A)(x_t - x^*).
\end{align*}

By induction, we conclude that
\begin{align*}
x_{\out} - x^* = \bra{ \prod_{t=1}^T (I - \eta_t A) }(x_1 - x^*).
\end{align*}

Thus, the residual after $T$ steps of gradient descent is given by a degree-$T$ matrix polynomial times the initial residual:
\begin{definition}[Residual polynomial]
Fix a choice of non-adaptive $(\eta_1, \ldots, \eta_T)$. Then, define the \emph{residual polynomial} $p : \R^{d \times d} \rightarrow \R^{d \times d}$ as
\[ p(A) := \prod_{t=1}^T (I - \eta_t A). \]
\end{definition}
When clear, we will interchange to denote scalar and matrix polynomials with the same coefficients. Thus, overloading $p : \R \rightarrow \R$, we have $p(0) = 1$, and $p(1/\eta_t) = 0$ for each $t$.

\begin{remark}
The matrices in the above product all commute. Thus, when $f$ is quadratic, $p(A)$ (and thus $x_{\out}$ given $x_1$) does not depend on the permutation of $(\eta_1, \ldots, \eta_T)$.
\end{remark}

\subsection{Chebyshev polynomials and Chebyshev methods}
\label{subsec:prelims-cheb}

The problem of choosing $p(A)$ to optimize convergence for least-squares has roots in numerical methods for differential equations \cite{richardson1911ix}. The Chebyshev polynomials, which appear ubiquitously in numerical methods and approximation theory \cite{chebyshev1853theorie,mason2002chebyshev}, provide a minimax-optimal solution \cite{flanders1950numerical,gavurin1950use,young1953richardson}\footnote{For a modern exposition, see the blogpost \url{http://fa.bianp.net/blog/2021/no-momentum/}.}: choose positive real numbers $m \leq M$, and set
\[ p(\lambda) = \frac{ \Tcheb_T\pa{ z } }{ \Tcheb_T(\theta) }, \]
where $z := \frac{M+m-2\lambda}{M-m}$, $\theta := \frac{M+m}{M-m} = 1 + \frac{2m}{M-m}$, and $\Tcheb_n(\cdot)$ is the degree-$n$ Chebyshev polynomial of the first kind. One of many equivalent definitions is $\Tcheb_n(z) = \cos (n \arccos z)$ for $|z| \leq 1$. From this definition it follows that the roots of $p$ occur at the \emph{Chebyshev nodes}
\[ \gamma_t := \frac{M+m}{2} - \frac{M-m}{2} \cos \frac{(t-\frac{1}{2})\pi}{T}, \; t = 1, \ldots, T. \]
Setting $\{\eta_t\}$ to be any permutation of $\{1/\gamma_t\}$ suffices to realize this choice of $p$. Note that $1/\gamma_t$ is decreasing in $t$.
The limiting case $m = M$ is gradient descent with a constant learning rate, and $p(\lambda) = (1 - \lambda/m)^T$.

Let $\lambda_{\min}, \lambda_{\max}$ denote the smallest and largest eigenvalues of $A$, so that the \emph{condition number} of $A$ is $\kappa := \lambda_{\max}/\lambda_{\min}$. Viewing $m,M$ as estimates for the spectrum, we define
\[\kappahat := \frac{M}{m} \geq \frac{\lambda_{\max}}{\lambda_{\min}} = \kappa.\]
We state a classic end-to-end convergence rate for Chebyshev iteration (proven in Appendix~\ref{sec:appendix-cheb-background} for completeness):
\begin{theorem}[Convergence rate of Chebyshev iteration]
\label{thm:cheb-convergence-rate}
Choose spectral estimates $m \leq M$ such that $0 < m \leq \lambda_{\min} \leq \lambda_{\max} \leq M$. Then, setting $\{\eta_t\}$ to be any permutation of $\{1/\gamma_t\}$, the final iterate of gradient descent $x_{\out}$ satisfies the following:
\begin{align*}
\norm{x_{\out} - x^*} &\leq \frac{2\rho^T}{1+\rho^{2T}} \norm{x_1 - x^*} \\
&\leq e^{-\Omega(T)/\sqrt{\kappahat}} \norm{x_1 - x^*},
\end{align*}
where $\rho := \frac{ \sqrt{M} - \sqrt{m} }{ \sqrt{M} + \sqrt{m} } \leq 1 - \Omega\pa{\frac{1}{\sqrt{\kappahat}}}$.
\end{theorem}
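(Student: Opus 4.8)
The plan is to reduce the claim to a scalar estimate on the residual polynomial $p$ evaluated on the spectrum of $A$, and then to evaluate the relevant Chebyshev polynomial explicitly. First, recall from Section~\ref{subsec:prelims-quadratic-opt} that $x_{\out} - x^* = p(A)(x_1 - x^*)$, where (as in Section~\ref{subsec:prelims-cheb}) $p(\lambda) = \Tcheb_T(z(\lambda))/\Tcheb_T(\theta)$ with $z(\lambda) = \frac{M+m-2\lambda}{M-m}$ and $\theta = \frac{M+m}{M-m}$; by the Remark this is independent of the chosen permutation of $\{1/\gamma_t\}$. Since $A$ is symmetric, $p(A)$ is symmetric with eigenvalues $p(\lambda_i)$, so $\norm{x_{\out}-x^*} \le \norm{p(A)}\,\norm{x_1-x^*} = \max_i |p(\lambda_i)|\,\norm{x_1-x^*}$, where $\norm{p(A)}$ denotes the operator norm.

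Second, I would bound $\max_i |p(\lambda_i)| \le \max_{\lambda \in [m,M]} |p(\lambda)|$ using the hypothesis $m \le \lambda_{\min} \le \lambda_{\max} \le M$. For $\lambda$ in this range, $z(\lambda) \in [-1,1]$, and the definition $\Tcheb_T(z) = \cos(T\arccos z)$ gives $|\Tcheb_T(z(\lambda))| \le 1$. Hence $\max_{\lambda\in[m,M]}|p(\lambda)| \le 1/|\Tcheb_T(\theta)|$, and it remains only to evaluate $\Tcheb_T(\theta)$ at the argument $\theta > 1$, which lies outside the interval where the cosine representation applies.

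Third comes the key computation. For $\theta > 1$ one has $\Tcheb_T(\theta) = \cosh(T\arccosh\theta) = \frac12(\mu^T + \mu^{-T})$ where $\mu := \theta + \sqrt{\theta^2-1}$; this follows from the three-term recurrence $\Tcheb_{n+1} = 2z\Tcheb_n - \Tcheb_{n-1}$ matching the identity $\cosh((n+1)\phi) = 2\cosh\phi\,\cosh(n\phi) - \cosh((n-1)\phi)$, together with $e^{\arccosh\theta} = \theta + \sqrt{\theta^2-1}$. A short algebraic simplification using $\theta = \frac{M+m}{M-m}$ gives $\theta^2 - 1 = \frac{4Mm}{(M-m)^2}$, hence $\mu = \frac{M+m+2\sqrt{Mm}}{M-m} = \frac{(\sqrt M + \sqrt m)^2}{(\sqrt M - \sqrt m)(\sqrt M + \sqrt m)} = \frac{\sqrt M + \sqrt m}{\sqrt M - \sqrt m} = 1/\rho$. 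Therefore
\[ \frac{1}{\Tcheb_T(\theta)} = \frac{2}{\mu^T + \mu^{-T}} = \frac{2\rho^T}{1 + \rho^{2T}}, \]
and chaining this with the previous two steps yields the first inequality.

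Finally, for the second (cleaner, looser) bound I would drop $1+\rho^{2T}\ge 1$ to get $\norm{x_{\out}-x^*} \le 2\rho^T\,\norm{x_1-x^*}$, and bound $\rho = \frac{\sqrt{\kappahat}-1}{\sqrt{\kappahat}+1} = 1 - \frac{2}{\sqrt{\kappahat}+1} \le 1 - \frac{1}{\sqrt{\kappahat}}$, valid since $\sqrt{\kappahat}\ge 1$. Then $2\rho^T \le 2\,(1 - 1/\sqrt{\kappahat})^T \le 2\,e^{-T/\sqrt{\kappahat}} = e^{-\Omega(T)/\sqrt{\kappahat}}$, absorbing the constant factor into the $\Omega(\cdot)$, and the stated bound $\rho \le 1 - \Omega(1/\sqrt{\kappahat})$ is immediate from the same computation. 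I do not expect a genuine obstacle here — the entire argument is classical — but the one place that needs care is the explicit hyperbolic-cosine evaluation of $\Tcheb_T(\theta)$ and the algebraic identity $\theta + \sqrt{\theta^2-1} = (\sqrt M + \sqrt m)/(\sqrt M - \sqrt m)$, since this is exactly where the $\sqrt{\kappahat}$ acceleration enters.
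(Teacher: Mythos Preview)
Your proof is correct and follows essentially the same approach as the paper's: both reduce to bounding $\norm{p}_{[m,M]} = 1/\Tcheb_T(\theta)$ and then evaluate $\Tcheb_T(\theta)$ via the formula $\Tcheb_T(\theta) = \tfrac12(\mu^T+\mu^{-T})$ with $\mu = \theta+\sqrt{\theta^2-1} = 1/\rho$ (the paper packages this as Lemma~\ref{lem:cheb-expand}(i) with $\delta = \theta-1$). The only thing the paper adds that you omit is a one-line treatment of the degenerate case $m=M$, where $\theta$ is undefined but $\rho=0$ and the bound holds trivially.
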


Thus, accelerated methods like Chebyshev iteration get $\eps$-close to the minimizer in $O(\sqrt{\kappahat} \log (1/\eps))$ iterations, a quadratic improvement over the $O(\kappahat \log (1/\eps))$ rate of gradient descent with a constant learning rate. Theorem~\ref{thm:cheb-convergence-rate} is proven using approximation theory: show that $|p(\lambda)|$ is small on an interval containing the spectrum of $A$.
\begin{definition}[Uniform norm on an interval]
Let $p : \R \rightarrow \R$, and $m \leq M \in \R$. Define the norm
\[\norm{p}_{[m,M]} := \norm{p}_{L_\infty([m,M])} = \max_{\lambda \in [m,M]} |p(\lambda)|. \]
\end{definition}
Then, any upper bound on this norm gives rise to a convergence rate like Theorem~\ref{thm:cheb-convergence-rate}:
\[ \norm{x_{\out} - x^*} \leq \norm{p}_{[m,M]} \cdot \norm{x_1 - x^*}. \]
These can be converted into optimality gaps on $f$ by considering the polynomial $\lambda \, p^2(\lambda)$.

Moving beyond infinite-precision arithmetic, the optimization literature typically takes the route of \citet{stiefel1958kernel}, establishing a higher-order recurrence which ``semi-iteratively'' (iteratively, but keeping some auxiliary state) constructs the same final polynomial $p$. This is the usual meaning of the Chebyshev iteration method, and coincides with Polyak's momentum on quadratics.

This is where we depart from the conventional approach.\footnote{For instance, this is not found in references on acceleration \citep{bubeck2017convex,d2021acceleration}, or in textbooks on Chebyshev methods \citep{gottlieb1977numerical,higham2002accuracy}.} We revisit the idea of \emph{working directly with the Chebyshev step sizes}, giving a different class of algorithms with different trajectories and stability properties.
\section{The fractal Chebyshev schedule}
\label{sec:quadratic}
In this section, we work in the strongly\footnote{Accelerated rates in this paper have $O(1/T^2)$ analogues when $\lambda_{\min} = 0$ \citep{allen2016optimal}.} convex quadratic setting from Section~\ref{subsec:prelims-quadratic-opt}.
Our new contributions on top of the existing theory address the following questions:
\begin{enumerate}
    \item[(1)] How noise-tolerant is gradient descent with Chebyshev learning rates, beyond numerical imprecision?
    \item[(2)] How do we choose the ordering of steps?
\end{enumerate}

We first introduce the construction originally motivated by numerical error, which provides an initial answer to (2). Then, our extended robustness analysis provides an answer to (1), and subsequently a more refined answer to (2).

\subsection{Construction}
We begin with the construction from \cite{lebedev1971order}, defined below and visualized in Figure~\ref{fig:scheds}.

\begin{definition}[Fractal Chebyshev schedule]
Let $\sigma_1 := [1]$, and for each $T \geq 1$ a power of 2, define
\[\sigma_{2T} := \mathrm{interlace}(\sigma_T, 2T+1-\sigma_T), \]
where
\[\mathrm{interlace}([a_1 \ldots a_n], [b_1 \ldots b_n]) := [a_1 \; b_1 \; a_2 \; b_2 \ldots a_n \; b_n].\]
Then, for given $m \leq M$, and $T$ a power of 2, the \emph{fractal Chebyshev schedule} is the sequence of learning rates
\[ \eta_t := 1/\gamma_{\sigma_{T}(t)}, \quad t = 1, \ldots, T. \]
\end{definition}
Below are the first few nontrivial permutations $\sigma_T$:

\[\sigma_2 = [1\;2],\]
\[\sigma_4 = [1\;4\;2\;3],\]
\[\sigma_8 = [1\;8\;4\;5\;2\;7\;3\;6],\]
\[\sigma_{16} = [1\;16\;8\;9\;4\;13\;5\;12\;2\;15\;7\;10\;3\;14\;6\;11].\]

\begin{figure}
    \centering
\ifdefined\arxiv 
    \includegraphics[width=0.8\linewidth]{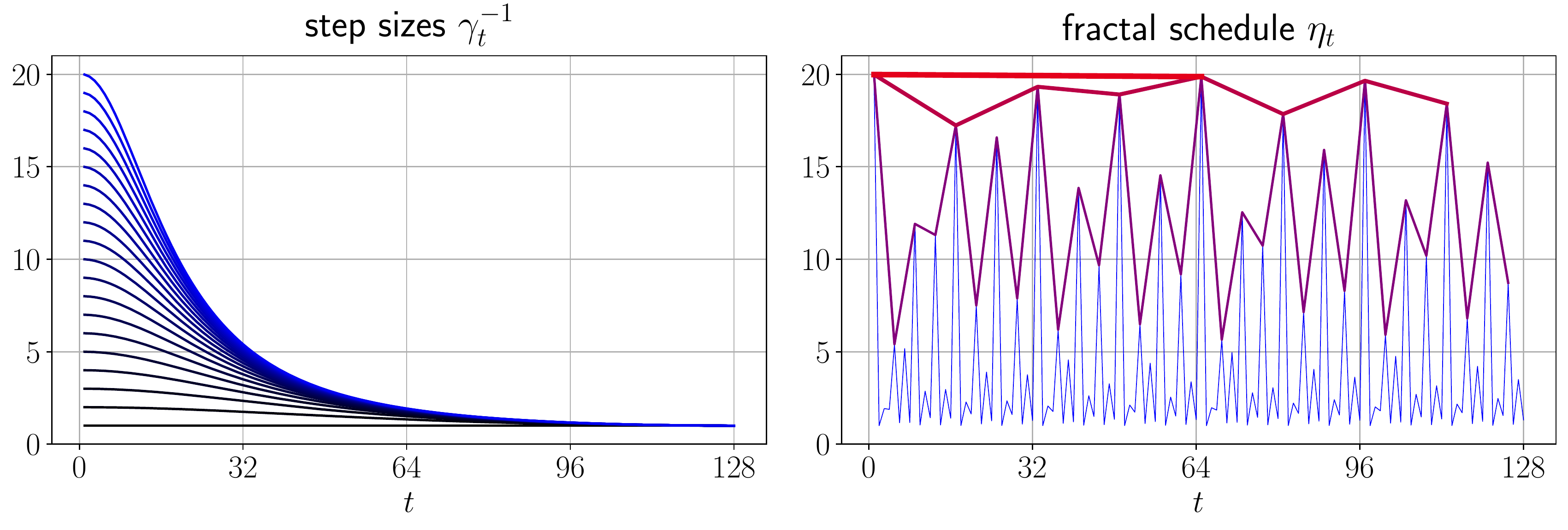}
\else
    \includegraphics[width=\linewidth]{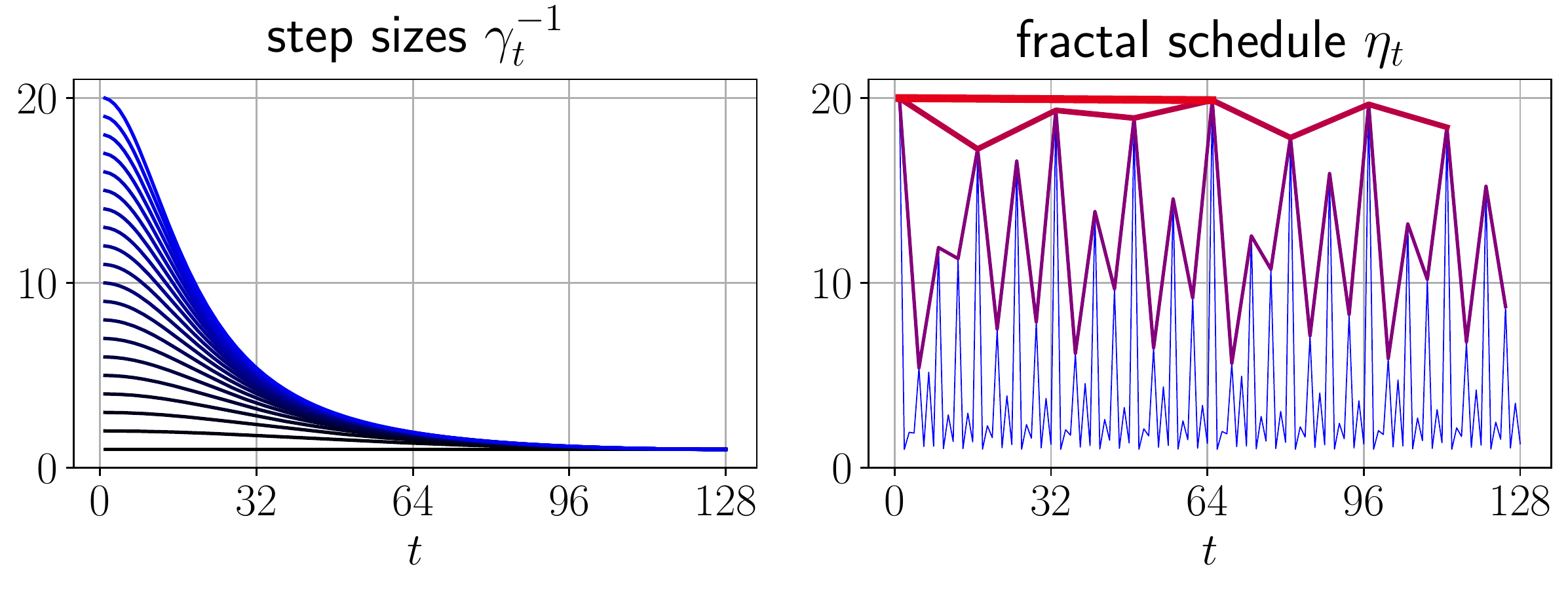}
    \vspace{-5mm}
\fi
    
    \caption{Shapes of the Chebyshev step sizes and fractal permutations. \emph{Left:} Step sizes in sorted order for $M=1$, and $m=1,\frac{1}{2},\ldots,\frac{1}{20}$ (black to blue). \emph{Right:} Permuted schedule with $M=1, m=\frac{1}{20}, T=128$ (red). Subsequences with strides $\{1, 4, 16, 64\}$ are overlaid, demonstrating self-similarity arising from the interlacing construction.}
    \label{fig:scheds}
\end{figure}

\subsection{Basic properties}
We first list some basic facts about the unordered step sizes:
\begin{proposition}
\label{prop:cheb-basics}
For all $m < M$ and $T$, the fractal Chebyshev step sizes $\{\gamma_t^{-1}\}$ satisfy the following:
\begin{enumerate}
    \item[(i)] $\frac{1}{M} < \gamma_t^{-1} < \frac{1}{m} = \frac{\kappahat}{M}$.
    \item[(ii)] The number of step sizes greater than $\frac{2}{M}$ is $\pa{ \frac{1}{2} - \eps }T$, where $0 \leq \eps \leq O(1/\kappahat)$ as $\kappahat \rightarrow \infty$.
    \item[(iii)] For $t \leq \frac{T}{2}$, we have $\gamma_t^{-1} < \frac{1}{m + \frac{2(M-m)t^2}{T^2}}$, and
    
    \hspace{-3mm}$\frac{1}{T}\sum_{t=1}^T \gamma_t^{-1} =  \frac{\tanh\pa{ T \, \acosh\left(\frac{2m}{M-m}\right)}}{\sqrt{Mm}} < \frac{1}{\sqrt{Mm}} = \frac{\sqrt{\kappahat}}{M}.$
\end{enumerate}
\end{proposition}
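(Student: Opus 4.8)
The three items are essentially independent, and I would treat them in order, using throughout the rewriting $\gamma_t = m + (M-m)\sin^2\vartheta_t$ with $\vartheta_t := \tfrac{(2t-1)\pi}{4T}$, which comes from $1-\cos\alpha = 2\sin^2(\alpha/2)$. Item (i) is then immediate: for $t = 1,\dots,T$ we have $\vartheta_t \in (0,\tfrac{\pi}{2})$, so $\sin^2\vartheta_t \in (0,1)$ strictly, hence $m < \gamma_t < M$ and $\tfrac1M < \gamma_t^{-1} < \tfrac1m$; and $\tfrac1m = \tfrac{M/m}{M} = \tfrac{\kappahat}{M}$ by definition of $\kappahat$.

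For (ii) I would first pin down the ``large'' step sizes exactly: $\gamma_t^{-1} > \tfrac2M \iff \gamma_t < \tfrac M2 \iff \cos(2\vartheta_t) > \tfrac{m}{M-m} \iff \vartheta_t < \tfrac12 \arccos\tfrac{m}{M-m}$. Since $1/\gamma_t$ is decreasing in $t$, the number of such $t$ equals $\lfloor \tfrac12 + \tfrac{T}{\pi}\arccos\tfrac{m}{M-m} \rfloor$, up to a harmless $\pm 1$ from integrality. Because $\tfrac{m}{M-m} > 0$ forces $\arccos\tfrac{m}{M-m} < \tfrac{\pi}{2}$, this count is at most $\tfrac T2$, which gives $\eps \ge 0$. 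For the upper bound on $\eps$ I would substitute $\tfrac{m}{M-m} = \tfrac1{\kappahat-1}$ and use $\arccos x \ge \tfrac\pi2 - 2x$ on $[0,\tfrac12]$, getting count $\ge \big(\tfrac12 - \tfrac{2}{\pi(\kappahat-1)}\big)T - O(1)$. The additive $O(1)$ is absorbed: if it matters at all then the count is strictly below $\tfrac T2$, which by the previous sentence forces $T = \Omega(\kappahat)$, hence $\tfrac1T = O(1/\kappahat)$; so $\eps = O(1/\kappahat)$ in all cases (the small-$\kappahat$ regime being trivial since $\eps$ is always at most $\tfrac12$).

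For the pointwise bound in (iii): when $t \le \tfrac T2$ we have $\vartheta_t < \tfrac\pi4$, so I would lower-bound $\sin\vartheta_t$ by a linear function valid on $[0,\tfrac\pi4]$ — crudely Jordan's $\sin\vartheta_t \ge \tfrac2\pi \vartheta_t$, or the sharper $\sin\vartheta_t \ge \tfrac{2\sqrt2}{\pi}\vartheta_t$ obtained from concavity of $\sin$ on that interval. Squaring and substituting into $\gamma_t = m + (M-m)\sin^2\vartheta_t$ gives $\gamma_t \ge m + (M-m)\cdot\Omega\big(\tfrac{(2t-1)^2}{T^2}\big) \ge m + \Omega\big(\tfrac{(M-m)t^2}{T^2}\big)$ using $(2t-1)^2 \ge t^2$ for $t \ge 1$, and taking reciprocals yields the claimed upper bound on $\gamma_t^{-1}$; nailing down the leading constant is a routine chase through the concavity estimate, with small $t$ (where $t - \tfrac12$ is not $\approx t$) being the only spot needing a moment's care.

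The average in (iii) is the slickest part and avoids trigonometry: let $p$ be the residual polynomial, so $p(\lambda) = \prod_t (1 - \lambda/\gamma_t)$ with $p(0) = 1$, and the logarithmic derivative at $0$ gives $\tfrac1T\sum_t \gamma_t^{-1} = -\tfrac1T p'(0)$. Now substitute the closed form $p(\lambda) = \Tcheb_T(z)/\Tcheb_T(\theta)$ with $z = \tfrac{M+m-2\lambda}{M-m}$, $\theta = \tfrac{M+m}{M-m} > 1$, and use $\Tcheb_T(\cosh\phi) = \cosh(T\phi)$ with $\phi := \acosh\theta$: differentiating through $\tfrac{d}{dz} = \tfrac1{\sinh\phi}\tfrac{d}{d\phi}$ and $z'(\lambda) = -\tfrac{2}{M-m}$ gives $-p'(0) = \tfrac{2}{M-m}\cdot\tfrac{T\tanh(T\phi)}{\sinh\phi}$. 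Finally $\cosh\phi = \tfrac{M+m}{M-m}$ yields $\sinh\phi = \tfrac{2\sqrt{Mm}}{M-m}$, so the factors of $(M-m)$ cancel and $\tfrac1T\sum_t \gamma_t^{-1} = \tfrac{\tanh(T\phi)}{\sqrt{Mm}} < \tfrac1{\sqrt{Mm}} = \tfrac{\sqrt{\kappahat}}{M}$ since $\tanh < 1$, with $\phi = \acosh\theta = \acosh(1 + \tfrac{2m}{M-m})$ as in the statement. The only genuine obstacles are keeping the constant honest in the pointwise bound of (iii) and recalling/verifying the hyperbolic form of $\Tcheb_T$ and its derivative for arguments outside $[-1,1]$, which drives the whole average computation.
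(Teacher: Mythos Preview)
Your approach is correct and matches the paper's in all three parts: (i) is immediate; (ii) locates the threshold via $\arccos\tfrac{m}{M-m}$ and bounds it linearly, exactly as the paper does; and for the pointwise bound in (iii) the paper applies $\cos\alpha \le 1 - (2\alpha/\pi)^2$ directly to $\gamma_t$, which is equivalent to your half-angle rewrite $\gamma_t = m + (M-m)\sin^2\vartheta_t$ followed by a Jordan-type lower bound on $\sin\vartheta_t$. The only visible difference is in the average: both compute $-p'(0)$ from $p(\lambda) = \Tcheb_T(z)/\Tcheb_T(\theta)$, but the paper goes through Vi\`ete and the second-kind identity $\Tcheb_T' = T\,\Ucheb_{T-1}$ with the closed form $\Ucheb_{T-1}(\theta)/\Tcheb_T(\theta) = \tanh(T\,\acosh\theta)/\sqrt{\theta^2-1}$, whereas you differentiate the hyperbolic identity $\Tcheb_T(\cosh\phi) = \cosh(T\phi)$ directly --- a slightly more self-contained variant of the same computation. (Your $\phi = \acosh\theta = \acosh\tfrac{M+m}{M-m}$ agrees with the paper's proof; the $\tfrac{2m}{M-m}$ printed in the proposition is a typo.)
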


Interpreting $m, M$ as estimates for $\lambda_{\min}, \lambda_{\max}$:
\begin{enumerate}
    \item[(i)] \emph{Every} step size in the schedule exceeds the classic fixed learning rate of $1/\lambda_{\max}$. As $T$ gets large, the largest step approaches $1/\lambda_{\min}$, a factor of $\kappa$ larger.
    \item[(ii)] For large $\kappa$, close to half of the step sizes \emph{overshoot} the stable regime $\eta \in [0, 2/\lambda_{\max}]$, where local progress on $f$ is guaranteed.
    \item[(iii)] The large steps are neither highly clustered nor dispersed. The largest $\gamma_t^{-1}$ overshoots the stable regime by a factor of $\Theta(\kappa)$, but the average factor is only $O(\sqrt{\kappa})$.
\end{enumerate}

Next, some basic observations about the fractal schedule:
\begin{proposition}[Hierarchy and self-similarity]
\label{prop:cheb-fractal-basis}
For all $m,M,T$ and $0 \leq i \leq \log_2 T$:
\begin{enumerate}
    \item[(i)] The largest $\frac{T}{2^i}$ steps $\eta_t$ in the fractal Chebyshev schedule occur when $t = 1 + 2^i(\tau-1)$, with $\tau = 1, \ldots, \frac{T}{2^i}$. 
    \item[(ii)] The subsampled sequence $\{\eta_{1+2^i(\tau-1)}\}$ has the same ordering as the fractal permutation of the same length:
    \[ \eta_{1+2^i\tau} = \gamma^{-1}_{1+2^i(\tau' - 1)}, \quad \text{ where } \tau' = \sigma_{T/2^i}(\tau). \]
\end{enumerate}
\end{proposition}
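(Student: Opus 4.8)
The plan is to observe that both claims are statements purely about the permutation $\sigma_T$, with no dependence on $m,M$, and then to reduce everything to a single combinatorial identity read off the interlacing recursion, run through an induction on $i$ (equivalently on $\log_2 T$). First I would record the reduction: since $\eta_t = 1/\gamma_{\sigma_T(t)}$ and $j \mapsto 1/\gamma_j$ is strictly decreasing (the map $j \mapsto \gamma_j$ is strictly increasing because $\cos$ is strictly decreasing on $(0,\pi)$ and the arguments $(j-\tfrac12)\pi/T$ all lie in that interval, using $m<M$), the phrase ``the largest $K$ step sizes'' means exactly ``the positions $t$ with $\sigma_T(t) \le K$.'' So (i) becomes: $\{t : \sigma_T(t) \le T/2^i\} = \{1 + 2^i(\tau-1) : \tau = 1,\dots,T/2^i\}$, and (ii) becomes a self-similarity statement about subsampling $\sigma_T$ on that arithmetic progression.

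The key structural fact, which I would state as a short lemma directly from the definition of $\mathrm{interlace}$, is that $\sigma_{2T}(2k-1) = \sigma_T(k)$ and $\sigma_{2T}(2k) = 2T+1-\sigma_T(k)$ for $k = 1,\dots,T$. Two consequences are immediate: subsampling $\sigma_{2T}$ at the odd positions returns exactly $\sigma_T$, and the odd positions are precisely $\{t : \sigma_{2T}(t) \le T\}$ since the even positions carry the values $\{T+1,\dots,2T\}$. Everything else is obtained by iterating this one step.

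Next I would prove part (ii), the engine, by induction on $i$; the base case $i=0$ is vacuous. For the step $i \to i+1$: a position $1 + 2^{i+1}(\tau-1)$ is odd, so write it as $2k-1$ with $k = 1 + 2^i(\tau-1)$; the structural lemma gives $\sigma_T\big(1+2^{i+1}(\tau-1)\big) = \sigma_{T/2}\big(1+2^i(\tau-1)\big)$, and the inductive hypothesis applied to $T/2$ rewrites the right-hand side as $\sigma_{(T/2)/2^i}(\tau) = \sigma_{T/2^{i+1}}(\tau)$. Part (i) then follows from (ii) together with injectivity of $\sigma_T$: by (ii) the restriction of $\sigma_T$ to the progression $\{1+2^i(\tau-1) : \tau \le T/2^i\}$ is a bijection onto $\{1,\dots,T/2^i\}$, so each such position maps into that set, and since $\sigma_T$ is injective and these $T/2^i$ positions already exhaust the preimage of $\{1,\dots,T/2^i\}$, no other position can map below $T/2^i$.

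I expect essentially no genuine obstacle here — there is no analytic content — so the ``hard part'' is only careful bookkeeping: keeping indices straight modulo $2^i$, handling the $\tau$ versus $\tau-1$ offsets consistently, and reconciling the exact index convention used in the displayed equation of (ii) (there appears to be a harmless off-by-one between writing the subsampled value as ``$\sigma_{T/2^i}(\tau)$'' and as ``$1 + 2^i(\tau'-1)$'' with $\tau' = \sigma_{T/2^i}(\tau)$, which I would sort out once at the start and then use a single fixed convention throughout). It is also worth spot-checking the induction against the listed permutations $\sigma_8,\sigma_{16}$, e.g.\ that subsampling $\sigma_{16}$ at odd positions yields $\sigma_8$, which it does.
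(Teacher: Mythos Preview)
Your proposal is correct and follows essentially the same approach as the paper: the paper's proof is a two-sentence sketch that says ``true by the recursive interlacing construction\ldots the indices in $\sigma_{T_i}$ are every other element of $\sigma_{2T_i}$,'' which is exactly your key lemma $\sigma_{2T}(2k-1)=\sigma_T(k)$ iterated $i$ times. Your write-up is considerably more careful than the paper's (and you are right to flag the indexing discrepancy in the displayed formula of (ii): the clean identity is $\sigma_T(1+2^i(\tau-1)) = \sigma_{T/2^i}(\tau)$, and the paper's version appears to contain typos both on the left and right).
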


Figure~\ref{fig:scheds} visualizes these observations, while Appendix~\ref{subsec:appendix-basics} contains formal statements and proofs.

\subsection{Self-stabilization via infix polynomial bounds}
Now, let us examine why the fractal ordering is needed. As discussed, in the noiseless infinite-precision setting, the final iterate $x_{\out}$ is invariant to the permutation of $\{\eta_t\}$. However, the intermediate iterates $x_t$ depend on a sequence of \emph{partial} products, which depend very sensitively on the permutation; Figure~\ref{fig:perm-stability} illustrates these tradeoffs; details are found in Appendix~\ref{subsec:appendix-perm-stability}.

\begin{figure}
    \centering
\ifdefined\arxiv 
    \includegraphics[width=0.8\linewidth]{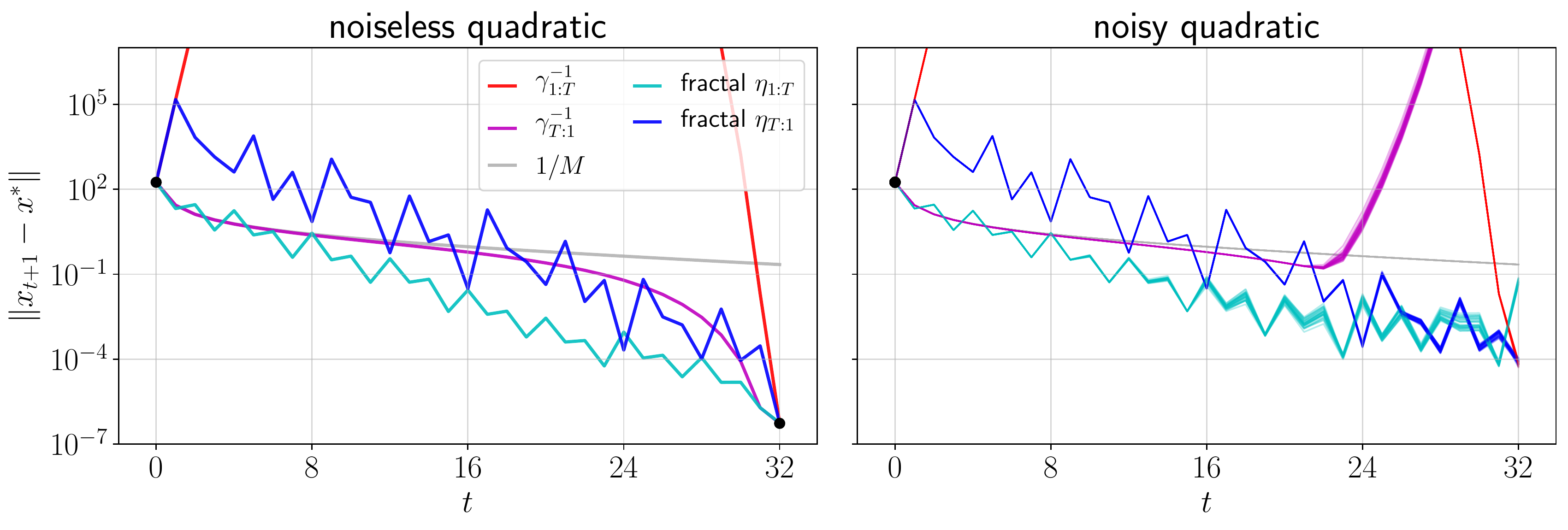}
\else
    \includegraphics[width=\linewidth]{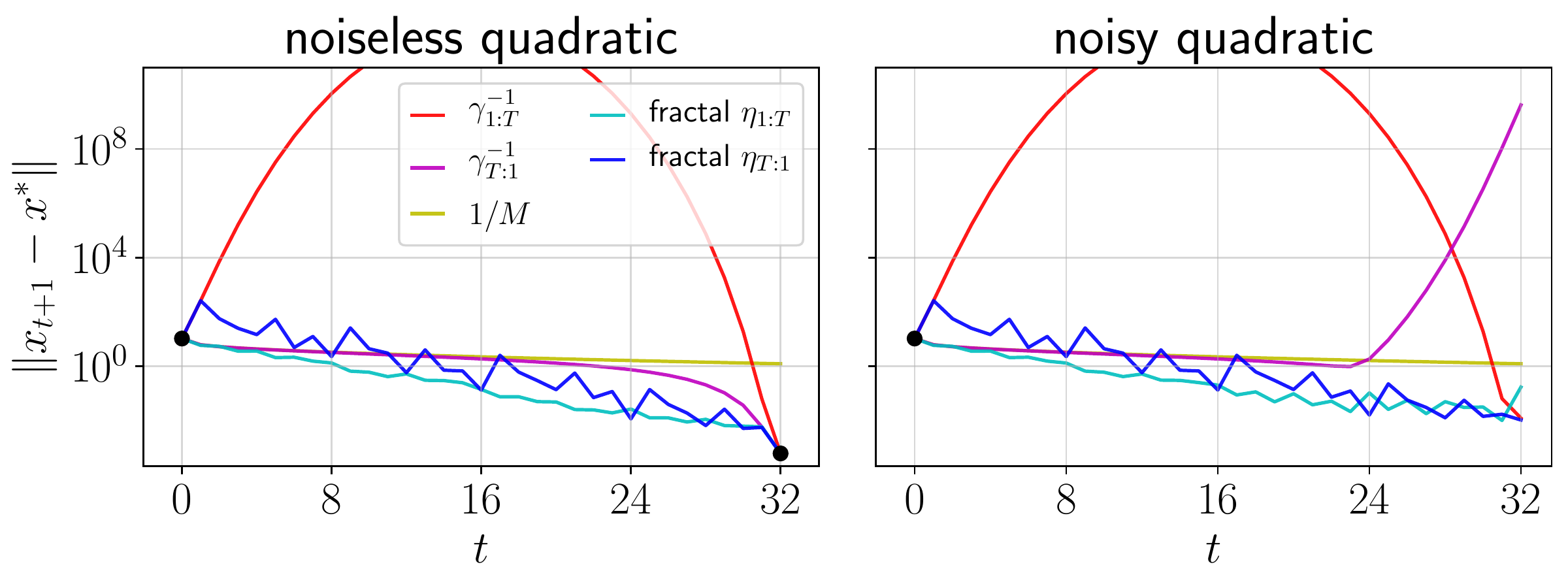}
    \vspace{-5mm}
\fi
    \caption{The optimization trajectories of various permutations of the Chebyshev step sizes. \emph{Left:} In the noiseless case, the final iterates coincide, but $x_t$ can wander exponentially far away. \emph{Right:} With (i.i.d. Gaussian) noise, there is a tradeoff between $\norm{x_t}$ and the stability of $x_{\out}$.}
    \label{fig:perm-stability}
\end{figure}

We motivate our first new results using an additive noise model; this is a refinement of \cite{lebedev1971order,lebedev1973solution,lebedev1976utilization}, which are only concerned with preventing exponential blowup of negligible perturbations at the numerical noise floor. We consider adding a sequence of perturbations $(\xi_1, \ldots, \xi_T)$ to gradient descent (Equation~\ref{eq:gd}):
\begin{equation}
\label{eq:perturbed-gd}
\{ x_{t+1} \leftarrow x_t - \eta_t \nabla f(x_t) + \xi_t \}_{t=1}^{T}.
\end{equation}
Note that this captures an inexact (e.g. stochastic) gradient oracle $\widetilde{\nabla f}(\cdot)$, in which case
\begin{equation}
\label{eq:gradient-noise}
\xi_t = \eta_t( \nabla f(x_t) - \widetilde{\nabla f}(x_t) ).
\end{equation}
Unrolling the recursion, we get:
\begin{gather*}
    x_2 - x^* = (I - \eta_1 A) (x_1 - x^*) + \xi_1, \\
    x_3 - x^* = (I - \eta_2 A)\bra{ (I - \eta_1 A)(x_1 - x^*) + \xi_1 } + \xi_2, \\
    \cdots \\
    x_t - x^* = p_{1:t-1}(A)(x_1 - x^*) + \sum_{t'=2}^{t} p_{t':t-1}(A) \xi_{t' - 1},
\end{gather*}
where we have defined the \emph{infix polynomial} as the (possibly empty) product
\[p_{s:t}(A) := \prod_{\tau=s}^t (I - \eta_\tau A).\]

\citet{lebedev1971order} give bounds on the norms of the \emph{prefix polynomials} $p_{1:t}$ and \emph{suffix polynomials} $p_{s:T}$:
\begin{theorem}[Prefix and suffix bounds]
\label{thm:lebedev-main}
For a fractal Chebyshev schedule with $m,M,T$, and all $1 \leq s \leq t \leq T$:
\begin{enumerate}
    \item[(i)] $\norm{p_{1:t}}_{[m,M]} \leq \frac{ \kappahat - 1 }{ 4^{\min(\bits(t))} } \prod_{j \in \bits'(t)} \frac{2}{1+\Tcheb_{2^j}(\theta)}$;
    \item[(ii)] $\norm{p_{s:T}}_{[m,M]} \leq \prod_{j \in \bits(T+1-s)} \frac{2}{1+\Tcheb_{2^j}(\theta)}$,
\end{enumerate}
where $\bits(n)$ denotes the sequence $j_1 > j_2 > \ldots > j_k$ of indices in the binary expansion of $n$, and $\bits'(n) := \bits(n) \setminus j_k $. For example, when $n = 6 = 2^2 + 2^1$, $\bits(n) = \{2,1\}$, and $\bits'(n) = \{2\}$.
\end{theorem}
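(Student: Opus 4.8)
The plan is to prove both bounds by induction on the depth of the fractal recursion, exploiting the self-similarity recorded in Proposition~\ref{prop:cheb-fractal-basis}. The crucial structural fact is that the degree-$2T$ residual polynomial $p^{(2T)}$ for $\sigma_{2T}$ factors as $p^{(2T)}(\lambda) = \prod_{t=1}^{2T}(1-\lambda/\gamma_t^{(2T)})$, and the interlacing $\sigma_{2T} = \mathrm{interlace}(\sigma_T, 2T+1-\sigma_T)$ means that for an even prefix length $t = 2\tau$, the prefix polynomial $p_{1:2\tau}^{(2T)}$ is exactly a product of $\tau$ ``low-index'' factors and $\tau$ ``high-index'' factors. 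I would first establish the right change of variables: the nodes $\gamma_t^{(2T)}$ are images of $\cos\frac{(t-1/2)\pi}{2T}$, and the pairing $t \leftrightarrow 2T+1-t$ sends $\cos\frac{(t-1/2)\pi}{2T}$ to its negative, so the paired factors combine into a polynomial in $z^2$ (equivalently in $\Tcheb_2$ of the half-scale variable). Concretely, $\bigl(1 - \lambda/\gamma_t^{(2T)}\bigr)\bigl(1-\lambda/\gamma_{2T+1-t}^{(2T)}\bigr)$, after the affine substitution, is proportional to $\Tcheb_2(z) - \Tcheb_2(\zeta_\tau)$ where $\zeta_\tau$ ranges over the degree-$T$ Chebyshev nodes' cosines; this is the algebraic identity that lets a length-$2T$ prefix of $\sigma_{2T}$ ``collapse'' onto a length-$T$ prefix of $\sigma_T$ in a squared variable.

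With that identity in hand, the inductive step goes as follows. Write $t$ in binary; let $j_k = \min(\bits(t))$ be the lowest set bit. If $t$ is even ($j_k \geq 1$), then $p_{1:t}^{(2T)}(\lambda)$ equals the prefix polynomial of the \emph{length-$T$} fractal schedule evaluated at the transformed variable $w$ with $w = 2z^2-1 = \Tcheb_2(z)$, composed with the normalization $\Tcheb_2(\theta)$-type factor. Since $\norm{\Tcheb_2(z)}$ for $z$ ranging over the image of $[m,M]$ lands in the interval $[m', M']$ corresponding to $T$ steps with parameter $\theta' = \Tcheb_2(\theta)$, i.e. $\kappahat' $ with $1 + \Tcheb_{2^j}(\theta') = 1 + \Tcheb_{2^{j+1}}(\theta)$, we reduce $\norm{p_{1:t}}_{[m,M]}$ to $\frac{2}{1+\Tcheb_{2^{j_1}}(\theta)}\cdots$ type factors by applying the inductive hypothesis at scale $T$ and index set $\bits(t/2) = \{j_1-1 > \cdots\}$ shifted. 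If instead $t$ is odd ($j_k = 0$), then $p_{1:t}^{(2T)}$ has one extra single factor $(1-\lambda/\gamma_1^{(2T)})$ (the largest step, which always comes first), and one bounds that leftover linear factor on $[m,M]$ by $\kappahat - 1$ up to the $\frac14$ from its peak; this is the source of the $\frac{\kappahat-1}{4^{j_k}}$ prefactor in~(i), with the $4^{j_k}$ accumulating each time we peel a ``low'' half at a level where the bit is zero all the way down. The suffix bound~(ii) is cleaner: the suffix of $\sigma_{2T}$ of length $2T+1-s$ decomposes the same way but \emph{without} the stray linear factor, because the complementary structure $2T+1-\sigma_T$ ensures suffixes always pair up into full $\Tcheb_2$-blocks, so pure induction gives the clean product $\prod_{j\in\bits(T+1-s)}\frac{2}{1+\Tcheb_{2^j}(\theta)}$.

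For the quantitative norm estimates inside each step I would use the standard fact that $\norm{\Tcheb_{2^j}(z)/\Tcheb_{2^j}(\theta)}_{[m,M]} = 1/\Tcheb_{2^j}(\theta)$ (attained on $[-1,1]$ in the $z$-variable), and the slightly more careful statement that a \emph{shifted} Chebyshev block normalized so that $p(0)=1$ has uniform norm exactly $\frac{2}{1+\Tcheb_{2^j}(\theta)}$ on the relevant subinterval — this comes from evaluating $\frac{\Tcheb_{2^j}(z) + 1}{\Tcheb_{2^j}(\theta)+1}$ and noting $\Tcheb_{2^j}(z)+1 = 2\Tcheb_{2^{j-1}}(z)^2 \geq 0$, so the numerator's max over $z \in [-1,1]$ is $2$. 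Assembling these per-level factors along the binary expansion of $t$ (resp. $T+1-s$) yields exactly the stated products.

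The main obstacle I anticipate is bookkeeping the affine reparametrizations cleanly: each recursion level changes $m, M, \theta, \kappahat$, and one must verify that the composed normalizations telescope precisely into $\prod_j \frac{2}{1+\Tcheb_{2^j}(\theta)}$ with the \emph{original} $\theta$, rather than some nested $\Tcheb$ of $\theta$. The identity $\Tcheb_{2^{j+1}}(\theta) = \Tcheb_{2^j}(\Tcheb_2(\theta))$ (the nesting/composition law for Chebyshev polynomials, $\Tcheb_{mn} = \Tcheb_m \circ \Tcheb_n$) is what makes this work, but tracking it through the ``$+1$'' shifts — i.e. checking $1 + \Tcheb_{2^j}(\theta') = 1+\Tcheb_{2^{j+1}}(\theta)$ is the correct recursion for the normalization constants, and that the induction's index set shifts by exactly one — is where the argument is most error-prone. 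A secondary subtlety is the odd-$t$ case: one must confirm that the stray factor really is the single largest step $\gamma_1^{-1}$ at every level (this is Proposition~\ref{prop:cheb-fractal-basis}(i) with $i=0$ applied recursively) and that bounding $|1 - \lambda/\gamma_1|$ by $\kappahat-1$ on $[m,M]$ — attained near $\lambda = M$ since $\gamma_1 \approx m$ — together with the factor-of-$4$ losses from each subsequent low-half peel, reproduces the $\frac{\kappahat-1}{4^{\min(\bits(t))}}$ prefactor and nothing worse.
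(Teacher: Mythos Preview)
Your recursion via the $\Tcheb_2$-substitution is a genuinely different strategy from the paper's, and for the suffix bound~(ii) it essentially works: when $2T+1-s$ is even the pairs collapse exactly as you describe, and when $2T+1-s$ is odd the stray factor at the even position $s$ is always a \emph{high-index} node $2T+1-\sigma_T(\sigma) \in \{T+1,\ldots,2T\}$, whose linear factor is bounded on $[m,M]$ by $1-m/M = \tfrac{2}{1+\theta} = \tfrac{2}{1+\Tcheb_1(\theta)}$, exactly the extra factor the target demands. (Your stated reason, that suffixes ``always pair up,'' is not literally correct, but the stray is always benign.)

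The prefix bound~(i), however, does not close as written. Your claim that the leftover linear factor for odd $t$ is always $(1-\lambda/\gamma_1^{(2T)})$ is false: for odd $t = 2\tau-1$ the unpaired position is $t$, which is node $\sigma_T(\tau)$, not node~$1$ unless $\tau=1$. More importantly, even granting a bound of $\kappahat-1$ on the stray, the induction does not close: the inductive bound for $p_{1:t-1}$ already carries the prefactor $(\kappahat-1)/4^{j_{k-1}}$ with $j_{k-1} = \min(\bits(t-1))$, so you would need
\[
(\text{stray bound}) \cdot \frac{\kappahat-1}{4^{j_{k-1}}} \prod_{j\in\bits'(t-1)}\frac{2}{1+\Tcheb_{2^j}(\theta)}
\;\le\;
(\kappahat-1)\prod_{j\in\bits'(t)}\frac{2}{1+\Tcheb_{2^j}(\theta)},
\]
i.e.\ $(\text{stray bound}) \le \tfrac{2\cdot 4^{j_{k-1}}}{1+\Tcheb_{2^{j_{k-1}}}(\theta)}$. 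With the crude bound $\kappahat-1 = \tfrac{2}{\theta-1}$ this becomes $1+\Tcheb_{2^{j_{k-1}}}(\theta) \le 4^{j_{k-1}}(\theta-1)$, which is false for $\theta$ near~$1$ (the left side is at least~$2$). A sharper bound on the stray tied to $j_{k-1}$ (via the fact that $\sigma_T(\tau)$ lies in $(T/2^{r+1}, T/2^r]$ where $r = v_2(\tau-1)$) might rescue this, but you have not supplied it, and it is not obvious.

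The paper's route is quite different: it introduces \emph{skewed} Chebyshev polynomials $\Pcal_{n,\alpha}(z) = \tfrac{\Tcheb_n(z)-\cos\alpha}{\Tcheb_n(\theta)-\cos\alpha}$, observes that every node in the fractal tree is such a polynomial, and that the prefix $p_{1:t}$ decomposes along $\bits(t)$ into a product of \emph{bad} ones (left children, $\alpha<\pi/2$). The key device is a ``tree exchange lemma'' showing that a product of two bad norm-bounds can be swapped for one good and one bad, which lets you iteratively trade all but one bad factor for good factors; the single surviving bad factor is what yields $(\kappahat-1)/4^{\min(\bits(t))}$. This sidesteps the accumulation problem your induction runs into.
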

Let $\Vcal(\cdot), \Vcal'(\cdot)$ denote the bounds from Theorem~\ref{thm:lebedev-main},
so that $\norm{p_{1:t}}_{[m,M]} \leq \Vcal'(t)$, and $\norm{p_{s:T}}_{[m,M]} \leq \Vcal(T+1-s)$. Notice that $\Vcal(t) \leq \frac{2}{1+\Tcheb_{\lfloor t/2 \rfloor}(\theta)} \leq e^{-\Omega(t)/\sqrt{\kappahat}}$ for all $t \geq 1$,
and $\Vcal'(t) \leq \kappahat \Vcal(t)$.

To fully understand the propagation of $\xi_t$ through Equation~\ref{eq:perturbed-gd}, we provide bounds on the infix polynomial norms:
\begin{theorem}[Infix polynomial bounds]
\label{thm:infix-bound}
For the fractal Chebyshev schedule with $m,M,T$, and all $1 \leq s \leq t \leq T$:
\[ \norm{p_{s:t}}_{[m,M]} \leq \Vcal(\zeta+1-s) \cdot \Vcal'(t-\zeta), \]
where $\zeta$ is the index such that $s-1 \leq \zeta \leq t$ and $\zeta, \zeta+1$ differ at the most significant bit.
\end{theorem}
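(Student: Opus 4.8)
The plan is to reduce the infix bound to the prefix and suffix bounds of Theorem~\ref{thm:lebedev-main} by cutting $p_{s:t}$ a single time, at the index $\zeta$ described in the statement (the coarsest level of the fractal hierarchy crossed by $[s-1,t]$), and recognizing the two resulting factors as a genuine suffix and a genuine prefix of shorter fractal Chebyshev schedules. Write $p_{s:t}=p_{s:\zeta}\,p_{\zeta+1:t}$; when $\zeta=s-1$ or $\zeta=t$ one factor is empty and the claim collapses (with $\Vcal(0)=\Vcal'(0)=1$) to one of the two bounds already in hand. Otherwise, using $\norm{fg}_{[m,M]}\le\norm{f}_{[m,M]}\norm{g}_{[m,M]}$ for scalar polynomials and $\norm{p(A)}_{\mathrm{op}}\le\norm{p}_{[m,M]}$ for symmetric $A$ with spectrum in $[m,M]$, it suffices to prove $\norm{p_{s:\zeta}}_{[m,M]}\le\Vcal(\zeta+1-s)$ and $\norm{p_{\zeta+1:t}}_{[m,M]}\le\Vcal'(t-\zeta)$.

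The engine is a self-similarity of the construction under pairing of adjacent step indices. The defining identities $\sigma_{2N}=\mathrm{interlace}(\sigma_N,\,2N+1-\sigma_N)$ and $\gamma_{2N+1-j}=(m+M)-\gamma_j$ force the steps at positions $2\tau-1$ and $2\tau$ to use a reflection pair of nodes $\{\gamma_{\sigma_N(\tau)},\,(m+M)-\gamma_{\sigma_N(\tau)}\}$, and the product of the two associated residual factors depends on $\lambda$ only through $w:=\Tcheb_2\big(\tfrac{M+m-2\lambda}{M-m}\big)$ (equivalently through $(\lambda-\tfrac{m+M}{2})^2$). So pairing collapses a fractal Chebyshev schedule of length $2N$ and parameter $\theta$ into one of length $N$ in the variable $w$ with parameter $\Tcheb_2(\theta)$; since $w$ sweeps $[-1,1]$ exactly as $\lambda$ sweeps $[m,M]$, this preserves the uniform norm, and iterating $r$ times with $\Tcheb_{2k}=\Tcheb_k\circ\Tcheb_2$ turns a dyadic sub-block of $2^{n-r}$ steps into a genuine fractal Chebyshev schedule with parameter $\Tcheb_{2^r}(\theta)$ — so $\Tcheb_{2^j}$ of the sub-block parameter equals $\Tcheb_{2^{j+r}}(\theta)$, matching the binary-digit form of $\Vcal,\Vcal'$. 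Because $\Tcheb_{2^r}(\theta)\ge\theta\ge1$, the reduced problem is also better conditioned: its proxy $\kappahat^{(r)}:=\tfrac{\Tcheb_{2^r}(\theta)+1}{\Tcheb_{2^r}(\theta)-1}$ obeys $\kappahat^{(r)}-1\le(\kappahat-1)/4^{r}$, an easy induction from $\Tcheb_2(x)-1=2(x-1)(x+1)\ge4(x-1)$ for $x\ge1$. With these facts the cut places $p_{s:\zeta}$ as a suffix of the sub-block ending at $\zeta$ and $p_{\zeta+1:t}$ as a prefix of the sub-block beginning at $\zeta+1$; applying Theorem~\ref{thm:lebedev-main}(ii) and (i) to those sub-blocks and substituting the index shift and the fringe improvement yields exactly $\Vcal(\zeta+1-s)$ and $\Vcal'(t-\zeta)$.

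The delicate bookkeeping — which I expect to be the bulk of the work — is threefold. First, one must pin down $\zeta$ so that the left part is truly a suffix and the right part truly a prefix of their sub-blocks; near the edges of the window this degenerates to $\zeta=s-1$ or $\zeta=t$, which is what the ``most significant bit'' rule is encoding. Second, the window boundary may cut a reflection pair in half, leaving a single ``dangling'' original step at position $s$ or $t$; here one needs that even-position steps are small ($\gamma_i$ near $M$, so $\norm{1-\lambda/\gamma_i}_{[m,M]}\le 1-1/\kappahat=\Vcal(1)$) while the odd-position dangling step on the right is harmlessly absorbed by the $\kappahat$ factor of $\Vcal'$ — this is exactly why $\Vcal$ (which carries no $\kappahat$) governs the left piece and $\Vcal'$ the right. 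Third, one must line up the binary expansions of $\zeta+1-s$ and $t-\zeta$ with the pairing depth so the sub-block estimates land \emph{on} $\Vcal,\Vcal'$ rather than on looser surrogates, using $\Vcal^{(r)}(\ell)=\Vcal(2^r\ell)$ together with the $1/4^{r}$ gain playing off the $4^{-\min(\bits(\cdot))}$ prefactor of $\Vcal'$. The estimate for the left piece is the easier half, since $\Vcal$ has no $\kappahat$-dependence to track.
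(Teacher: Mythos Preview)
Your high-level plan --- cut at $\zeta$, then bound $p_{s:\zeta}$ as a suffix and $p_{\zeta+1:t}$ as a prefix of the two LCA subtrees --- is exactly what the paper does. The gap is in how you justify invoking Theorem~\ref{thm:lebedev-main} on the pieces. Your pairing observation is correct for the \emph{whole} schedule: collapsing positions $(2\tau-1,2\tau)$ turns the length-$2N$ fractal Chebyshev schedule into a length-$N$ one in the variable $w=\Tcheb_2(z)$ with parameter $\Tcheb_2(\theta)$. But this self-similarity does not descend to individual sub-blocks. A depth-$r$ subtree is the \emph{skewed} polynomial $\Pcal_{N,\alpha}(z)=\frac{\Tcheb_N(z)-\cos\alpha}{\Tcheb_N(\theta)-\cos\alpha}$ with $\alpha$ generically $\neq\pi/2$; for $N\ge 4$ this is not an affinely rescaled $\Tcheb_N$ in any variable (its roots are $\pm\cos\frac{\alpha}{4},\pm\sin\frac{\alpha}{4}$ when $N=4$, and $\cot(\alpha/4)\ne\cot(\pi/8)$ unless $\alpha=\pi/2$). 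So the sub-block is not a fractal Chebyshev schedule for any parameter, and Theorem~\ref{thm:lebedev-main} cannot be applied to it as a black box. Your dangling-step bookkeeping, which presupposes the pairing isolates a bona fide shorter schedule containing the piece as a genuine suffix or prefix, therefore has nothing to attach to; after pairing, the pieces remain \emph{infixes} of the collapsed schedule, and you are back to the problem you started with.

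The paper's proof sidesteps this by never treating the LCA subtrees as fractal Chebyshev schedules. It instead reruns the \emph{argument} of Theorem~\ref{thm:lebedev-main} inside each subtree, which works because that argument depends only on the binary-tree splitting structure of the $\Pcal_{n,\alpha}$'s and the uniform bounds of Lemma~\ref{lem:cheb-skew}, not on the root angle being $\pi/2$. For $p_{s:\zeta}$, the suffix decomposition within the left subtree yields only right-children (good polynomials), each bounded by $\tfrac{2}{1+\Tcheb_{2^j}(\theta)}$, giving $\Vcal(\zeta+1-s)$ directly. For $p_{\zeta+1:t}$, the prefix decomposition within the right subtree yields only \emph{bad} polynomials, and the step you are missing is the \emph{tree exchange lemma}: a trigonometric inequality showing $\Bcal_{nr,\alpha}\cdot\Bcal_{r,(\pi-\alpha)/n}\le\Bcal_{nr,\pi-\alpha}\cdot\Bcal_{r,\alpha/n}$, which is iterated to trade all but one bad factor for good ones. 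That single residual bad factor is what produces the $(\kappahat-1)/4^{\min(\bits(\cdot))}$ in $\Vcal'$. Your proposal never invokes this exchange, and without it there is no route from the all-bad prefix decomposition to the $\Vcal'$ bound.
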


Then, analyzing the decay of $\Vcal, \Vcal'$, we derive cumulative error bounds:
\begin{theorem}[Infix series bounds]
\label{thm:infix-series-bound}
For a fractal Chebyshev schedule with $m,M,T$, and all $1 \leq s \leq t \leq T$:
\[ \sum_{t'=s}^t \norm{p_{t':t}}_{[m,M]} \leq O(\widehat{\kappa}^{1 + \frac{1}{\ln 4}} \log \widehat{\kappa}) = o\pa{ \widehat{\kappa}^{1.73} }. \]
This bound, a sum of up to $T$ terms, is independent of $T$.
\end{theorem}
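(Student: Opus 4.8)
The plan is to apply the infix bound of Theorem~\ref{thm:infix-bound} to each term, group the terms by the value of the split index $\zeta$, and collapse each group onto the elementary geometric estimate
\[ \sum_{k\ge 0}\Vcal(k)\ \le\ \sum_{k\ge 0}e^{-\Omega(k)/\sqrt{\kappahat}}\ =\ O\!\big(\sqrt{\kappahat}\,\big), \]
which is immediate from $\Vcal(k)\le e^{-\Omega(k)/\sqrt{\kappahat}}$ (noted after Theorem~\ref{thm:lebedev-main}). Since $\sum_{t'=s}^{t}$ is dominated by $\sum_{t'=1}^{t}$, it suffices to bound the latter uniformly in $t$, and hence independently of $T$.

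The first step is combinatorial. Write $t+1 = \sum_{k=1}^{\ell}2^{a_k}$ with $a_1 > a_2 > \cdots > a_\ell$, and for $i=1,\dots,\ell$ set $\zeta^{(i)} := \big(\sum_{k\le i}2^{a_k}\big) - 1$ and $R_i := t-\zeta^{(i)} = \sum_{k>i}2^{a_k}$. One checks from the definition of $\zeta$ in Theorem~\ref{thm:infix-bound} that $\zeta(t',t)$ --- the boundary in $[t'-1,t]$ differing from its successor in the highest bit --- equals $\zeta^{(i)}$ exactly when $t'$ lies in the $i$-th block $\{\zeta^{(i-1)}+2,\dots,\zeta^{(i)}+1\}$ (with $\zeta^{(0)}:=-1$), intersected with $\{1,\dots,t\}$: once the window $[t'-1,t]$ is wide enough to contain the most significant boundary not yet accounted for, $\zeta$ is pinned there. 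On the $i$-th block the left length $\zeta^{(i)}+1-t'$ sweeps (a subset of) $\{0,1,\dots,2^{a_i}-1\}$ while the right length is the constant $R_i$, so Theorem~\ref{thm:infix-bound} and the geometric estimate give
\[ \sum_{t'\text{ in block }i}\norm{p_{t':t}}_{[m,M]}\ \le\ \Vcal'(R_i)\sum_{k=0}^{2^{a_i}-1}\Vcal(k)\ \le\ \Vcal'(R_i)\cdot\min\!\Big\{2^{a_i},\,O\!\big(\sqrt{\kappahat}\,\big)\Big\}. \]
Summing over blocks, $\sum_{t'=1}^{t}\norm{p_{t':t}}_{[m,M]} \le \sum_{i=1}^{\ell}\Vcal'(R_i)\min\{2^{a_i},O(\sqrt{\kappahat})\}$, where crucially the residuals $R_i$ are strictly decreasing with decrements $R_{i-1}-R_i = 2^{a_i}$ (setting $R_0:=t+1$) that are \emph{distinct} powers of $2$.

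The crux --- and the step I expect to be the main obstacle --- is to bound $\sum_{i=1}^{\ell}\Vcal'(R_i)\min\{2^{a_i},O(\sqrt{\kappahat})\}$ independently of $T$, since $\ell$ can be $\Theta(\log T)$. Here the distinctness of the decrements is decisive: at most $O(\log\kappahat)$ of the $R_i$ can lie in any window of width $\sqrt{\kappahat}$, and at most $O(\log\kappahat)$ of them are below $\sqrt{\kappahat}$ at all. I would split the block sum at $R_i=\sqrt{\kappahat}$. For blocks with $R_i\ge\sqrt{\kappahat}$, use $\Vcal'(R_i)\le\kappahat\,\Vcal(R_i)\le\kappahat\,e^{-\Omega(R_i/\sqrt{\kappahat})}$; the window count collapses $\sum_i e^{-\Omega(R_i/\sqrt{\kappahat})}$ to $O(\log\kappahat)$, contributing $O(\kappahat^{3/2}\log\kappahat)$. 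For the $O(\log\kappahat)$ blocks with $R_i<\sqrt{\kappahat}$, the crude $\Vcal'(R_i)\le\kappahat$ together with $\sum_i 2^{a_i} = O(\sqrt{\kappahat})$ already gives $O(\kappahat^{3/2}\log\kappahat)$; tightening this to the stated exponent $1+\tfrac1{\ln 4}$ requires instead the sharper prefix bound $\Vcal'(R_i)\le\kappahat\,4^{-\min(\bits(R_i))}\prod_{j\in\bits'(R_i)}\tfrac{2}{1+\Tcheb_{2^j}(\theta)}$ of Theorem~\ref{thm:lebedev-main}(i), and balancing its $4^{-(\cdot)}$ decay against the weight $\min\{2^{a_i},O(\sqrt{\kappahat})\}$ --- this is where the base $4$ enters. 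Combining the two ranges yields $O(\kappahat^{1+1/\ln 4}\log\kappahat) = o(\kappahat^{1.73})$, with no dependence on $T$. The delicate point throughout is that one cannot charge a full $\sqrt{\kappahat}$ factor (from $\sum_k\Vcal(k)$) to each of the $\Theta(\log T)$ blocks simultaneously; the dyadic rigidity of the residual sequence is precisely what forbids this.
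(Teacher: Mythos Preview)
Your block decomposition mirrors the paper's: both partition $\{1,\dots,t\}$ along a binary expansion, split each infix into a suffix piece and a prefix piece, sum the suffix contributions with a series estimate, then sum the prefix bounds over blocks. The paper decomposes $t$ (not $t{+}1$), splits at $\bar t_j$ rather than the LCA point $\zeta^{(i)}$, and invokes Lemma~\ref{lem:series-sum-main} for the suffix series; your geometric estimate $\sum_k \Vcal(k)=O(\sqrt{\kappahat})$ is actually sharper than that lemma's $O(\kappahat^{1/\ln 4})$.

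There is, however, a gap in your prefix-block sum. The step $\Vcal'(R_i)\le\kappahat\,\Vcal(R_i)\le\kappahat\,e^{-\Omega(R_i/\sqrt{\kappahat})}$ is false when $R_i$ is a single power of two: for $R_i=2^a$ one has $\bits'(R_i)=\emptyset$ and $\Vcal'(R_i)=(\kappahat-1)/4^a$, which decays only polynomially in $R_i$; for $2^a\gg\sqrt{\kappahat}\log\kappahat$ this exceeds $\kappahat\,e^{-\Omega(R_i/\sqrt{\kappahat})}$ by an arbitrary factor. (The informal remark ``$\Vcal'(t)\le\kappahat\Vcal(t)$'' after Theorem~\ref{thm:lebedev-main} has this issue and is not used in the paper's actual proof, which works directly with the product $\prod_{r}\frac{2}{1+\Tcheb_{2^{t_r}}(\theta)}$.) The fix is cheap: among your residuals $R_i=\sum_{k>i}2^{a_k}$, only $R_{\ell-1}=2^{a_\ell}$ is a pure power of two, and that single block contributes at most $(\kappahat-1)\cdot O(\sqrt{\kappahat})$. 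For every other $R_i$ (two or more bits), the leading bit lies in $\bits'(R_i)$ and supplies the exponential decay you need.

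Once patched, your argument already gives $O(\kappahat^{3/2}\log\kappahat)$, which is \emph{stronger} than the stated $O(\kappahat^{1+1/\ln 4}\log\kappahat)$ since $\tfrac32<1+\tfrac1{\ln 4}\approx 1.72$. Your final paragraph about ``tightening to $1+\tfrac1{\ln 4}$'' has the inequality backwards; no further work with the $4^{-(\cdot)}$ factor is required.
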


These require generalizations of the combinatorial proofs for Theorem~\ref{thm:lebedev-main},
presented (along with more precise statements) in Appendices~\ref{subsec:appendix-infix-proof} and \ref{subsec:appendix-infix-series-proof}.

\subsection{Implications for gradient descent}
Theorem~\ref{thm:infix-series-bound} translates to the following end-to-end statement about gradient descent with the fractal schedule:
\begin{corollary}
\label{thm:cheb-noise-stability}
Suppose $0 < m \leq \lambda_{\min} \leq \lambda_{\max} \leq M$. Then, gradient descent 
with the fractal Chebyshev schedule of length $T$, and perturbations (as in Equation~\ref{eq:perturbed-gd}) such that $\norm{\xi_t} \leq \eps$, outputs iterates $x_t$ satisfying
\[ \norm{x_{t+1} - x^*} \leq \norm{p_{1:t}}_{[m,M]} \cdot \norm{x_1 - x^*} + o(\widehat\kappa^{1.73}) \cdot \eps. \]
Recall that Theorems~\ref{thm:lebedev-main} and \ref{thm:cheb-convergence-rate} guarantee
\ifdefined\arxiv 
    \[ \norm{p_{1:t}}_{[m,M]} \leq e^{-\Omega(T) \cdot \log(\kappahat)/\sqrt{\kappahat}} \; ; \qquad \norm{p_{1:T}}_{[m,M]} \leq e^{-\Omega(T)/\sqrt{\kappahat}}. \]
\else
    \begin{align*} \norm{p_{1:t}}_{[m,M]} &\leq e^{-\Omega(T) \cdot \log(\kappahat)/\sqrt{\kappahat}} \; ; \\
    \norm{p_{1:T}}_{[m,M]} &\leq e^{-\Omega(T)/\sqrt{\kappahat}}.
    \end{align*}
\fi
\end{corollary}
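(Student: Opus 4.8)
The plan is to obtain this as a direct consequence of the exact unrolling of the perturbed recursion (Equation~\ref{eq:perturbed-gd}), already recorded in the excerpt, together with Theorem~\ref{thm:infix-series-bound} and the standard passage from operator norms of matrix polynomials to uniform norms of scalar polynomials on an interval containing $\mathrm{spec}(A)$.

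First I would invoke the identity obtained by unrolling Equation~\ref{eq:perturbed-gd}: replacing $t$ by $t+1$ in the display preceding the definition of $p_{s:t}(A)$ and re-indexing the sum by $s = t'-1$,
\[ x_{t+1} - x^* = p_{1:t}(A)(x_1-x^*) + \sum_{s=1}^{t} p_{s+1:t}(A)\,\xi_s, \]
where $p_{t+1:t}(A) := I$ is the empty product. Taking Euclidean norms and applying the triangle inequality gives
\[ \norm{x_{t+1}-x^*} \leq \norm{p_{1:t}(A)}\,\norm{x_1-x^*} + \sum_{s=1}^{t}\norm{p_{s+1:t}(A)}\,\norm{\xi_s}. \]
Next, since $A$ is symmetric with $\mathrm{spec}(A) \subseteq [\lambda_{\min},\lambda_{\max}] \subseteq [m,M]$, the spectral theorem gives $\norm{q(A)} = \max_{\lambda \in \mathrm{spec}(A)} |q(\lambda)| \leq \norm{q}_{[m,M]}$ for every real polynomial $q$; applying this to each infix polynomial and to $p_{1:t}$, together with $\norm{\xi_s}\leq\eps$, yields
\[ \norm{x_{t+1}-x^*} \leq \norm{p_{1:t}}_{[m,M]}\norm{x_1-x^*} + \eps\sum_{s=1}^{t}\norm{p_{s+1:t}}_{[m,M]}. \]
Finally, $\sum_{s=1}^{t}\norm{p_{s+1:t}}_{[m,M]} = 1 + \sum_{t'=2}^{t}\norm{p_{t':t}}_{[m,M]} \leq 1 + \sum_{t'=1}^{t}\norm{p_{t':t}}_{[m,M]}$, which by Theorem~\ref{thm:infix-series-bound} (with $s=1$) is $O(\widehat\kappa^{1+1/\ln 4}\log\widehat\kappa) = o(\widehat\kappa^{1.73})$, absorbing the additive constant into the asymptotic. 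This is the claimed bound; the concluding display of the statement is then a verbatim substitution of Theorem~\ref{thm:lebedev-main}(i) (via $\Vcal'(t)\leq\widehat\kappa\,\Vcal(t)$) and Theorem~\ref{thm:cheb-convergence-rate}.

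Since all the genuinely hard combinatorial and approximation-theoretic content is packaged in Theorem~\ref{thm:infix-series-bound} (and, upstream, Theorems~\ref{thm:lebedev-main} and \ref{thm:infix-bound}), which I am entitled to assume, there is no substantive obstacle here — the corollary is bookkeeping. The only points needing care are the off-by-one in the infix indexing and the empty-product convention $p_{t+1:t}=I$; the observation that the perturbations $\xi_s$ enter \emph{additively}, so that the true spectrum of $A$ (not some noisy surrogate) controls the operator-norm-to-uniform-norm step; and leaving $\norm{p_{1:t}}_{[m,M]}$ symbolic rather than replacing it with its convergence bound inside the main inequality.
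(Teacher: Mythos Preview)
The proposal is correct and follows essentially the same approach as the paper, which treats the corollary as an immediate translation of Theorem~\ref{thm:infix-series-bound} via the already-displayed unrolling of the perturbed recursion. Your handling of the indexing, the empty-product convention, and the spectral-theorem passage from $\norm{q(A)}$ to $\norm{q}_{[m,M]}$ is exactly what is needed; the paper does not spell out a separate proof beyond this bookkeeping.
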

The fractal schedule allows the stability factor to be independent of $T$. When the perturbations arise from noisy gradients (as in Equation~\ref{eq:gradient-noise}), so that each $\xi_t$ is $\eta_t \eps$-bounded, this factor becomes $o(\kappahat^{2.73})$.

\paragraph{Provable benefit of negative progress.}
A striking fact about the fractal Chebyshev schedule is that this \emph{non-adaptive} method provably beats the minimax convergence rate of line search, the most fundamental \emph{adaptive} algorithm in this setting \citep{boyd2004convex}:
\begin{equation}
\label{eq:line-search}
\eta_{t}^{\mathrm{(ls)}} := \argmin_{\eta \geq 0} f(x_t - \eta \nabla f(x_t)).
\end{equation}
\begin{proposition}[No acceleration from line search]
\label{prop:line-search-bad}
On a strongly convex quadratic objective $f(x) = \frac{1}{2} x^\top A x + b^\top x$,
let $\{x_t\}$ be the sequence of iterates of gradient descent with the adaptive learning rate schedule $\eta_{t}^{\mathrm{(ls)}}$ from Equation~\ref{eq:line-search}. Then, for each $A,b$, there exists a setting of $x_1$ such that
\[ \norm{x_{t+1} - x^*} \geq \pa{1-\frac{1}{\Omega(\kappa)}}^T \!\!\!\! \cdot \norm{x_1 - x^*}, \quad \forall t \geq 1. \]
\end{proposition}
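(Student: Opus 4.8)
The plan is to reduce to a two-dimensional instance and exhibit the classical ``zig-zag'' trajectory of steepest descent, along which the contraction factor is \emph{exactly} $\frac{\kappa-1}{\kappa+1}=1-\Theta(1/\kappa)$, so no acceleration can occur. First I would normalize: translating by $x^*$ we may assume $b=\bzero$ and $x^*=\bzero$, so $\nabla f(x)=Ax$ and, up to an additive constant that affects neither $\eta_t^{\mathrm{(ls)}}$ nor the iterates, $f(x)=\frac12 x^\top A x$. Diagonalize $A$ and let $q_{\min},q_{\max}$ be unit eigenvectors for $\lambda_{\min}<\lambda_{\max}$ (assume $\kappa>1$; the equal-eigenvalue case is degenerate and the claim is asymptotic in $\kappa$). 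I would then force the initial iterate into the $A$-invariant plane $V:=\mathrm{span}(q_{\min},q_{\max})$: since $g_t=Ax_t\in V$ implies $\eta_t^{\mathrm{(ls)}}g_t\in V$, the whole trajectory stays in $V$, and exact line search restricted to $V$ is literally the same one-dimensional minimization as in $\R^d$ (the search line lies inside $V$). Hence it suffices to treat $d=2$ with $A=\mathrm{diag}(\mu,\nu)$, $\mu:=\lambda_{\min}<\nu:=\lambda_{\max}$, $\kappa=\nu/\mu$, whose restricted objective is again a quadratic of condition number $\kappa$.

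Next I would take the self-similar initial point $x_1:=(\nu,-\mu)$ and compute directly. For a quadratic the line-search rule is $\eta_t^{\mathrm{(ls)}}=\|g_t\|^2/(g_t^\top A g_t)$; here $g_1=Ax_1=\mu\nu\,(1,-1)$, so $\eta_1^{\mathrm{(ls)}}=\frac{2}{\mu+\nu}$, and a one-line calculation gives $x_2=\frac{\nu-\mu}{\mu+\nu}(\nu,\mu)$. Continuing, $g_2=\frac{\nu-\mu}{\mu+\nu}\mu\nu\,(1,1)$, again $\eta_2^{\mathrm{(ls)}}=\frac{2}{\mu+\nu}$, and $x_3=\big(\frac{\nu-\mu}{\mu+\nu}\big)^2(\nu,-\mu)$. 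A short induction then shows $x_t=\rho^{\,t-1}(\nu,(-1)^t\mu)$ with $\rho:=\frac{\nu-\mu}{\nu+\mu}=\frac{\kappa-1}{\kappa+1}$; in particular $\|x_{t+1}-x^*\|=\rho^{\,t}\,\|x_1-x^*\|$ for every $t\ge1$. (Intuitively, line search, unlike a conjugate-direction method, discards past search directions and keeps re-entering the same planar oscillation — Akaike's steepest-descent spiral.)

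Finally I would note that $\rho=1-\frac{2}{\kappa+1}\ge 1-\frac{2}{\kappa}=1-\frac{1}{\Omega(\kappa)}$, which is exactly the claimed lower bound — in fact an equality at every step — so reaching accuracy $\eps$ costs $\Omega(\kappa\log(1/\eps))$ iterations, a factor $\sqrt{\kappahat}$ worse than the fractal Chebyshev rate of \cref{thm:cheb-convergence-rate}. I do not expect a genuine obstacle: the only non-routine points are (a) stating the subspace reduction carefully enough that ``line search inside $V$'' really coincides with line search in $\R^d$, and (b) guessing the fixed-shape start $x_1=(\lambda_{\max},-\lambda_{\min})$ in the eigenbasis — once it is in hand everything is a direct computation. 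As an alternative one could avoid the explicit trajectory and lower-bound the per-step ratio $f(x_{t+1})/f(x_t)$ by $\big(\frac{\kappa-1}{\kappa+1}\big)^2$ via the Kantorovich inequality, but that controls only the worst-case single step, whereas the explicit construction directly yields the ``$\forall t$'' statement for one fixed $x_1$.
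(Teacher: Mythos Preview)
Your argument is correct and is precisely the classical zig-zag construction the paper has in mind: the paper does not give its own proof but simply cites Section~3.2.2 of Kelley, \emph{Iterative Methods for Optimization}, which carries out exactly the two-eigenvector reduction and the self-similar start $(\lambda_{\max},-\lambda_{\min})$ yielding contraction factor $\frac{\kappa-1}{\kappa+1}$ that you compute. Your handling of the subspace-invariance point and the explicit induction are tidy and suffice for the ``for all $t$'' statement.
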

This is a classic fact; for a complete treatment, see Section~3.2.2 of \cite{kelley1999iterative}. In the context of our results, it shows that greedily selecting the locally optimal learning rates is provably suboptimal, even compared to a feedback-independent policy.

Adaptive estimation of the local loss curvature is an oft-attempted approach, amounting to finding the best conservative step size $\frac{1}{M}$. Proposition~\ref{prop:line-search-bad} suggests that although such methods have numerous advantages, greedy local methods can miss out on acceleration. The fact that acceleration can be obtained from carefully scheduled overshooting is reminiscent of simulated annealing \cite{aarts1989simulated}, though we could not find any rigorous connections.

\paragraph{Comparison with momentum.} We stress that this form of acceleration does not replace or dominate momentum. The dependence of the stability term on $\kappahat$ is suboptimal \citep{devolder2014first}. In exchange, we get a \emph{memoryless} acceleration algorithm: gradient descent has no auxiliary variables or multi-term recurrences, so that $x_t$ fully specifies the state. This bypasses the subtleties inherent in restarting stateful optimizers \citep{o2015adaptive,loshchilov2016sgdr}.

Finally, our theory (especially Theorem~\ref{thm:underoverstepping}) implies that experiments attempting to probe the acceleration benefits of momentum might be confounded by the learning rate schedule, even in the simplest of settings (thus, certainly also in more complicated settings, like deep learning).

\subsection{Brief overview of proof ideas}
Figure~\ref{fig:perm-stability} suggests that there is a tradeoff between taking large $\Omega(1/m)$ steps for acceleration vs. small $O(1/M)$ steps for stability. To get acceleration, we must take all of the large steps in the schedule. However, we must space them out: taking $k = o(T)$ of the largest steps consecutively incurs an exponential blowup in the infix polynomial:
\begin{align*}
\prod_{i=1}^k \norm{ \pa{1 - \frac{\lambda}{\gamma_i}} }_{[m,M]} \!\!\!\! \approx \norm{ \pa{1 - \frac{\lambda}{m}}^k }_{[m,M]} \!\!\!\! = \pa{\kappahat - 1}^k\!\!.
\end{align*}

The difficulty arises from the fact that there are not enough small steps in the schedule, so that a large step will need to be stabilized by \emph{internal copies of Chebyshev iteration}. This is why the fractal schedule is necessary. Theorem~\ref{thm:infix-bound} shows that this is surprisingly possible: the fractal schedule is only as unstable as the largest single step.

This intuition does not get us very far towards an actual proof: the internal copies of Chebyshev iteration, which form a complete binary tree, are ``skewed'' in a way that is sometimes better, sometimes worse. Isolating a combinatorial \emph{tree exchange lemma} used to prove Theorem~\ref{thm:lebedev-main}, we can iteratively swap two special infix polynomials with two others, and localize ``bad skewness'' to only one large step. Theorem~\ref{thm:infix-bound} follows from decomposing each infix into two infixes amenable to the tree exchange procedure. Theorem~\ref{thm:infix-series-bound} follows by combining Theorem~\ref{thm:infix-bound} with sharpened generalizations of the original paper's series bounds.

The proofs involve delicate trigonometric inequalities and various interesting facts about the geometry of polynomials. Appendices~\ref{sec:appendix-cheb-background}, \ref{sec:appendix-lebedev}, and \ref{sec:appendix-proofs} build up to self-contained proofs.
\section{Extensions and variants}
\label{sec:extensions}

Next, we explore some theoretically justified variants.

\subsection{Useful transformations of the fractal schedule}

\paragraph{Reversing the schedule.} Notice that the first step $\eta_1$ is the largest step in the schedule. This might not be desirable when $\xi_t$ is proportional to $\norm{x-x^*}$ (like in linear regression with minibatch SGD noise). It is a simple consequence of the symmetries in the main theorems that reversing the fractal Chebyshev schedule produces a contractive variant:

\begin{proposition}
\label{prop:cheb-sched-reverse}
Suppose we run gradient descent with the reversed fractal Chebyshev schedule
$\sigma_{T}(T+1-t)$. Then:
\begin{enumerate}
    \item[(i)] For any $1 \leq t < t' \leq T$, we have
    \[ \overline{\norm{p_{1:t}}}_{[m,M]} \leq \overline{\norm{p_{1:t'}}}_{[m,M]} \leq 1, \]
    where $\overline{\norm{\cdot}}$ denotes the corresponding suffix norm bound from Theorem~\ref{thm:lebedev-main} (ii).
    \item[(ii)] The bounds from Theorem~\ref{thm:lebedev-main} are swapped: replace $(p_{1:t},p_{s:T}) \rightarrow (p_{T+1-t:T}, p_{1:T+1-s})$.
    \item[(iii)] Theorem~\ref{thm:infix-bound} holds, swapping $\Vcal \leftrightarrow \Vcal'$. Theorem~\ref{thm:infix-series-bound} holds.
\end{enumerate}
\end{proposition}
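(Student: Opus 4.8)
The plan is to reduce all three parts to one \emph{reversal identity} and then quote the theorems already proved. Write $\{\eta_t\}$ for the forward fractal Chebyshev schedule, $\tilde\eta_t := \eta_{T+1-t}$ for its reverse (so $\tilde\eta_t = 1/\gamma_{\sigma_T(T+1-t)}$), and $\tilde{p}_{s:t}(A) := \prod_{\tau=s}^t (I - \tilde\eta_\tau A)$. Because $A$ is fixed, all factors $(I-\eta_\tau A)$ commute, so reindexing the product gives
\[ \tilde{p}_{s:t}(A) = \prod_{u=T+1-t}^{T+1-s}(I - \eta_u A) = p_{\,T+1-t\,:\,T+1-s}(A), \]
i.e.\ a reversed infix is a forward infix read under the reflection $x \mapsto T+1-x$ on $\{1,\dots,T\}$; in particular $\tilde{p}_{1:t} = p_{T+1-t:T}$ is a forward suffix and $\tilde{p}_{s:T} = p_{1:T+1-s}$ a forward prefix. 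Equivalently, $\sigma_T^{\mathrm{rev}}(t) := \sigma_T(T+1-t)$ satisfies the \emph{mirror recursion} $\sigma_{2T}^{\mathrm{rev}} = \mathrm{interlace}(2T+1-\sigma_T^{\mathrm{rev}},\,\sigma_T^{\mathrm{rev}})$ --- the fractal recursion with its two interlaced halves swapped. This is the only new structural input; everything else is bookkeeping.

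Part (ii) is then immediate: substituting $s\mapsto T+1-t$ (resp.\ $s\mapsto T+1-s$) into the two bounds of Theorem~\ref{thm:lebedev-main} bounds $\|\tilde{p}_{1:t}\|_{[m,M]} = \|p_{T+1-t:T}\|_{[m,M]}$ by the old suffix bound and $\|\tilde{p}_{s:T}\|_{[m,M]} = \|p_{1:T+1-s}\|_{[m,M]}$ by the old prefix bound, which is exactly the replacement $(p_{1:t},p_{s:T}) \to (p_{T+1-t:T}, p_{1:T+1-s})$. For part (i), the reversal identity makes $\overline{\|p_{1:t}\|}_{[m,M]}$ literally the Theorem~\ref{thm:lebedev-main}(ii) bound of the length-$t$ forward suffix, namely $\prod_{j\in\bits(t)}\frac{2}{1+\Tcheb_{2^j}(\theta)}$. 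Each factor lies in $(0,1)$ since $\theta>1$ forces $\Tcheb_{2^j}(\theta)>1$, giving the bound $\le 1$; and since appending the contractive steps either appends a further factor in $(0,1)$ or passes to a single deeper Chebyshev node, the claimed monotonicity reduces to a short comparison of the $\Tcheb_{2^j}(\theta)$ (the same estimate behind the remark $\Vcal(t)\le\frac{2}{1+\Tcheb_{\lfloor t/2\rfloor}(\theta)}$).

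For part (iii), the infix bound of Theorem~\ref{thm:infix-bound} transfers by applying it to the mirrored infix $p_{T+1-t:T+1-s}$: the reflection carries the dyadic partition of $\{1,\dots,T\}$ ($T$ a power of two) to itself with left and right children swapped, hence carries the most-significant-bit split point $\zeta$ of $[s,t]$ to the split point of $[T+1-t,T+1-s]$, and the two sub-products change sides --- precisely the interchange $\Vcal \leftrightarrow \Vcal'$. For the infix \emph{series} bound, the reversal identity turns $\sum_{t'=s}^t \|\tilde{p}_{t':t}\|_{[m,M]}$ into $\sum_{b=T+1-t}^{T+1-s}\|p_{\,T+1-t\,:\,b}\|_{[m,M]}$, a sum of forward infixes with a common \emph{start}; this is exactly the common-endpoint sum of Theorem~\ref{thm:infix-series-bound} \emph{for the mirror schedule}, and since the proof of that theorem is a combinatorial induction on the fractal recursion which the mirror recursion supports essentially verbatim (swap the two halves throughout, hence $\Vcal\leftrightarrow\Vcal'$ in the per-infix step), the same $o(\kappahat^{1.73})$ bound holds.

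I expect the series bound in (iii) to be the only real obstacle. If the appendix proof of Theorem~\ref{thm:infix-series-bound} is phrased over the abstract recursion and the abstract decay facts ($\Vcal(t)\le e^{-\Omega(t)/\sqrt{\kappahat}}$ and $\Vcal(t)\le\Vcal'(t)\le\kappahat\Vcal(t)$), its mirror version is a one-line invocation; if it is hard-coded to the forward orientation, one must rerun it with left and right exchanged, carefully tracking the asymmetric $\min(\bits(\cdot))$ factor and the $\bits'$-versus-$\bits$ distinction under the reflection, and checking that the tree-exchange lemma used for Theorem~\ref{thm:lebedev-main} is itself left--right symmetric so that ``localizing bad skewness to one large step'' works identically on the mirrored tree. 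Parts (i), (ii), and the infix bound in (iii) should be short once the reversal identity is recorded.
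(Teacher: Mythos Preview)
Your reversal identity $\tilde p_{s:t} = p_{T+1-t:T+1-s}$ is exactly the right structural input, and the paper agrees that parts~(ii) and the infix bound in~(iii) follow from it by relabeling; the paper in fact writes only ``the other proofs follow immediately from the definitions'' and does not elaborate on the series bound in~(iii) at all --- so your discussion there is already more careful than the paper's.

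Where your emphasis is inverted relative to the paper is part~(i): the monotonicity is the \emph{only} thing the paper actually proves, and it is not as soft as your sketch suggests. The paper's argument is: passing from $t$ to $t+1$ in binary is a carry, so $\bits(t+1)\setminus\bits(t)=\{c\}$ and $\bits(t)\setminus\bits(t+1)=\{0,\dots,c-1\}$ for some $c\ge 0$; the monotonicity therefore reduces to the single inequality
\[
\frac{2}{1+\Tcheb_{2^c}(\theta)} \;\le\; \prod_{i=0}^{c-1}\frac{2}{1+\Tcheb_{2^i}(\theta)}.
\]
Using the half-angle identity $\frac{2}{1+\Tcheb_{2^i}(\theta)} = 1/\Tcheb_{2^{i-1}}^2(\theta)$ for $i\ge 1$, this becomes $\Tcheb_{2^{c-1}}^2(\theta) \ge \frac{1+\theta}{2}\prod_{i=0}^{c-2}\Tcheb_{2^i}^2(\theta)$, which follows by iterating $\Tcheb_{2n}(\theta) = 2\Tcheb_n^2(\theta)-1 \ge \Tcheb_n^2(\theta)$ together with the base case $\theta^2 \ge \frac{1+\theta}{2}$ for $\theta\ge 1$. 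Your pointer to ``the same estimate behind $\Vcal(t)\le\frac{2}{1+\Tcheb_{\lfloor t/2\rfloor}(\theta)}$'' does not supply this; that remark is a one-term lower bound on a product, whereas here one needs the full telescoping comparison across all levels below $c$. The ``appends a further factor in $(0,1)$'' case you mention is only the trivial case $c=0$ ($t$ even); the carry case is where the work is.
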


\paragraph{Concatenating schedules.} One can also repeat the fractal Chebyshev schedule indefinitely.\footnote{This is known as a cyclic iterative method, and was in fact the original motivation for \citep{lebedev1971order}.} Note that each infix polynomial of a repeated schedule can be written as a product of one prefix $p_{1:t}$, one suffix $p_{s:T}$, and a power of $p_{1:T}$, so stability bounds analogous to Theorems~\ref{thm:infix-bound} and \ref{thm:infix-series-bound} follow straightforwardly. It is also possible to concatenate schedules with different lengths $T$. Choosing $T$ to be successive powers of 2, one obtains an infinitely long schedule suitable for unknown time horizons.

\subsection{Conservative overstepping and partial acceleration}
\label{subsec:underoverstepping}

In this section, we decouple the eigenvalue range $[\lambda_{\min}, \lambda_{\max}]$ from the Chebyshev node range $[m,M]$ used in constructing the schedule. This can simply arise from an incorrect estimation of the eigenvalue range. However, more interestingly, if we think of $[m, M]$ as purposefully omitting the lower spectrum of $A$ (and thus taking smaller large steps), this allows us to interpolate between the fractal Chebyshev schedule and the vanilla constant learning rate.

\paragraph{Easy cases.} If $m < \lambda_{\min}$ or $M > \lambda_{\max}$, then $[m,M]$ is still an interval containing the spectrum of $A$; it is simply the case that convergence rates and stability bounds will depend on a worse $\kappahat > \kappa$. On the other hand, if $M < \lambda_{\max}$, the residual blows up exponentially.

The subtle case is when $m > \lambda_{\min}$, when we are overstepping with restraint, trading off acceleration for stability via more conservative step sizes. This requires us to reason about $\norm{p}_{[\lambda_{\min}, M]}$ when $p$ was constructed to shrink $\norm{p}_{[m, M]}$. Analyzing this case, we get \emph{partial} acceleration:

\begin{theorem}
\label{thm:underoverstepping}
Given a quadratic objective with matrix $A$ and $0 < \lambda_{\min} \leq m \leq \lambda_{\max} \leq M$, gradient descent with the Chebyshev step sizes results in the following convergence guarantee:
\[
    \|x_{\mathrm{out}} - x^*\| \leq 2\left(1 - \phi^{-1}(\lambda_{\min},m,M) \right)^T \cdot \|x_1 - x^*\|,
\]
with
\ifdefined\arxiv
\[
\phi^{-1}(\lambda_{\min},m,M) :=
    2 \cdot \frac{ \lambda_{\min} + \sqrt{Mm} - \sqrt{(M-\lambda_{\min})(m-\lambda_{\min})}}{ (\sqrt{M} + \sqrt{m})^2 }.
\]
\else
\begin{multline*}
\phi^{-1}(\lambda_{\min},m,M) \\ :=
    2 \cdot \frac{ \lambda_{\min} + \sqrt{Mm} - \sqrt{(M-\lambda_{\min})(m-\lambda_{\min})}}{ (\sqrt{M} + \sqrt{m})^2 }.
\end{multline*}
\fi

\end{theorem}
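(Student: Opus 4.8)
The plan is to bound $\norm{p}_{[\lambda_{\min},M]}$ where $p(\lambda) = \Tcheb_T(z)/\Tcheb_T(\theta)$ is the Chebyshev residual polynomial built for the interval $[m,M]$ (with $z = \frac{M+m-2\lambda}{M-m}$, $\theta = \frac{M+m}{M-m}$), but now evaluated over the \emph{larger} interval $[\lambda_{\min},M] \supseteq [m,M]$. Since on $[m,M]$ we have $|z|\le 1$ and hence $|\Tcheb_T(z)|\le 1$, the only new contribution comes from $\lambda \in [\lambda_{\min},m)$, where $z \in (1, z_{\max}]$ with $z_{\max} = z(\lambda_{\min}) = \frac{M+m-2\lambda_{\min}}{M-m} > 1$. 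On this range $\Tcheb_T$ is increasing, so $\norm{p}_{[\lambda_{\min},M]} = \max\{1/\Tcheb_T(\theta),\ \Tcheb_T(z_{\max})/\Tcheb_T(\theta)\} = \Tcheb_T(z_{\max})/\Tcheb_T(\theta)$ once $z_{\max} \ge \theta$ (which holds iff $\lambda_{\min}\le m$, i.e. exactly our regime, with equality when $\lambda_{\min}=m$). So the whole theorem reduces to estimating the ratio $\Tcheb_T(z_{\max})/\Tcheb_T(\theta)$.

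Next I would use the standard hyperbolic representation $\Tcheb_T(x) = \cosh(T\arccosh x)$ for $x \ge 1$, together with the elementary bound $\Tcheb_T(x) \ge \half\pa{x+\sqrt{x^2-1}}^T$ and the exact identity $\Tcheb_T(x) = \half\big[(x+\sqrt{x^2-1})^T + (x+\sqrt{x^2-1})^{-T}\big]$. Writing $u := z_{\max} + \sqrt{z_{\max}^2-1}$ and $v := \theta + \sqrt{\theta^2-1}$, we get
\[
\frac{\Tcheb_T(z_{\max})}{\Tcheb_T(\theta)} \le \frac{u^T + u^{-T}}{v^T} \le 2\,(u/v)^T,
\]
using $u \ge v \ge 1$. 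The task is then to show $u/v \le 1 - \phi^{-1}(\lambda_{\min},m,M)$ with the claimed explicit $\phi^{-1}$, i.e. to compute $v/u$ and simplify. Here $v = \theta+\sqrt{\theta^2-1} = \frac{M+m}{M-m} + \frac{2\sqrt{Mm}}{M-m} = \frac{(\sqrt M+\sqrt m)^2}{M-m} = \frac{\sqrt M+\sqrt m}{\sqrt M-\sqrt m}$ is the classical factor from Theorem~\ref{thm:cheb-convergence-rate}. For $u$ I would simplify $z_{\max}^2 - 1 = \frac{(M+m-2\lambda_{\min})^2-(M-m)^2}{(M-m)^2} = \frac{4(M-\lambda_{\min})(m-\lambda_{\min})}{(M-m)^2}$, so that $u = \frac{M+m-2\lambda_{\min} + 2\sqrt{(M-\lambda_{\min})(m-\lambda_{\min})}}{M-m}$. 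Then
\[
\frac{v}{u} = \frac{(\sqrt M+\sqrt m)^2}{M+m-2\lambda_{\min}+2\sqrt{(M-\lambda_{\min})(m-\lambda_{\min})}},
\]
and $1 - v/u$ equals, after combining over the common denominator and noting $M+m+2\sqrt{Mm} - (M+m-2\lambda_{\min}) = 2\lambda_{\min} + 2\sqrt{Mm}$,
\[
1 - \frac{v}{u} = \frac{2\lambda_{\min} + 2\sqrt{Mm} - 2\sqrt{(M-\lambda_{\min})(m-\lambda_{\min})}}{M+m-2\lambda_{\min}+2\sqrt{(M-\lambda_{\min})(m-\lambda_{\min})}}.
\]
I would then replace the denominator by the larger quantity $(\sqrt M + \sqrt m)^2$ (valid since $u \ge v$, i.e. $M+m-2\lambda_{\min}+2\sqrt{(M-\lambda_{\min})(m-\lambda_{\min})} \le (\sqrt M+\sqrt m)^2$, which is just $z_{\max}\ge\theta$ again), giving $1 - v/u \le \phi^{-1}$; since $u/v \le 1/(1 + (1-v/u)) \le 1 - (1-v/u) \le 1 - \phi^{-1}$... more carefully, $u/v = 1/(v/u)$ and $v/u = 1-(1-v/u) \ge 1-\phi^{-1}$, so $u/v \le (1-\phi^{-1})^{-1}$, which is the wrong direction; instead I would argue directly $u/v \le 1 - \phi^{-1}$ by showing $\phi^{-1} \le 1 - v/u$ gives $v/u \le 1-\phi^{-1}$ and then bounding $u/v$ via $u/v \le 2 - v/u$ for $v/u \in [\thalf,1]$, or more cleanly keep the bound in the form $\Tcheb_T(z_{\max})/\Tcheb_T(\theta) \le 2(v/u)^{-T}$ and re-derive the statement with $\rho = v/u$; in the writeup I would match the paper's chosen normalization by checking the constant $2$ and the placement of $\phi^{-1}$ against the $\lambda_{\min}=m$ boundary case, where it must reduce to Theorem~\ref{thm:cheb-convergence-rate}'s $\rho = \frac{\sqrt M-\sqrt m}{\sqrt M+\sqrt m}$.

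Finally, to convert the polynomial bound into the iterate bound I would invoke the already-established inequality $\norm{x_{\out}-x^*} \le \norm{p}_{[\lambda_{\min},\lambda_{\max}]}\cdot\norm{x_1-x^*} \le \norm{p}_{[\lambda_{\min},M]}\cdot\norm{x_1-x^*}$ (using $[\lambda_{\min},\lambda_{\max}]\subseteq[\lambda_{\min},M]$), and plug in the bound above. \textbf{The main obstacle} I anticipate is purely algebraic bookkeeping: getting the exact form of $\phi^{-1}$ with the right constant in front (the factor $2$ in the denominator $(\sqrt M+\sqrt m)^2$ versus the factor $2$ multiplying the whole iterate bound) and correctly tracking the direction of the inequality when passing between $u/v$, $v/u$, and $\rho^T$ — there is genuine risk of an off-by-constant or inverted-ratio error, so I would pin everything down by forcing consistency with the two known limits ($\lambda_{\min}=m$ recovers full acceleration; $m=M$ or $\lambda_{\min}\to 0$ behavior should degrade gracefully). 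A secondary, minor point is justifying that the max of $|p|$ over $[\lambda_{\min},M]$ is attained at the left endpoint $\lambda_{\min}$ and not somewhere in the oscillatory region $[m,M]$ — this follows because $|\Tcheb_T|\le 1$ there while $\Tcheb_T(z_{\max}) \ge \Tcheb_T(\theta) \ge 1$, so no separate argument about interior extrema is needed.
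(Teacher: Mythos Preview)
Your overall strategy is exactly the paper's: reduce to bounding $\Tcheb_T(z_{\max})/\Tcheb_T(\theta)$, use the $\cosh$ representation to get $\le 2(u/v)^T$ with $u=z_{\max}+\sqrt{z_{\max}^2-1}$ and $v=\theta+\sqrt{\theta^2-1}$, then simplify algebraically. The structure is right; what trips you up is a single direction error that propagates into all the later bookkeeping.

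The error is the claim ``$z_{\max}\ge\theta$ (which holds iff $\lambda_{\min}\le m$)''. The map $\lambda\mapsto z=\frac{M+m-2\lambda}{M-m}$ is decreasing, with $z(0)=\theta$ and $z(m)=1$, so for $0<\lambda_{\min}\le m$ you get $1\le z_{\max}\le\theta$, hence $u\le v$ and $u/v\le 1$. This also means your justification that the max of $|p|$ is at $\lambda_{\min}$ should read $\Tcheb_T(z_{\max})\ge \Tcheb_T(1)=1\ge |\Tcheb_T(z)|$ on $|z|\le 1$, not $\Tcheb_T(z_{\max})\ge\Tcheb_T(\theta)$.

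Once you have $u\le v$, the entire detour through $1-v/u$, the attempted denominator swap, and the $u/v\le 2-v/u$ trick are unnecessary. The paper simply writes
\[
\frac{u}{v}=1-\frac{v-u}{v}
=1-\frac{(M+m+2\sqrt{Mm})-(M+m-2\lambda_{\min}+2\sqrt{(M-\lambda_{\min})(m-\lambda_{\min})})}{(\sqrt M+\sqrt m)^2},
\]
and the numerator collapses to $2\lambda_{\min}+2\sqrt{Mm}-2\sqrt{(M-\lambda_{\min})(m-\lambda_{\min})}$, giving $u/v=1-\phi^{-1}$ \emph{exactly}, not as a bound. Plugging into $2(u/v)^T$ yields the theorem with no further inequalities, and the $\lambda_{\min}=m$ check you mention (recovering $\rho=\frac{\sqrt M-\sqrt m}{\sqrt M+\sqrt m}$) goes through immediately.
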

This is an interpolation between the standard and accelerated convergence rates of $O(\kappa \log(1/\eps))$ and $O(\sqrt{\kappa} \log(1/\eps))$. Figure~\ref{fig:understepping} shows the shape of $\phi$ for $m \in [\lambda_{\min},M]$, as it ranges from $\sim\sqrt{\kappa} \rightarrow \kappa$.

\begin{figure}
    \centering
\ifdefined\arxiv 
    \includegraphics[width=0.8\linewidth]{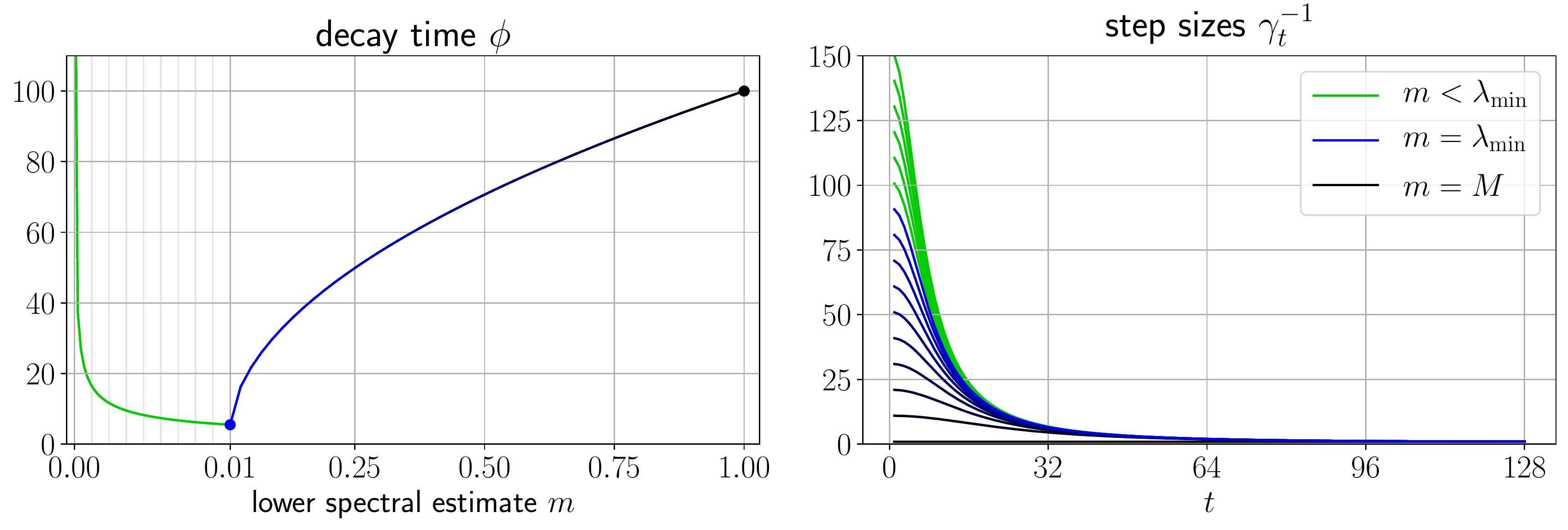}
\else
    \includegraphics[width=\linewidth]{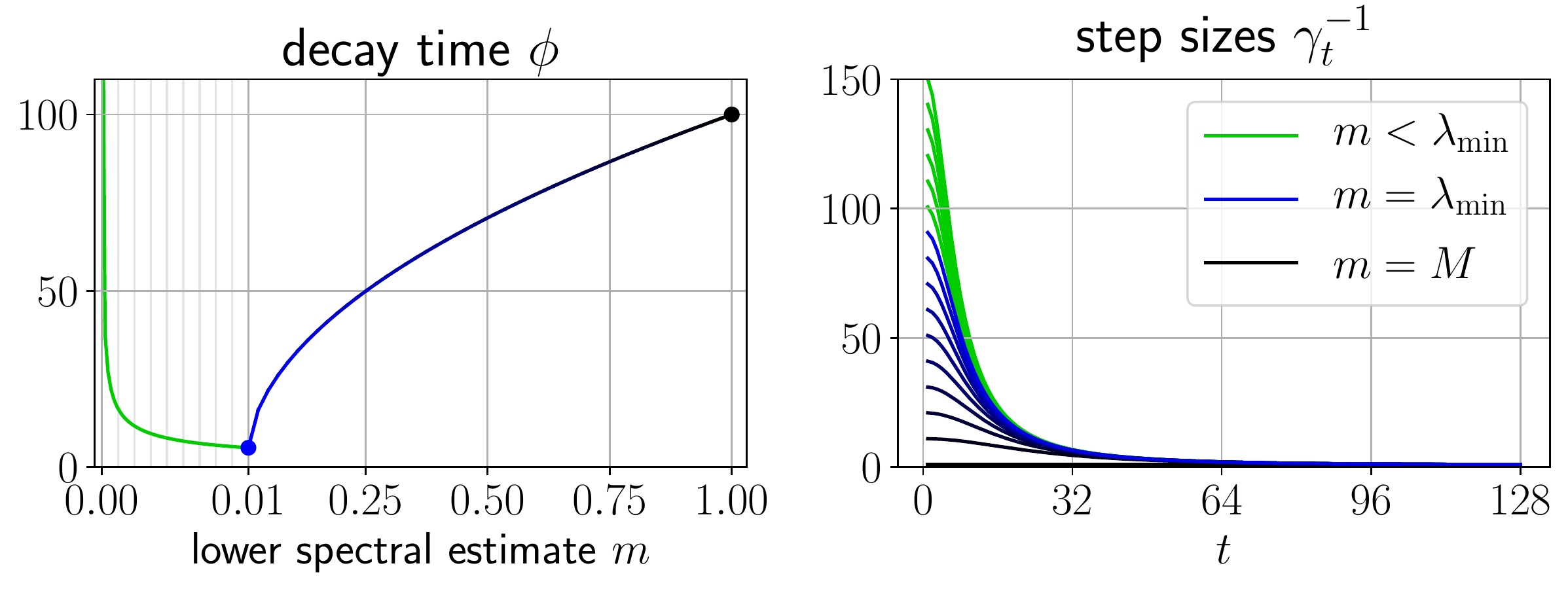}
    \vspace{-5mm}
\fi
    
    \caption{Summary of the discussion in Section~\ref{subsec:underoverstepping}. Suboptimal decay times $\phi(\lambda_{\min}=0.01,m,M=1)$ interpolate between the standard and accelerated rates. Green curves correspond to settings of $m < \lambda_{\min}$ where Theorem~\ref{thm:cheb-convergence-rate} applies; notice the distorted horizontal scale.}
    \label{fig:understepping}
\end{figure}

\subsection{Existence of clairvoyant non-adaptive schedules}
\label{subsec:clairvoyant}

Finally, we present one more view on the provable power of tuning (i.e. searching globally for) a learning rate schedule on a fixed problem instance. An ambitious benchmark is the conjugate gradient method \cite{hestenes1952methods}, which is optimal for \emph{every} (rather than the worst-case) choice of $A, b$. That is, at iteration $t$, it outputs
\[x_{t+1} := \argmin_{\substack{\deg p \leq t \\ p(0) = 1}} \norm{ p(A) (x_1 - x^*) }_A ,\]
where $\norm{x}_A := \sqrt{x^\top A x}$. This can be much stronger than the guarantee from Theorem~\ref{thm:cheb-convergence-rate} (e.g. when the eigenvalues of $A$ are clustered). In Appendix~\ref{subsec:appendix-cg}, we prove that there are non-adaptive (but instance-dependent) learning rate schedules that compete with conjugate gradient:
\begin{theorem}[Conjugate gradient schedule; informal]
\label{thm:cg-sched}
For every problem instance $(A, b)$, there is a learning rate schedule $\{\eta_t\}$ for gradient descent, with each $\eta_t \in [\frac{1}{\lambda_{\max}}, \frac{1}{\lambda_{\min}}]$, such that $x_{\out}$ is the output of conjugate gradient.
\end{theorem}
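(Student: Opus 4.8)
The plan is to exploit the fact that the conjugate gradient iterates are themselves generated by residual polynomials: at step $t$, CG produces $x_{t+1}^{\mathrm{cg}} - x^* = q_t(A)(x_1 - x^*)$ for some degree-$t$ polynomial $q_t$ with $q_t(0) = 1$, namely the minimizer of $\norm{q(A)(x_1-x^*)}_A$ over all such polynomials. Since the gradient-descent residual after $t$ steps with learning rates $\eta_1, \ldots, \eta_t$ is exactly $\prod_{s=1}^t (I - \eta_s A)(x_1 - x^*) = p_t(A)(x_1-x^*)$, it suffices to show that each $q_t$ factors as $q_t(\lambda) = \prod_{s=1}^t (1 - \eta_s \lambda)$ with all reciprocals $1/\eta_s$ lying in $[\lambda_{\min}, \lambda_{\max}]$; then reading off $\eta_s = 1/(\text{root of }q_t)$ and telescoping across $t$ (each $q_{t}$ extends $q_{t-1}$ by one more root in the Lanczos/CG recurrence) yields the desired schedule. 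So the entire statement reduces to a single claim: \emph{every root of the degree-$t$ CG residual polynomial $q_t$ is real and lies in $[\lambda_{\min}, \lambda_{\max}]$.}

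To prove that claim, I would invoke the standard connection between CG and orthogonal polynomials. Restricting attention to the active subspace spanned by the eigenvectors of $A$ in which $x_1 - x^*$ has nonzero components, write $x_1 - x^* = \sum_i c_i v_i$ with $A v_i = \mu_i v_i$ and $c_i \neq 0$; define the discrete measure $d\nu(\lambda) = \sum_i c_i^2 \mu_i \, \delta_{\mu_i}$ on $[\lambda_{\min}, \lambda_{\max}]$ (the weight $\mu_i$ coming from the $\norm{\cdot}_A$ norm). Then $\norm{q(A)(x_1-x^*)}_A^2 = \int q(\lambda)^2 \, d\nu(\lambda)$, so $q_t$ is precisely the degree-$t$ polynomial, normalized to $q_t(0) = 1$, minimizing $\int q^2 \, d\nu$; equivalently $q_t$ is proportional to the $t$-th monic orthogonal polynomial with respect to $d\nu$ (assuming $d\nu$ has at least $t+1$ mass points, so CG has not yet terminated — I would handle early termination as a trivial boundary case). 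It is a classical fact that the zeros of orthogonal polynomials with respect to a measure supported on an interval $[\lambda_{\min}, \lambda_{\max}]$ are simple, real, and lie strictly inside the convex hull of the support, hence in $[\lambda_{\min}, \lambda_{\max}]$. This gives exactly the root-location bound needed, and in fact the stronger statement $\eta_t \in (\frac{1}{\lambda_{\max}}, \frac{1}{\lambda_{\min}})$.

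The one genuine subtlety — which I expect to be the main obstacle — is consistency across iterations: the schedule must be a \emph{single} sequence $(\eta_1, \eta_2, \ldots)$ such that truncating it to the first $t$ terms reproduces $q_t$ for every $t$ simultaneously, not just one target $t$. This works because in the Lanczos interpretation the orthogonal polynomials form a nested family: $q_t$ is obtained from $q_{t-1}$ by the three-term recurrence, and its roots interlace those of $q_{t-1}$ but, crucially, the roots of $q_t$ are \emph{not} generally a superset of those of $q_{t-1}$. So I cannot literally take $\eta_t = 1/(\text{new root of } q_t)$. The fix is that we only need to match the \emph{final} output $x_{\out}$ at a single chosen horizon $T$ (re-reading the theorem statement: "$x_{\out}$ is the output of conjugate gradient" — one horizon), so I simply factor $q_T(\lambda) = \prod_{s=1}^T (1 - \lambda/r_s)$ over its $T$ roots $r_s \in [\lambda_{\min}, \lambda_{\max}]$ in any order and set $\eta_s = 1/r_s$; by the permutation-invariance remark in Section~\ref{subsec:prelims-quadratic-opt}, the order is irrelevant for the final iterate. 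If instead one wants the intermediate iterates to match CG as well, that is genuinely impossible in general (CG residuals are not nested), and I would add a remark to that effect. I would present the clean single-horizon version as the formal content of Theorem~\ref{thm:cg-sched}, with the measure-theoretic zero-localization lemma as the technical core and the CG-equals-orthogonal-polynomials identity cited from a standard reference such as \cite{hestenes1952methods} or a Krylov-methods text.
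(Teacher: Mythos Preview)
Your approach via the orthogonal-polynomial characterization of CG is correct and genuinely different from the paper's. The paper never invokes orthogonal polynomials; instead it starts from an arbitrary minimizer $p^*$ of $\Fcal(p)=\norm{p(A)(x_1-x^*)}_A^2$ (allowing complex roots a priori) and performs an elementary root-moving argument: complex conjugate pairs $a\pm bi$ contribute a factor $(\lambda-a)^2+b^2$, and deleting $b$ only shrinks $|p|$ pointwise; likewise any real root outside $[\lambda_{\min},\lambda_{\max}]$ can be pushed to the nearest endpoint without increasing $|p|$ on the interval. Monotonicity of $\Fcal$ in the pointwise magnitude of $p$ on $\mathrm{spec}(A)$ then shows the modified polynomial is still a minimizer, and one reads off the schedule from its (now well-placed) roots. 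Your route is shorter and more conceptual once one is willing to cite the classical zero-localization theorem, and it yields the strict containment $\eta_t\in(1/\lambda_{\max},1/\lambda_{\min})$ for free; the paper's argument is fully self-contained and avoids identifying any measure. One small slip in your write-up: the first-order conditions for minimizing $\int q^2\,d\nu$ subject to $q(0)=1$ say that $q_t$ is orthogonal to $\lambda,\lambda^2,\ldots,\lambda^t$ under $d\nu$, i.e.\ $q_t$ is the degree-$t$ orthogonal polynomial with respect to $\lambda\,d\nu$ (equivalently the measure built from $r_0=b-Ax_1$), not $d\nu$ itself. This is harmless for the conclusion since both measures are supported on the spectrum. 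Your resolution of the nested-roots issue---fixing a single horizon $T$ and invoking permutation invariance---matches exactly how the paper states and proves the formal version.
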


\section{Beyond convex quadratics}

\subsection{General convex objectives: a counterexample}
\label{subsec:logcosh}
A mysterious fact about acceleration is that some algorithms and analyses transfer from the quadratic case to general convex functions, while others do not. \cite{lessard2016analysis} exhibit a smooth and strongly convex non-quadratic $f$ for which Polyak's momentum gets stuck in a limit cycle.

For us, $f(x) = \log \cosh (x) + 0.01 x^2$ serves as a one-dimensional ``proof by simulation'' that gradient descent with the fractal Chebyshev schedule can fail to converge. This is shown in Appendix~\ref{subsec:appendix-logcosh}; note that this is a tiny instance of ridge logistic regression.

\subsection{Non-convex objectives: a no-go}
None of this theory carries over to worst-case non-convex $f$: the analogue of Theorem~\ref{thm:cg-sched} is vacuously strong. We point out that global optimization of the learning rate schedule is information-theoretically intractable.

\begin{proposition}[Non-convex combination lock; informal]
\label{prop:nonconvex-combination-lock}
For every ``passcode'' $\{\eta_1^*, \ldots, \eta_T^*\}$ and $\delta > 0$, there is a smooth non-convex optimization problem instance $(f(\cdot), x_1)$ for which the final iterate $x_{\out}$ of gradient descent is an $1$-approximate global minimum only if
\[ |\eta_t - \eta^*_t| \leq \delta, \quad \forall t = 1, \ldots, T. \]
\end{proposition}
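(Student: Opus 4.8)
The plan is to hard-code the target passcode $(\eta_1^*,\dots,\eta_T^*)$ into a one-dimensional "combination lock" function, so that gradient descent can only make progress at step $t$ if $\eta_t$ is within $\delta$ of $\eta_t^*$; any wrong guess at any stage permanently traps the iterate in a flat region far from the global minimum. First I would track the exact iterate trajectory of \emph{correctly-tuned} gradient descent: pick a designated sequence of "checkpoints" $z_0 = x_1, z_1, z_2, \ldots, z_T$ in $\R$, where $z_T$ is the global minimum and each consecutive pair is separated by a large gap (say distance $\gg 1$). Since gradient descent with step $\eta$ sends $x \mapsto x - \eta f'(x)$, I can \emph{define} $f$ piecewise so that near $z_{t-1}$ the derivative $f'(z_{t-1})$ equals exactly $(z_{t-1} - z_t)/\eta_t^*$; then the correct step $\eta_t^*$ lands precisely on $z_t$. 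This is the standard "combination lock" / "resisting oracle" construction adapted to a fixed step-size schedule rather than a fixed query.

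The core of the argument is the local geometry around each checkpoint $z_{t-1}$, which I would engineer as follows. In a small neighborhood of $z_{t-1}$, let $f'$ be (approximately) linear with a carefully chosen slope, so that the gradient-descent map $x \mapsto x - \eta_t f'(x)$ with the correct $\eta_t = \eta_t^*$ is a sharp contraction toward $z_t$ over the allowed window $|\eta_t - \eta_t^*| \le \delta$, while \emph{outside} this neighborhood (and in particular at every other checkpoint, and in the "valleys" between checkpoints) $f$ is essentially flat — $f' \approx 0$ and $f$ takes a value bounded away from $\min f$ by more than $1$. Thus if at some step $t$ the learner uses $\eta_t$ with $|\eta_t - \eta_t^*| > \delta$, the iterate either overshoots or undershoots into a flat plateau, never reaches $z_t$, and (because all subsequent regions are flat, or because later checkpoints are unreachable from the plateau regardless of the remaining $\eta_{t'}$) the final iterate $x_{\out}$ stays $>1$-suboptimal. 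Conversely, if every $\eta_t$ is within $\delta$, the trajectory passes through $z_1, \dots, z_T$ and $x_{\out} = z_T$ is the exact minimizer, hence certainly $1$-optimal. One has to be a little careful that a \emph{wrong} small $\eta_t$ at step $t$ cannot accidentally be "rescued" by a clever choice of the later step sizes; this is handled by making the checkpoints mutually far apart and the non-checkpoint regions genuinely flat, so that once the iterate leaves the thin "ascending corridor" it can never climb back onto the intended path — gradient descent on a flat region barely moves.

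The main obstacle is smoothness: the naive construction has piecewise-linear $f'$ (piecewise-quadratic $f$), which is only $C^1$ with kinks, and the plateaus meeting the corridors create curvature discontinuities. I would resolve this by mollification — convolving $f'$ with a narrow smooth bump, or equivalently building $f$ from smooth transition functions (e.g. bump-function partitions of unity) — choosing the mollification width small relative to $\delta$ and to the checkpoint spacing so that (a) $f$ becomes $C^\infty$ with bounded derivatives, i.e. smooth in the sense required, and (b) the trajectory analysis above is perturbed by an amount $\ll \delta$ and $\ll 1$, preserving both the "correct passcode $\Rightarrow$ $x_{\out}$ is $1$-optimal" and the "wrong passcode $\Rightarrow$ $x_{\out}$ is $>1$-suboptimal" conclusions. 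Since the proposition is stated informally and asks only for existence of such an instance (with no claim on the smoothness constant, which may depend on $\delta$ and $T$), this mollification step costs nothing essential. I would also remark that the construction shows the futility of \emph{global} learning-rate-schedule search in the worst non-convex case: the "volume" of good schedules is $(2\delta)^T$, exponentially small, matching the intuition flagged in the text that Theorem~\ref{thm:cg-sched}'s analogue is vacuous here.
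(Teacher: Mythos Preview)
Your general architecture—hard-code the passcode into a piecewise ``lock'' function, then mollify with a bump to get smoothness—matches the paper's proof exactly in spirit. The gap is in the choice of ambient dimension. In one dimension, from a checkpoint $z_{t-1}$ the set of points reachable by a single gradient step is the half-line $\{\,z_{t-1} - \eta f'(z_{t-1}) : \eta > 0\,\}$. Since the learning rate is unbounded above, this half-line contains arbitrarily distant points, and in particular it must contain the global minimizer $z_T$ (or some later checkpoint) for at least one stage $t$, no matter how you place the $z_t$'s. An adversary can then take $\eta_1$ large enough to land directly at (or within a $\delta$-window of) $z_T$; since $f'(z_T)=0$ at a smooth minimum, the remaining steps do nothing and $x_{\out}$ is the exact minimizer even though $|\eta_1 - \eta_1^*|$ was enormous. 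Your ``mutually far apart'' and ``flat plateaus'' remedies handle \emph{undershooting} but not this kind of overshooting, so the ``only if'' direction fails.

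The paper avoids this by working in $\R^T$ rather than $\R$: it defines $g:\R^T\to\R$ recursively so that $g(z_1,\dots,z_T)=1-z_1$ when $z_1 \in [-\delta/2,\,\eta_1^*-\delta/2)$, $g=0$ when $z_1 > \eta_1^*+\delta/2$, and $g$ recurses on $(z_2,\dots,z_T)$ only when $z_1$ lies in the narrow window $[\eta_1^*-\delta/2,\,\eta_1^*+\delta/2]$; then $f = g * \psi$ for a bump $\psi$ of radius $\delta/2$. Starting at the origin, the gradient at stage $t$ points along $e_t$, so no choice of $\eta_t$ can advance a later coordinate—orthogonality, not spatial separation, enforces the lock. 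With exactly $T$ steps and $T$ coordinates to unlock, any step with $|\eta_t-\eta_t^*|\ge\delta$ either strands the iterate on a plateau ($g=0$) or forces it to spend an extra step on coordinate $t$, so it can never finish. Your mollification paragraph is fine and is precisely what the paper does.
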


A formal statement and proof are given in Appendix~\ref{subsec:appendix-nonconvex}.

\subsection{More heuristic building blocks}
\label{subsec:heuristics}

With Polyak momentum as the most illustrious example, an optimizer can be very useful beyond its original theoretical scope. We present some more ideas for heuristic variants (unlike the theoretically justified ones from Section~\ref{sec:extensions}):

\paragraph{Cheap surrogates for the fractal schedule.} The worst-case guarantees for Chebyshev methods depend sensitively on the choice of nodes. However, beyond worst-case objectives, it might suffice to replace $\{\gamma_t^{-1}\}$ with any similarly-shaped distribution (like the triangular one considered by \cite{smith2017cyclical}), and $\sigma$ with any sequence that sufficiently disperses the large steps. We show in Appendix~\ref{subsec:appendix-vanilla-spiky} that acceleration cannot arise from the simple cyclic schedule from \cite{oymak2021super}. An intriguing question is whether adaptive gradient methods or the randomness of SGD implicitly causes partial acceleration, alongside other proposed ``side effect'' mechanisms \cite{keskar2016large,jin2017escape,staib2019escaping}.

\paragraph{Inserting slow steps.}
We can insert any number of steps $\eta \in [0, \frac{2}{M}]$ at any point in a schedule without worsening stability or convergence, because $\norm{(1-\eta \lambda)}_{[m,M]} \leq 1$. That is, $\norm{p_{s':t'}}$ in the supersequence is bounded by the corresponding $\norm{p_{s:t}}$ in the original schedule, and Theorems~\ref{thm:infix-bound} and \ref{thm:infix-series-bound} apply.
A special case of this is \emph{warmup} or \emph{burn-in}: take any number of small steps at the beginning.

Another option is to insert the small steps cyclically: notice from Propositions~\ref{prop:cheb-basics} (ii) and \ref{prop:cheb-fractal-basis} (i) that the steps $\{\eta_t\}$ come in ``fast-slow'' pairs: an odd step overshoots, and an even step corrects it. This suggests further heuristics, like the following ``Chebyshevian waltz'': in minibatch SGD, run triplets of iterations with step sizes $(\eta_{2t-1}, \eta_{2t}, \frac{1}{M})$.\footnote{In non-GPU-bound regimes \citep{choi2019faster,agarwal2020disentangling} and deep RL, one can sometimes take these steps for free, without causing a time bottleneck.} In theory, this degrades the worst-case convergence rate by a constant factor, but improves stability by a constant factor.

\section{Experiments}

\subsection{Convex problems and non-local progress}
In spite of the simple negative result in Section~\ref{subsec:logcosh}, we find that the fractal Chebyshev schedule can exhibit accelerated convergence beyond quadratic objectives. Figure~\ref{fig:convex-preview} shows training curves for logistic regression for MNIST classification; details are in Appendix~\ref{subsec:appendix-convex-experiments}. We leave a theoretical characterization of the schedule's acceleration properties on general convex functions to future work; this may require further assumptions on ``natural'' problem instances beyond minimax bounds.

\begin{figure}
    \centering
\ifdefined\arxiv 
    \includegraphics[width=0.8\linewidth]{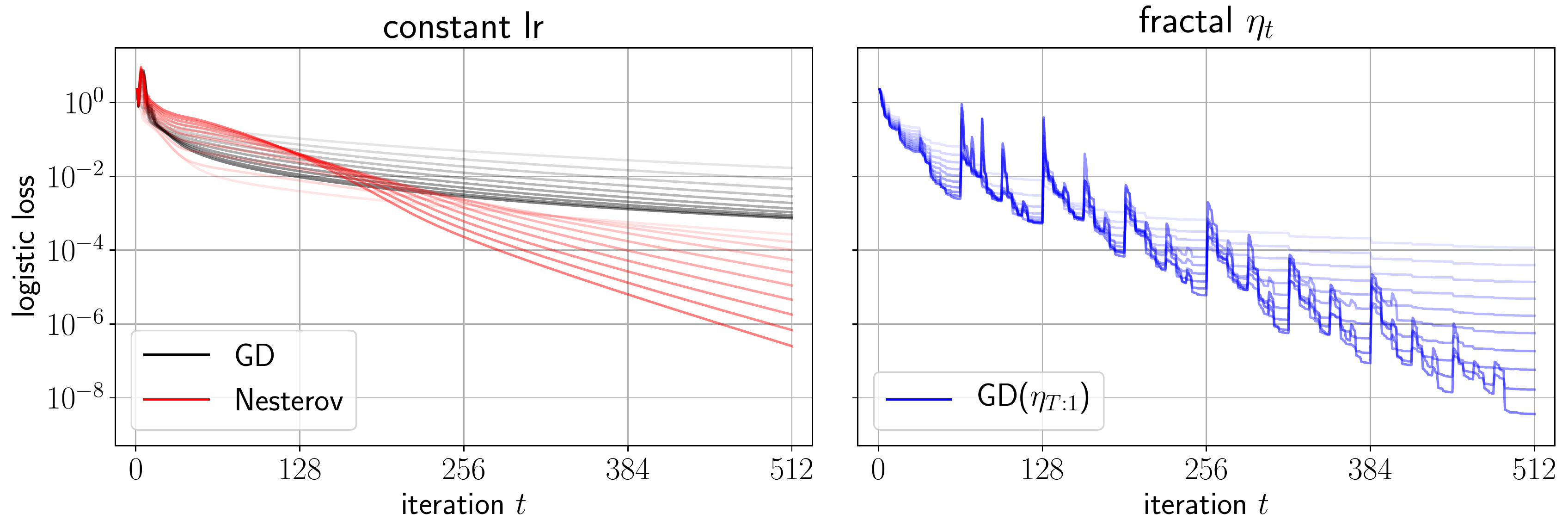}
\else
    \includegraphics[width=\linewidth]{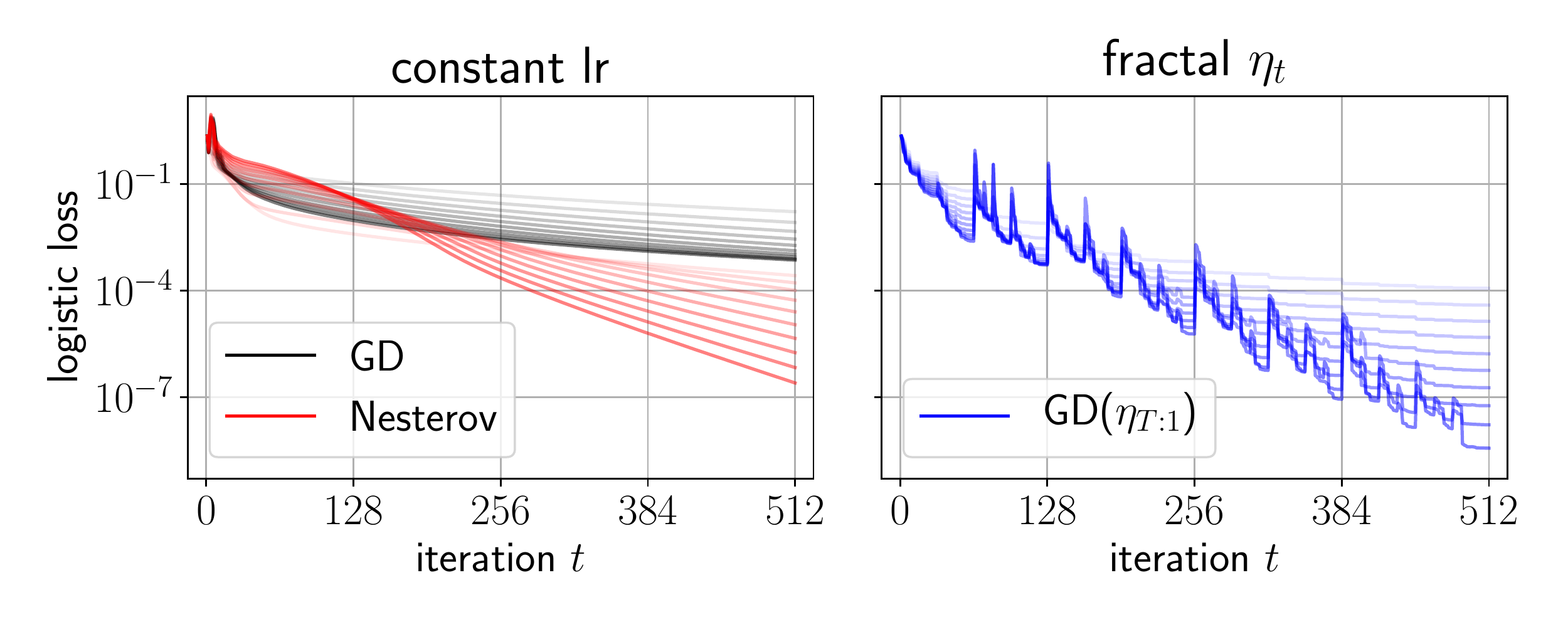}
    \vspace{-5mm}
\fi
    
    \caption{Logistic regression/MNIST training loss curves. \emph{Left:} Standard algorithms, with constant (more opaque = larger) learning rates. \emph{Right:} A fractal Chebyshev schedule.}
    \label{fig:convex-preview}
\end{figure}

\subsection{Beyond the edge of stability in deep learning}
\label{subsec:experiments-dl}

We provide a small set of deep learning experiments, finding that the fractal Chebyshev schedule can overstep the empirical ``edge of stability'' (i.e. the largest constant multiplier on the learning rate for which training does not diverge). Figure~\ref{fig:cifar-preview} gives an overview of these findings; details are in Appendix~\ref{subsec:appendix-deeplearning}.

\begin{figure}
    \centering
\ifdefined\arxiv 
    \includegraphics[width=0.8\linewidth]{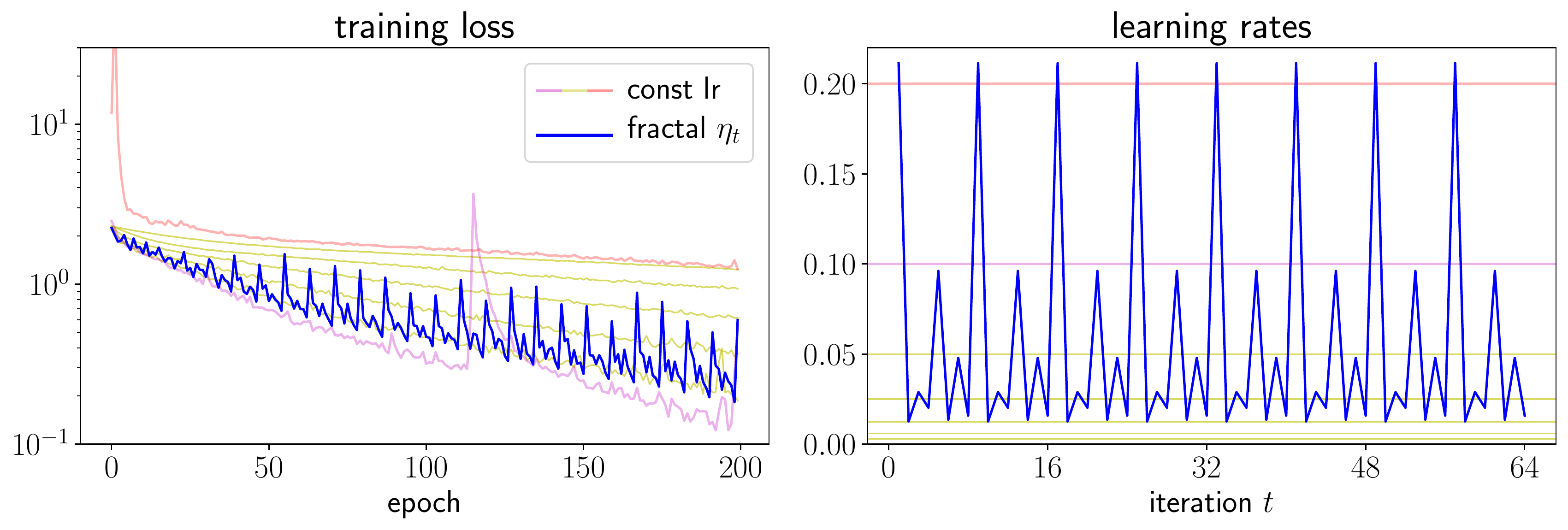}
\else
    \includegraphics[width=\linewidth]{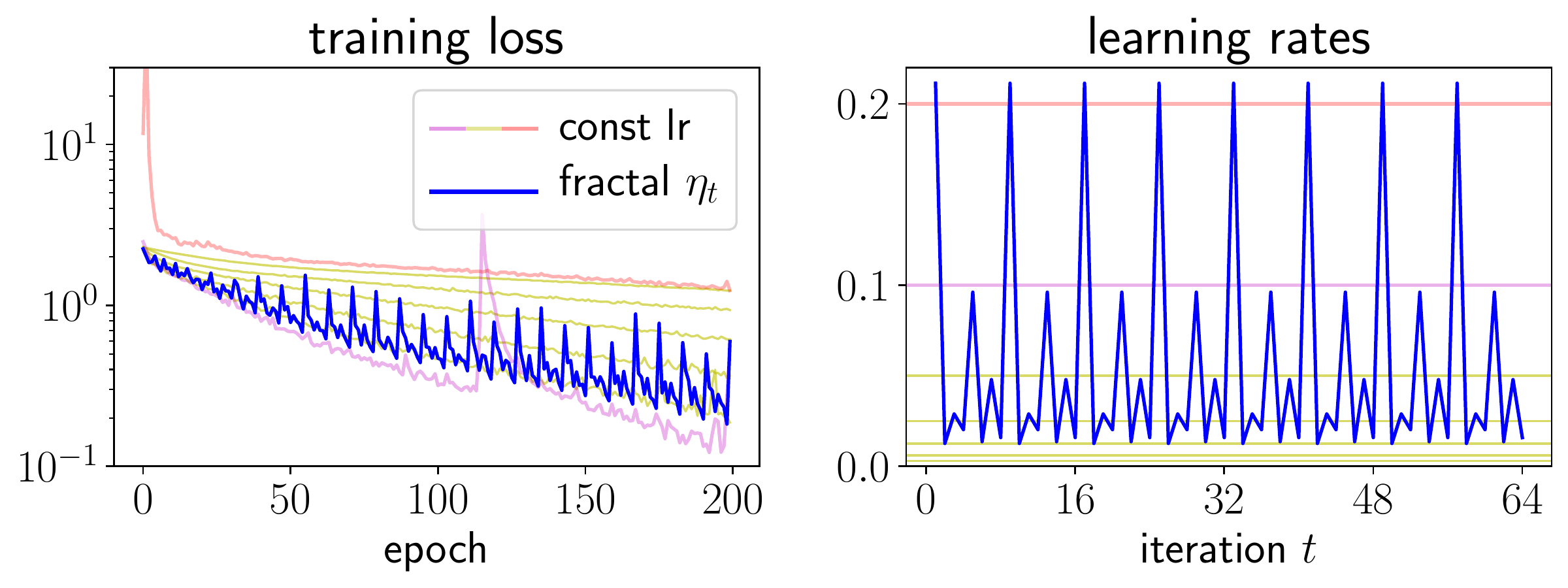}
    \vspace{-5mm}
\fi
    
    \caption{ResNet-18/CIFAR-10 training with batch size $8192$ and a repeated $T=8$ fractal Chebyshev schedule. \emph{Left:} Training loss curves. \emph{Right:} Learning rates; the schedule pokes through the edge of stability (magenta and red) without destabilizing training.}
    \label{fig:cifar-preview}
\end{figure}

Estimating the scale of $\lambda_{\max}(\nabla^2 f)$ is an old paradigm for selecting learning rates \cite{lecun1993automatic,schaul2013no}; there are many proposed mechanisms for the success of larger learning rates. Our theory (especially Theorem~\ref{thm:underoverstepping}) and experiments point to the possibility of \emph{time-varying} schedules to enable larger learning rates, on a much finer scale than cyclic restarts \cite{loshchilov2016sgdr,smith2017cyclical,fu2019cyclical}.
A nascent line of work also challenges the classical $\eta_t \sim 1/\lambda_{\max}$ wisdom from an empirical angle \citep{cohen2020gd}, finding a phenomenon dubbed \emph{progressive sharpening} during normal (smooth $\eta_t$) training.

End-to-end improvements on training benchmarks are outside the scope of this work: the learning rate schedule interacts with generalization \citep{jiang2020characterizing}, batch normalization + weight decay \citep{li2019exponential}, batch size \citep{smith2018don}, adaptive preconditioners \citep{agarwal2020disentangling} and now (from this work) acceleration. This adds yet one more perspective on why it is so difficult to standardize experimental controls and ablations in this space. Analogously, it has been proposed that momentum acts as a variance reduction mechanism \citep{li2017stochastic,cutkosky2019momentum}, alongside its classical role in acceleration.

As an invitation to try these ideas in various experimental settings, we provide in Appendix~\ref{sec:appendix-code} some Python code to generate Chebyshev learning rates and fractal schedules.
\section{Conclusion}
We have revisited a lesser-known acceleration algorithm which uses a fractal learning rate schedule of reciprocal Chebyshev nodes, proved a stronger stability guarantee for its iterates, and developed some practical variants. Our experiments demonstrate promising empirical behaviors of the schedule beyond low-noise quadratics. We hope that this work provides new foundations towards investigating local optimization algorithms which take carefully scheduled ``leaps of faith''.

\paragraph{Open questions.} We conclude with some natural follow-up questions for future work:
\begin{itemize}
    \item Find ``reasonable''\footnote{One example which is unreasonable in every way: run conjugate gradient ahead of time, maintaining monomial-basis expansions of the $A$-orthogonal basis. Compute the roots of the final polynomial, and use their inverses as a learning rate schedule.} (computationally efficient, oracle-efficient, and perturbation-stable) adaptive learning rate schedulers with accelerated convergence rates. What are the acceleration properties of commonly-used adaptive step size heuristics \citep{duchi2011adaptive,kingma2014adam,ward2019adagrad}?
    \item Do there exist learning rate schedules (adaptive or non-adaptive) which obtain the accelerated rate for general strongly convex $f$, as opposed to only quadratics?
\end{itemize}

\section*{Acknowledgments}
We are grateful to Sham Kakade for helpful discussions and pointers to prior literature.
Special thanks go to Maria Ratskevich for helping with the translation of \cite{lebedev1971order}.

\bibliography{bib}
\bibliographystyle{alpha}

\newpage
\onecolumn
\appendix

\section{Code snippets}
\label{sec:appendix-code}

Below, we provide some Python code to compute the Chebyshev step sizes $\{1/\gamma_t\}$, and the permutation $\sigma_T$ that generates the fractal Chebyshev schedule $\{\eta_t\}$.

\begin{minted}{python}
import numpy as np

def cheb_steps(m, M, T):
    C, R = (M+m)/2., (M-m)/2.
    thetas = (np.arange(T)+0.5)/T * np.pi
    return 1./(C - R*np.cos(thetas))

def cheb_perm(T):
    perm = np.array([0])
    while len(perm) < T:
        perm = np.vstack([perm, 2*len(perm)-1-perm]).T.flatten()
    return perm

steps = cheb_steps(0.1, 1, 8)  # [9.20, 5.69 ... 1.01]
perm = cheb_perm(8)            # [0, 7, 3, 4, 1, 6, 2, 5]
steps[perm]                    # [9.20, 1.01 ... 1.25]
\end{minted}
\section{Notation and background on Chebyshev polynomials}
\label{sec:appendix-cheb-background}

First, we gather the notation and classic results on Chebyshev polynomials that will be useful for the proofs in Appendices~\ref{sec:appendix-lebedev}, \ref{sec:appendix-proofs}, and \ref{sec:appendix-proofs-misc}.

\subsection{Definitions}
To review the notation in Section~\ref{subsec:prelims-cheb}, for given values of $m, M, T$, we have defined the following:
\begin{itemize}
    \item The shifted and scaled Chebyshev polynomial construction: set \[ p(\lambda) := \frac{ \Tcheb_T\pa{ z } }{ \Tcheb_T(\theta) } \qquad\text{ where } z := \frac{M+m-2\lambda}{M-m}, \quad \theta := \frac{M+m}{M-m} = 1 + \frac{2m}{M-m}.\]
    We will keep using the auxiliary notation from above, which is useful in switching between different coordinate systems using the bijection $\lambda \leftrightarrow z$ which allows us to switch between the horizontal scales of $p(\lambda)$ and its corresponding $\Tcheb(z)$. This bijection maps $\lambda \in [m, M]$ to $z \in [-1, 1]$.
    Note that $\theta$ is the value of  $z$ corresponding to applying the above bijection to $\lambda=0$.
    \item A characterization of $p$ by its roots: \[ \gamma_t := \frac{M+m}{2} - \frac{M-m}{2} \cos \frac{(t-\frac{1}{2})\pi}{T}, \quad t = 1, \ldots, T. \]
\end{itemize}

If $m = \lambda_{\min}(A)$ and $M = \lambda_{\max}(A)$, then $M/m$ coincides with $\kappa := \lambda_{\max}(A)/\lambda_{\min}(A)$, the condition number of the matrix $A$. Then, we can think of $\theta = 1 + \frac{2}{\kappa-1}$.

Next, we review some well-known facts about Chebyshev polynomials beginning with the definition.
\begin{definition}[Chebyshev polynomials \cite{chebyshev1853theorie}]
For each $n \geq 0$, the Chebyshev polynomials of the first kind $\Tcheb_n(z)$ are defined by the following recurrence:
\begin{itemize}
    \item $\Tcheb_0(z) = 1$.
    \item $\Tcheb_1(z) = z$.
    \item $\Tcheb_n(z) = 2z T_{n-1}(z) - T_{n-2}(z)$, for all $n \geq 2$.
\end{itemize}
\end{definition}
The equivalence of the above with the alternate definition ($\Tcheb_n(z) = \cos (n \arccos z)$ for $|z| \leq 1$) follows by verifying the base cases and applying the cosine sum-of-angles formula.

\subsection{Basic lemmas}
We gather some basic lemmas used to prove the results in \cite{lebedev1971order}, which are classic.

\begin{lemma}[Alternative characterizations of the Chebyshev polynomials]
\label{lem:cheb-alt}
The following are true for all non-negative integers $n$ and $|z| \geq 1$:
\begin{enumerate}
    \item[(i)] $T_n(z) = \pm \cosh( n \; \acosh(z) ).$ The sign is $+1$ when $z$ is positive, and $(-1)^n$ when $z$ is negative.
    \item[(ii)] We have
    \[\Tcheb(z) = \frac{\pa{z - \sqrt{z^2 - 1}}^n + \pa{z + \sqrt{z^2 - 1}}^n}{2}.\]
\end{enumerate}
\end{lemma}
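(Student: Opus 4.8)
The plan is to prove (ii) first, as a polynomial identity valid for all real $z$ (which in particular covers the stated regime $|z|\ge 1$), and then deduce (i) from it. For (ii), introduce the two quantities $u_+ := z + \sqrt{z^2-1}$ and $u_- := z - \sqrt{z^2-1}$, where $\sqrt{\cdot}$ is the nonnegative square root when $z\ge 1$. The only algebraic facts I will need are $u_+ + u_- = 2z$ and $u_+u_- = 1$, so that $u_+$ and $u_-$ are precisely the two roots (possibly coinciding, when $z=\pm1$) of $t^2 - 2zt + 1 = 0$. Let $R_n(z) := \tfrac12(u_+^n + u_-^n)$ be the claimed right-hand side. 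First I would observe that the odd powers of $\sqrt{z^2-1}$ cancel in this sum, so $R_n$ really is a polynomial in $z$. Then I would check $R_0 = 1$ and $R_1 = z$, and — using $u_\pm^2 = 2z u_\pm - 1$, hence $u_\pm^n = 2z u_\pm^{n-1} - u_\pm^{n-2}$ for $n\ge 2$ — average the ``$+$'' and ``$-$'' versions to get $R_n = 2z R_{n-1} - R_{n-2}$. Since $R_n$ obeys the same base cases and three-term recurrence as $\Tcheb_n$, we conclude $R_n \equiv \Tcheb_n$, which is (ii).

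For (i), in the case $z \ge 1$ I would substitute $z = \cosh w$ with $w := \acosh z \ge 0$; then $\sqrt{z^2 - 1} = \sqrt{\cosh^2 w - 1} = \sinh w \ge 0$, so $u_+ = \cosh w + \sinh w = e^{w}$ and $u_- = e^{-w}$, and plugging into (ii) gives $\Tcheb_n(z) = \tfrac12(e^{nw} + e^{-nw}) = \cosh(nw) = \cosh(n\,\acosh z)$, which is the $+1$ case. For $z \le -1$ I would invoke the parity relation $\Tcheb_n(-z) = (-1)^n \Tcheb_n(z)$, an immediate induction on the defining recurrence, and combine it with the case just proved applied to $-z \ge 1$: $\Tcheb_n(z) = (-1)^n\Tcheb_n(-z) = (-1)^n \cosh(n\,\acosh(-z))$, which is exactly the claimed statement (sign $(-1)^n$, argument $n\,\acosh|z|$).

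There is no genuine obstacle here — this is a classical computation — so I expect the write-up to be short. The one place that requires care is the sign bookkeeping for $z \le -1$: since $\acosh$ is only defined on $[1,\infty)$, one must route through $-z$ and carry the factor $(-1)^n$, and one must fix $\sqrt{z^2-1}$ to the nonnegative branch so that $u_+ = e^{w}$ (not $e^{-w}$) in the substitution. I would also remark, but not pursue, the alternative of deriving (i) directly by analytically continuing the trigonometric identity $\Tcheb_n(z) = \cos(n\arccos z)$ via $\arccos z = i\,\acosh z$ for $z \ge 1$; this works but forces attention to complex branch cuts, so the self-contained recurrence argument above is cleaner.
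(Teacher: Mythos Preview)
Your proof is correct. The paper's argument is essentially the mirror image of yours: it proves (i) first, by recursively applying $\cosh(a+b) = \sinh a\sinh b + \cosh a\cosh b$ together with $\sinh^2 a = \cosh^2 a - 1$, and then deduces (ii) from (i) via the substitution $z = \cosh a$ (noting that odd powers of $\sinh a$ cancel, and checking base cases and the recurrence). You instead establish (ii) first as a polynomial identity by verifying that $\tfrac12(u_+^n+u_-^n)$ satisfies the Chebyshev recurrence, and then obtain (i) by the hyperbolic substitution. Your ordering has the minor advantage that (ii) is proved directly for all real $z$ without reference to $\acosh$, so no analytic-continuation or branch discussion is needed to extend beyond $|z|\ge 1$; the paper's ordering makes (i) the primitive fact, which aligns with how $\Tcheb_n$ is often introduced via $\cos(n\arccos z)$. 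Your handling of the $z\le -1$ case via the parity relation $\Tcheb_n(-z)=(-1)^n\Tcheb_n(z)$ is a cleaner bookkeeping step than the paper spells out.
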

\begin{proof}
(i) follows from recursively applying the identities $\cosh(a+b) = \sinh(a)\sinh(b) + \cosh(a)\cosh(b)$ and $\sinh^2(a) = \cosh^2(a) - 1$. (ii) follows from (i), performing the hyperbolic substitution $z = \cosh(a)$, noticing that all odd-powered $\sinh(a)$ terms cancel, and verifying the base cases and recurrence relation.
\end{proof}

Combining the $\cos$ and $\cosh$ characterizations of $\Tcheb_n(z)$, we obtain the composition property: for all integers $k, n \geq 0$ and all $z \in \R$,
\[ \Tcheb_{kn}(z) = \Tcheb_k( \Tcheb_n(z) ). \]

The half-angle cosine formulas will be useful: for any $\alpha, z \in \R$, and positive even $n$,
\[ \cos(\theta) \; = \; 2\cos^2(\theta/2) - 1 \; = \; 1 - 2\sin^2(\theta/2); \qquad \Tcheb_n(z) = 2\Tcheb^2_{n/2}(z) - 1. \]
The third statement is true for all $z$, since it is the composition property with $k=2$.

The key reason why we are interested in the Chebyshev polynomials is their \emph{extremal} property: outside the range of their roots, they expand faster than any other polynomial of the same degree. There are various ways to formalize this. We will only need the following:
\begin{lemma}[Expansion lower bounds]
\label{lem:cheb-expand}
For all $\delta \geq 0$, each $\Tcheb_n$ satisfies the following:
\begin{enumerate}
\item[(i)] $\Tcheb_n(1+\delta) = \frac{(1 + \delta + \sqrt{2\delta+ \delta^2})^{2n} + 1}{ 2(1 + \delta + \sqrt{2\delta+ \delta^2})^n } \geq \frac{(1 + \sqrt{2\delta})^n}{2} .$
    \item[(ii)] $\Tcheb_n(1+\delta) \geq 1 + n^2 \delta.$
\end{enumerate}
\end{lemma}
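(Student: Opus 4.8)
The plan is to derive everything from the closed-form expression in Lemma~\ref{lem:cheb-alt}(ii). For part (i), I would substitute $z = 1+\delta$, so that $z^2 - 1 = 2\delta + \delta^2$ and $\sqrt{z^2-1} = \sqrt{2\delta+\delta^2}$. Writing $u := z + \sqrt{z^2-1} = 1 + \delta + \sqrt{2\delta+\delta^2}$, the identity $u\cdot(z - \sqrt{z^2-1}) = z^2 - (z^2-1) = 1$ shows $z - \sqrt{z^2-1} = u^{-1}$, so Lemma~\ref{lem:cheb-alt}(ii) gives $\Tcheb_n(1+\delta) = \tfrac12(u^n + u^{-n}) = \tfrac{u^{2n}+1}{2u^n}$, which is exactly the claimed equality. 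For the inequality I would drop the $+1$ in the numerator to get $\Tcheb_n(1+\delta) \geq \tfrac{u^n}{2}$, and then bound $u = 1 + \delta + \sqrt{2\delta+\delta^2} \geq 1 + \sqrt{2\delta}$, using $\sqrt{2\delta+\delta^2}\ge\sqrt{2\delta}$ and $\delta \geq 0$. Raising to the $n$-th power yields $\Tcheb_n(1+\delta) \geq \tfrac12(1+\sqrt{2\delta})^n$.

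For part (ii), the case $n = 0$ is immediate ($\Tcheb_0 \equiv 1 = 1 + 0$), so I would assume $n \geq 1$. Since $1+\delta \geq 1$, set $t := \acosh(1+\delta) \geq 0$; then $u = \cosh t + \sinh t = e^t$, so the computation from part (i) gives $\Tcheb_n(1+\delta) = \tfrac12(e^{nt}+e^{-nt}) = \cosh(nt)$, while $\cosh t - 1 = \delta$. Expanding both sides in the everywhere-convergent, nonnegative-term Taylor series of $\cosh$,
\[ \Tcheb_n(1+\delta) - 1 = \cosh(nt) - 1 = \sum_{k\geq 1} \frac{(nt)^{2k}}{(2k)!} \geq n^2 \sum_{k\geq 1} \frac{t^{2k}}{(2k)!} = n^2(\cosh t - 1) = n^2 \delta, \]
where the inequality is term by term, using $n^{2k} \geq n^2$ for every integer $k \geq 1$ when $n \geq 1$.

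Neither step presents a real obstacle; the only points requiring a little care are the algebraic identity $z - \sqrt{z^2-1} = u^{-1}$ (which makes the telescoping in part (i) work and is what reduces (ii) to a fact about $\cosh$) and the validity of the term-by-term comparison of the $\cosh$ series in part (ii). If one prefers to avoid infinite series in (ii), an alternative is to invoke convexity of $\Tcheb_n$ on $[1,\infty)$ together with $\Tcheb_n(1) = 1$ and $\Tcheb_n'(1) = n^2$, giving $\Tcheb_n(1+\delta) \geq 1 + n^2\delta$ by the supporting-line inequality; but the series argument is self-contained given Lemma~\ref{lem:cheb-alt}, so I would present that one.
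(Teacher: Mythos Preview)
Your proof is correct. Part~(i) is essentially the paper's argument: both substitute $z=1+\delta$ into Lemma~\ref{lem:cheb-alt}(ii), use $(z-\sqrt{z^2-1})(z+\sqrt{z^2-1})=1$ to obtain the closed form, and then drop positive terms and bound $1+\delta+\sqrt{2\delta+\delta^2}\ge 1+\sqrt{2\delta}$.

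For part~(ii) you take a genuinely different route. The paper stays with the binomial form $\tfrac12\big((1+a)^n+(1+b)^n\big)$, where $a=\delta+\sqrt{2\delta+\delta^2}$ and $b=\delta-\sqrt{2\delta+\delta^2}$, expands through the quadratic terms (using $a+b=2\delta$ and $a^2+b^2=4\delta+4\delta^2$), and drops the higher-order contributions, which are nonnegative because $|b|\le a$. You instead pass to the $\cosh$ representation $\Tcheb_n(1+\delta)=\cosh(nt)$ with $\cosh t=1+\delta$, and compare the two $\cosh$ Taylor series term by term via $n^{2k}\ge n^2$. Your argument is a bit cleaner: it avoids having to check that the dropped binomial tail is nonnegative, and it sidesteps a small slip in the paper's display (the final ``$=1+n^2\delta$'' there should really be ``$\ge 1+n^2\delta$'', since a $(n^2-n)\delta^2$ term is being discarded). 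The convexity alternative you mention ($\Tcheb_n(1)=1$, $\Tcheb_n'(1)=n^2$, $\Tcheb_n$ convex on $[1,\infty)$) is also perfectly fine and arguably the most conceptual of the three.
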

\begin{proof}
To prove (i) note that, using Lemma \ref{lem:cheb-alt}(ii), we have that 
\[\Tcheb_n(1+\delta) = \frac{(1 + \delta + \sqrt{2\delta+ \delta^2})^n + (1 + \delta - \sqrt{2\delta+ \delta^2})^n}{2} = \frac{(1 + \delta + \sqrt{2\delta+ \delta^2})^{2n} + 1}{ 2(1 + \delta + \sqrt{2\delta+ \delta^2})^n }, \]
where the last equality follows by noticing that $(1 + \delta - \sqrt{2\delta+ \delta^2})^{-1} = (1 + \delta + \sqrt{2\delta+ \delta^2})$. 
The inequality in (i) is concluded by noticing that
\[\frac{(1 + \delta + \sqrt{2\delta+ \delta^2})^n + (1 + \delta - \sqrt{2\delta+ \delta^2})^n}{2} \geq \frac{(1 + \sqrt{2\delta})^n}{2},\]
by dropping the positive terms. To conclude (ii), we perform a series expansion upto degree 2 to get,
\begin{align*}
    \frac{(1 + \delta + \sqrt{2\delta+ \delta^2})^n + (1 + \delta - \sqrt{2\delta+ \delta^2})^n}{2} &\geq \frac{2 + 2n\delta + \frac{n(n-1)}{2}(4\delta + 4\delta^2) }{2}\\
    &= 1 + n^2 \delta.
\end{align*}


\end{proof}

\paragraph{Classic convergence rate of Chebyshev iteration.} Though Theorem~\ref{thm:cheb-convergence-rate} is classic, the exact statement of the convergence rate has several variants. For sake of completeness, we give a quick proof of the convergence rate of Chebyshev iteration implied by the exact formula in Lemma~\ref{lem:cheb-expand} (i): 

\begin{manualtheorem}{\ref{thm:cheb-convergence-rate}}
Choose spectral estimates $m \leq M$ such that $0 < m \leq \lambda_{\min} \leq \lambda_{\max} \leq M$. Then, setting $\{\eta_t\}$ to be any permutation of $\{1/\gamma_t\}$, the final iterate of gradient descent $x_{\out}$ satisfies the following:
\begin{align*}
\norm{x_{\out} - x^*} &\leq \frac{2\rho^T}{1+\rho^{2T}} \norm{x_1 - x^*} \leq e^{-\Omega(T)/\sqrt{\kappahat}} \norm{x_1 - x^*},
\end{align*}
where $\rho := \frac{ \sqrt{M} - \sqrt{m} }{ \sqrt{M} + \sqrt{m} } \leq 1 - \Omega\pa{\frac{1}{\sqrt{\kappahat}}}$.
\end{manualtheorem}

\begin{proof}
First, assume $m<M$. Then, we have
\[ \|x_{\mathrm{out}} - x^*\| \leq \|p\|_{[\lambda_{\min}, \lambda_{\max}]} \cdot \|x_{\mathrm{out}} - x^*\| \leq \|p\|_{[m,M]} \cdot \|x_{\mathrm{out}} - x^*\|. \]
So, we need to bound $\|p\|_{[m,M]}$.
Setting $\delta = \theta-1 = \frac{2m}{M-m}$, notice that $1+\delta+\sqrt{2\delta + \delta^2} = 1/\rho$. Then, using Lemma~\ref{lem:cheb-expand} (i), we have

\[\|p\|_{[m,M]} = \max_{|z|\leq 1} \frac{\Tcheb_T(z)}{\Tcheb_T(\theta)}
= \frac{1}{\Tcheb_T(\theta)}
= \frac{2\rho^{-T}}{1+\rho^{-2T}}
= \frac{2\rho^{T}}{1+\rho^{2T}},
\]
as desired.
When $m = M$, the inequality is trivially true because $\rho$ and $\norm{p}_{[m,M]}$ are both 0.

\end{proof}
\section{Theorems and proofs from \cite{lebedev1971order}}
\label{sec:appendix-lebedev}

In the hope of bridging old algorithmic ideas from numerical methods with modern optimization for machine learning, we present a self-contained exposition of the results and proofs from \cite{lebedev1971order} used in this paper. This is far from an exact translation from the original Russian-language manuscript, whose exposition is somewhat terse. We provide some more intuitive proofs, fix some small (inconsequential) typos, change some notation to match this paper, isolate lemmas which are useful for proving our other results, and omit some weaker and irrelevant results.

\subsection{Skewed Chebyshev polynomials}

First, we show an equivalent divide-and-conquer root-partitioning construction of the fractal permutation $\sigma_T$. To review, this construction defines $\sigma_1 := [1]$, and for each $T \geq 1$ a power of 2, uses the recurrence
\[\sigma_{2T} := \mathrm{interlace}(\sigma_T, 2T+1-\sigma_T), \]
where
\[\mathrm{interlace}([a_1 \ldots a_n], [b_1 \ldots b_n]) := [a_1 \; b_1 \; a_2 \; b_2 \ldots a_n \; b_n].\]
Some examples are below:
\[\sigma_2 = [1\;2],\]
\[\sigma_4 = [1\;4\;2\;3],\]
\[\sigma_8 = [1\;8\;4\;5\;2\;7\;3\;6],\]
\[\sigma_{16} = [1\;16\;8\;9\;4\;13\;5\;12\;2\;15\;7\;10\;3\;14\;6\;11].\]

Let us formalize the sense in which $\sigma$ contains internal copies of Chebyshev iteration. For positive integers $n$ and $\alpha \in (0, \pi)$, let us define the \emph{skewed Chebyshev polynomials}
\[ \Pcal_{n,\alpha}(z) := \frac{\Tcheb_n(z) - \cos(\alpha)}{\Tcheb_n(\theta) - \cos(\alpha)}, \]
noting that $\Pcal_{T,\frac{\pi}{2}}(z) = p(\lambda)$. If $\alpha \in (0, \frac{\pi}{2}]$, then call $\Pcal_{n,\alpha}$ \goodcolor{good} and if $\alpha \in [\frac{\pi}{2}, \pi)$, then call $\Pcal_{n,\alpha}$ \badcolor{bad}. We use the colours blue and red to highlight good and bad polynomials for clarity in our proofs. Note that $p(\lambda)$ is both good and bad. Next, we note some additional facts:

\begin{lemma}[Properties of skewed Chebyshev polynomials]
\label{lem:cheb-skew}
The following are true for all $n \in \Nbb$ and $\alpha \in (0, \pi)$:
\begin{enumerate}
    \item[(i)] $\Pcal_{n,\alpha}(z)$ has $n$ real roots.
    \item[(ii)] If $\Pcal_{n,\alpha}$ is good, then
    \[ \norm{\Pcal_{n,\alpha}}_{L_\infty([-1,1])} \leq \frac{2}{1 + \Tcheb_n(\theta)}. \]
    \item[(iii)] If $\Pcal_{n, \alpha}$ is bad, then
    \[ \norm{\Pcal_{n,\alpha}}_{L_\infty([-1,1])} \leq \frac{2}{n^2 (\theta - 1)}. \]
    \item[(iv)] If $n$ is even, then for all $-1 \leq z \leq 1$,
    \[ \Pcal_{n,\alpha}(z) = \Pcal_{\frac{n}{2},\frac{\alpha}{2}}(z) \cdot \Pcal_{\frac{n}{2},\pi-\frac{\alpha}{2}}(z). \]
\end{enumerate}
\end{lemma}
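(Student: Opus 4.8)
All four parts follow directly from the defining ratio $\Pcal_{n,\alpha}(z)=\frac{\Tcheb_n(z)-\cos\alpha}{\Tcheb_n(\theta)-\cos\alpha}$ and the classical Chebyshev facts already recorded above (the $\cos/\cosh$ characterizations, the doubling identity $\Tcheb_n(w)=2\Tcheb_{n/2}(w)^2-1$, and the expansion bound of Lemma~\ref{lem:cheb-expand}(ii)). For \emph{(i)}, note that $\Pcal_{n,\alpha}$ is a degree-$n$ polynomial whose zeros are exactly the solutions of $\Tcheb_n(z)=\cos\alpha$; evaluating at the $n+1$ extremal abscissae $z_k:=\cos(k\pi/n)$, $k=0,\dots,n$, gives $\Tcheb_n(z_k)-\cos\alpha=(-1)^k-\cos\alpha$, which are nonzero (since $|\cos\alpha|<1$ for $\alpha\in(0,\pi)$) and strictly alternate in sign, so the intermediate value theorem yields a root strictly between each consecutive pair $z_{k+1}<z_k$ --- that is $n$ distinct real roots in $(-1,1)$, hence all of them by degree. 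For \emph{(iv)}, I would expand the product $\Pcal_{n/2,\alpha/2}(z)\,\Pcal_{n/2,\pi-\alpha/2}(z)$: since $\cos(\pi-\alpha/2)=-\cos(\alpha/2)$, its numerator collapses to the difference of squares $\Tcheb_{n/2}(z)^2-\cos^2(\alpha/2)$ and its denominator to $\Tcheb_{n/2}(\theta)^2-\cos^2(\alpha/2)$; substituting $\Tcheb_{n/2}(w)^2=\tfrac12(1+\Tcheb_n(w))$ and $\cos^2(\alpha/2)=\tfrac12(1+\cos\alpha)$, the factors of $\tfrac12$ cancel between numerator and denominator, leaving exactly $\frac{\Tcheb_n(z)-\cos\alpha}{\Tcheb_n(\theta)-\cos\alpha}=\Pcal_{n,\alpha}(z)$. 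This is a polynomial identity, so it holds in particular for $z\in[-1,1]$.

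For \emph{(ii)} and \emph{(iii)}, the plan is to compute the norm exactly and then estimate it in two regimes according to the sign of $\cos\alpha$. Since $\Tcheb_n$ maps $[-1,1]$ onto $[-1,1]$ (attaining $\pm1$ at the $z_k$) and the denominator of $\Pcal_{n,\alpha}$ is a positive constant,
\[
\norm{\Pcal_{n,\alpha}}_{L_\infty([-1,1])}\;=\;\frac{1+|\cos\alpha|}{\Tcheb_n(\theta)-\cos\alpha}.
\]
Write $c:=\cos\alpha$ and $t:=\Tcheb_n(\theta)>1$. When $c\le 0$ the numerator equals $1-c$, and cross-multiplying gives $2(t-c)-(1-c)(1+t)=(t-1)(1+c)\ge 0$, so the norm is at most $\tfrac{2}{1+\Tcheb_n(\theta)}$; when $c\ge 0$ the numerator is $1+c\le 2$ while the denominator is at least $\Tcheb_n(\theta)-1\ge n^2(\theta-1)$ by Lemma~\ref{lem:cheb-expand}(ii), so the norm is at most $\tfrac{2}{n^2(\theta-1)}$. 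These are exactly the two stated bounds, matched to the ``good''/``bad'' cases through the sign of $\cos\alpha$.

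I do not anticipate a real obstacle: this is essentially a bookkeeping exercise in classical Chebyshev identities, and all four statements are standard. The two spots that demand care are the constant-chasing in (iv) --- one applies the doubling formula and the cosine half-angle formula in tandem and must check the normalizing constants cancel exactly --- and, in (ii)--(iii), aligning the sign regime of $\cos\alpha$ (equivalently, whether $\alpha\le\tfrac{\pi}{2}$ or $\alpha\ge\tfrac{\pi}{2}$) with the correct one of the two inequalities.
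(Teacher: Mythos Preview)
Your proposal is correct and essentially identical to the paper's own proof. For (i) both use the alternating signs of $\Tcheb_n(z_k)-\cos\alpha$ at the extremal points $z_k=\cos(k\pi/n)$; for (iv) both factor via the half-angle/doubling identities and cancel the $\tfrac12$'s; and for (ii)--(iii) both split on the sign of $\cos\alpha$, bounding the numerator by $1\mp\cos\alpha$ and, in the ``bad'' case, the denominator below via Lemma~\ref{lem:cheb-expand}(ii). The only cosmetic difference is that you first record the exact norm $(1+|\cos\alpha|)/(\Tcheb_n(\theta)-\cos\alpha)$ and then cross-multiply, whereas the paper phrases the same step as the mediant inequality.
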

\begin{proof}
(i) follows from the fact that $\Tcheb_n(z_i) = (-1)^i$ at $z_i = \arccos(i\pi/n), i = 0, \ldots, n$.

(ii) follows from the fact that $|\Tcheb(z)| \leq 1$, $\Tcheb(\theta) > 1$, and $\cos(\alpha) \leq 0$, so that for $u = 1 + \cos(\alpha) \geq 0$, 
\[ \norm{\Pcal_{n,\alpha}}_{L_\infty([-1,1])} \leq \frac{1 - \cos(\alpha)}{\Tcheb_n(\theta) - \cos(\alpha)} \leq \frac{1 - \cos(\alpha) + u}{\Tcheb_n(\theta) - \cos(\alpha) + u} = \frac{2}{1+\Tcheb_n(\theta)}, \]
where the last inequality uses the mediant inequality.

(iii) is a weaker bound than the above, because we can't use the mediant inequality, as $-\cos(\alpha)$ is negative. Using part (ii) of Lemma~\ref{lem:cheb-expand} and $\cos(\alpha) \leq 1$, we get
\[ \norm{\Pcal_{n,\alpha}}_{L_\infty([-1,1])} \leq \frac{2}{\Tcheb_n(\theta) - \cos(\alpha)} \leq \frac{2}{(1+n^2(\theta - 1))-1}. \]

(iv) follows from half-angle formulas and factorizing differences of squares:
\begin{align*}
\Pcal_{n,\alpha}(z) &= \frac{\Tcheb_n(z) - \cos(\alpha)}{\Tcheb_n(\theta) - \cos(\alpha)} = \frac{(2\Tcheb^2_{n/2}(z)-1) - (2\cos^2(\frac{\alpha}{2})-1)}{(2\Tcheb^2_{n/2}(\theta)-1) - (2\cos^2(\frac{\alpha}{2})-1)} = \frac{\Tcheb^2_{n/2}(z) - \cos^2(\frac{\alpha}{2})}{\Tcheb^2_{n/2}(\theta) - \cos^2(\frac{\alpha}{2})} \\
&= \frac{\Tcheb_{n/2}(z) - \cos(\frac{\alpha}{2})}{\Tcheb_{n/2}(\theta) - \cos(\frac{\alpha}{2})} \cdot \frac{\Tcheb_{n/2}(z) + \cos(\frac{\alpha}{2})}{\Tcheb_{n/2}(\theta) + \cos(\frac{\alpha}{2})}
= \Pcal_{\frac{n}{2},\frac{\alpha}{2}}(z) \cdot \Pcal_{\frac{n}{2},\pi-\frac{\alpha}{2}}(z),
\end{align*}
as claimed.
\end{proof}

\subsection{The fractal schedule splits skewed Chebyshev polynomials} In this section, we connect the skewed polynomials $\Pcal_{n,\alpha}$ to the construction of the fractal permutation $\sigma_T$, obtained via recursive binary splitting. This construction will provide the basis for all the proofs regarding the fractal schedule. The starting point for the construction is Lemma \ref{lem:cheb-skew}, which shows that when $T \geq 2$ is a power of 2,
\[ p(\lambda) = \Pcal_{T,\frac{\pi}{2}}(z) = \Pcal_{\frac{T}{2},\frac{\pi}{4}}(z) \cdot \Pcal_{\frac{T}{2},\frac{3\pi}{4}}(z). \]

The above splitting procedure can be recursively repeated (since $T$ is a power of 2) on the pieces produced till we reach degree 1 polynomials of the form $\Pcal_{1,\alpha}$ for some $\alpha$. Such splitting can easily be visualized via the construction of a complete binary tree of depth $\log_2(T)$ (see Figure \ref{fig:tree_construct}), by associating to every node a polynomial of the form $\Pcal_{n,\alpha}$ and setting its left child to be $\Pcal_{n/2,\alpha/2}$ and right child to be $\Pcal_{n/2,\pi - \alpha/2}$. 
Note that every non-leaf node is a product of its children by Lemma \ref{lem:cheb-skew}. The root node corresponds to the polynomial $p(\lambda)$ and the leaf nodes correspond to one degree polynomials which can be equivalently identified by its root, which are by construction the roots of polynomial $p(\lambda)$. 

The key fact regarding this construction is the following.
\begin{fact}
The fractal schedule corresponds to the ordering of the roots as generated by a pre-order traversal of the tree.
\end{fact}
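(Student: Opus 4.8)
The plan is to identify, for each leaf of the binary tree, both its position in the pre-order traversal and which Chebyshev root $\gamma_t$ it equals, and to check that these two quantities are related exactly by the permutation $\sigma_T$. Index a leaf by its root-to-leaf path $b=(b_1,\ldots,b_k)\in\{0,1\}^k$, where $k=\log_2 T$ and $b_i=0$ (resp.\ $1$) means descending to the left (resp.\ right) child at depth $i-1$. A pre-order traversal visits every leaf of the left subtree before every leaf of the right subtree, recursively; hence the leaf with path $b$ is the $q(b)$-th leaf visited, where $q(b):=1+\sum_{i=1}^k b_i 2^{k-i}$ (so $q(b)-1$ is the integer whose binary digits are $b_1\cdots b_k$). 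It therefore suffices to prove that this leaf equals $\gamma_t$ with $t=\sigma_T(q(b))$.

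First I would compute the leaf reached along $b$. By the construction, the depth-$i$ node on this path is $\Pcal_{T/2^i,\alpha_i}$ with $\alpha_0=\tfrac{\pi}{2}$ (the root is $\Pcal_{T,\pi/2}=p$), and $\alpha_i=\tfrac12\alpha_{i-1}$ if $b_i=0$ while $\alpha_i=\pi-\tfrac12\alpha_{i-1}$ if $b_i=1$; note $\alpha_i\in(0,\pi)$ throughout. The leaf $\Pcal_{1,\alpha_k}$ has its single root at $z=\cos\alpha_k$, i.e.\ at $\lambda=\tfrac{M+m}{2}-\tfrac{M-m}{2}\cos\alpha_k$, which by injectivity of $\cos$ on $(0,\pi)$ equals $\gamma_t$ iff $\alpha_k=\tfrac{(t-1/2)\pi}{T}$. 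The useful move is the substitution $c_i:=2^{i+1}\alpha_i/\pi$, under which $c_0=1$ and the recursion becomes $c_i=c_{i-1}$ (if $b_i=0$) or $c_i=2^{i+1}-c_{i-1}$ (if $b_i=1$); thus every $c_i$ is an odd integer in $(0,2^{i+1})$, and at the leaf $\alpha_k=\pi c_k/(2T)$, so the leaf is $\gamma_t$ with $c_k=2t-1$.

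It remains to show $\sigma_T(q(b))=(c_k+1)/2$. I would unfold $\sigma_{2N}=\mathrm{interlace}(\sigma_N,2N+1-\sigma_N)$ one level at a time: reading positions, $\sigma_{2N}(q)=\sigma_N(\tfrac{q+1}{2})$ for odd $q$ and $\sigma_{2N}(q)=(2N+1)-\sigma_N(\tfrac q2)$ for even $q$. Applied at $q=q(b_1\cdots b_j)$ — which is odd exactly when $b_j=0$, and whose recursed position is exactly $q(b_1\cdots b_{j-1})$ in both cases — this shows the quantities $v_j:=\sigma_{2^j}(q(b_1\cdots b_j))$ obey $v_0=1$ and $v_j=v_{j-1}$ (if $b_j=0$) or $v_j=2^j+1-v_{j-1}$ (if $b_j=1$), with $v_k=\sigma_T(q(b))$. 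A one-line induction on $j$ then gives $c_j=2v_j-1$ (the $b_j=0$ and $b_j=1$ cases match term by term), so $c_k=2\sigma_T(q(b))-1$, i.e.\ the leaf at pre-order position $q(b)$ is $\gamma_{\sigma_T(q(b))}$. Equivalently, the pre-order list of leaf roots is $(\gamma_{\sigma_T(1)},\ldots,\gamma_{\sigma_T(T)})$, which is exactly the sequence of inverse step sizes of the fractal Chebyshev schedule.

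I expect the only friction to be the index bookkeeping — matching the binary digits of the pre-order position $q(b)-1$, the depth of a tree level, and the level of the interlacing recursion they govern (the deepest path bit $b_k$, the least significant bit of $q(b)-1$, and the outermost interlacing step all line up). There is no analytic difficulty: once $c_i=2^{i+1}\alpha_i/\pi$ is introduced, the leaf identity and the $\sigma_T$ recursion become the same linear recursion on odd integers, and the conclusion is immediate.
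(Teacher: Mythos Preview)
Your proof is correct and is essentially a fully worked-out version of the paper's brief justification. The paper only sketches the argument---observing that each tree split imposes a left-before-right constraint on the pre-order of leaves and asserting that the interlacing construction of $\sigma_T$ satisfies all such constraints---whereas you make this explicit by encoding leaves with binary paths $b$, linearizing the angle recursion via $c_i=2^{i+1}\alpha_i/\pi$, and matching it term-by-term to the interlace recursion for $\sigma_T$; the substance is the same, but your version actually carries out the verification the paper leaves implicit.
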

To see this note that every time a split is made in the tree, a constraint on the pre-order traversal is placed, i.e. the roots of the left child polynomial $\Pcal_{n/2,\alpha/2}$ precede that of the right child polynomial $\Pcal_{n/2,\pi - \alpha/2}$. It can be easily verified that the procedure for generating $\sigma_T$ produces an ordering of the roots $\gamma_t$ satisfying all of these constraints; the corresponding learning rate schedule $\eta_t$ is by definition the fractal Chebyshev schedule for each $T$, a power of 2.

Using the above, it can be seen that every node in the tree also corresponds to a particular infix polynomial $p_{s:t}(x)$ which includes all the roots corresponding to all the leaves underneath the node. 

\begin{figure}[h!]
    \centering
    \begin{tikzpicture}[font=\small, level/.style={sibling distance=40mm/#1}, align=center ]
\node [circle,draw] (z){$\Pcal_{T,\frac{\pi}{2}}$}
  child {node [circle,draw=red] (a){\badcolor{$\Pcal_{\frac{T}{2},\frac{\pi}{4}}$}}
    child {node {$\vdots$}
        child{node[circle,draw=red] (b){\badcolor{$\Pcal_{1,\alpha_1}$}}
            child { node [above=4mm](n ){$\sigma_T(1)$} edge from parent[draw=none]
            } 
        }
        child{ node[circle,draw=blue] (c){\goodcolor{$\Pcal_{1,\alpha_2}$}}
            child { node[above=4mm] (o){$\sigma_T(2)$} edge from parent[draw=none]
            } 
        }
    }
    child {node {$\vdots$}}
  }
 child {node [circle,draw=blue] (g) {\goodcolor{$\Pcal_{\frac{T}{2},\frac{3\pi}{4}}$}}
    child {node {$\vdots$}}
    child {node {$\vdots$}
        child{ node[circle,draw=red, scale=0.8] (d){\badcolor{$\Pcal_{1,\alpha_{T-1}}$}}
            child { node [above=4mm](n ){$\sigma_T(T-1)$} 
        edge from parent[draw=none]}
        }
        child{ node[circle,draw=blue] (e){\goodcolor{$\Pcal_{1,\alpha_{T}}$}}
            child { node [above=4mm](n ){$\sigma_T(T)$} 
        edge from parent[draw=none]
            }
        }
    }
 };
 \path (c) -- (d) node [midway] {$\ldots$};
\end{tikzpicture}
    \caption{The binary tree construction for the decomposition of skewed Chebyshev polynomials. Each non-leaf node corresponds to the product of the polynomials corresponding to its children. The root corresponds to the entire polynomial $p_{1:T}(\lambda)$. Leaf nodes correspond to 1 degree polynomials and the pre-order traversal on the leaves induces the ordering given by $\sigma_T$.}
    \label{fig:tree_construct}
\end{figure}

\subsection{Tree partitions and tree exchanges}

In this section we collect some observations regarding the tree construction, which are essential to the proofs for the bounds on the substring polynomials. 

Firstly note that in the binary tree, a node/polynomial is \goodcolor{good} (\badcolor{bad}) if and only if it is the left (right) child of its parent. We begin by analyzing the special case of suffix polynomials $p_{s:T}$ and prefix polynomials $p_{1:s}$ respectively.

\paragraph{Suffix polynomials:}
The following key observation follows from the tree construction. 

\begin{fact}
Every suffix polynomial $p_{s:T}$ (for any $s$) can be written as a product of good polynomials
\end{fact}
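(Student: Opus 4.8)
The plan is to read off the claim directly from the binary-tree construction and the good/bad characterization already established. Recall that a node in the tree is \emph{good} iff it is the left child of its parent, and that every non-leaf node is the product of its two children (Lemma~\ref{lem:cheb-skew}(iv)). A suffix polynomial $p_{s:T}$ is the product of the one-degree leaf polynomials $\Pcal_{1,\alpha}$ corresponding to the leaves $\sigma_T(s), \sigma_T(s+1), \ldots, \sigma_T(T)$, i.e. to a \emph{contiguous suffix} of the pre-order traversal of the leaves. So the real content is combinatorial: the set of leaves forming any pre-order suffix can be partitioned into the leaf-sets of a collection of internal nodes, each of which is a \emph{left} child (hence good).

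First I would make precise the elementary fact about pre-order traversals of a complete binary tree: if $L$ is the set of leaves visited in the last $k$ positions of the pre-order traversal (a ``suffix'' of leaves), then $L$ is exactly the disjoint union of the leaf-sets of some set of nodes $\{v_1, \ldots, v_r\}$, where each $v_i$ is a right subtree that is ``fully contained'' in the suffix — more carefully, I would induct on the depth / size of the tree. At the root, the pre-order visits the entire left subtree first, then the entire right subtree. A suffix of leaves either (a) lies entirely within the right subtree — recurse into the right child; or (b) contains all of the right subtree's leaves plus a suffix of the left subtree's leaves — take the right child as one block (it is a left-vs-right distinction relative to \emph{its} parent, namely it is the right child of the root, but wait: we need left children). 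Here I have to be careful about the direction, so let me restate the key point below.

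The correction: I want to express $p_{s:T}$ as a product of \emph{good} (= left-child) nodes. The cleanest route is to use the recursive structure of $\sigma_T$ together with Proposition~\ref{prop:cheb-fractal-basis}, or equivalently to observe that in the tree, the leaves of a pre-order \emph{suffix} starting at an arbitrary leaf $\ell$ decompose as follows: walk up from $\ell$ to the root; each time $\ell$'s current ancestor is a \emph{left} child of its parent, the entire \emph{right}-sibling subtree lies in the suffix — and that right sibling is the right child, not good. Hmm. So instead I would reverse perspective: since every suffix polynomial has the form $p_{s:T}$ with $s-1$ equal to (under the fractal permutation) a \emph{prefix} cut, I would instead invoke the self-similar hierarchy (Proposition~\ref{prop:cheb-fractal-basis}) — the complement of a suffix is a prefix, the prefix $p_{1:s-1}$ corresponds to a subtree collection, and the suffix is what remains; the remaining leaves, read right-to-left, form a pre-order prefix of the ``mirror'' tree where left/right are swapped, so they are covered by left children of the original tree. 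I would spell this out by induction on $\log_2 T$: $\sigma_{2T} = \mathrm{interlace}(\sigma_T, 2T+1-\sigma_T)$ means the tree's root has left subtree indexed by the small nodes and right subtree by the large ones, and a suffix either sits inside the right subtree (apply induction hypothesis to the right child's subtree — but note the right child's subtree, considered on its own, has \emph{its own} root which is the right child of the global root; the induction hypothesis gives a decomposition into nodes that are left children \emph{within that subtree}, which are also left children in the global tree, so this is fine) or it is the whole right subtree plus a suffix of the left subtree, and now the right subtree's root is a right child — this single node is \emph{bad}, which contradicts the claim.

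Therefore I suspect the intended reading is that the decomposition allows the root of the \emph{entire} tree or uses a slightly different convention, or — most likely — the fact is stated for suffix polynomials $p_{s:T}$ and proven by induction where the base/whole-tree case $p_{1:T} = \Pcal_{T,\pi/2}$ is the root, which is simultaneously labeled good \emph{and} bad (as the excerpt explicitly notes ``$p(\lambda)$ is both good and bad''), and every proper suffix sits strictly inside, recursing appropriately. Concretely: if $s = 1$, $p_{1:T}$ is the root, which is good. If $s > 1$, the suffix lies within the right subtree $\Pcal_{T/2, 3\pi/4}$; apply the inductive claim to that subtree (whose own suffix polynomials are products of \emph{its} good nodes = left descendants, all of which are left children in the big tree), unless the suffix is the whole right subtree, in which case... it equals $\Pcal_{T/2,3\pi/4}$, a bad node. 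So the honest statement must be: every suffix is a product of good nodes \emph{or a single bad node times good nodes}, or the claim implicitly excludes such degenerate cuts. \textbf{The main obstacle}, then, is pinning down the exact convention under which ``product of good polynomials'' is literally true; I expect the resolution is that the root counts as good, and one recurses into the right subtree treating its root (when taken in full) as the degenerate good/bad polynomial $p$ of that sub-instance — i.e. one never actually needs a proper bad node because whenever a full right subtree appears, it is the $p$-polynomial of a smaller Chebyshev instance and hence ``good'' by the both-good-and-bad convention. Once that convention is fixed, the proof is a two-line induction on $\log_2 T$ using $\sigma_{2T} = \mathrm{interlace}(\sigma_T, 2T{+}1{-}\sigma_T)$ and Lemma~\ref{lem:cheb-skew}(iv).
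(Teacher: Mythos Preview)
Your confusion stems entirely from having the good/bad convention backwards. Despite the sentence you quote (and the matching definitional sentence), the operative convention throughout the paper---visible in Figure~\ref{fig:tree_construct}, in the proof of Lemma~\ref{lem:cheb-skew}(ii) (which requires $\cos\alpha \le 0$), and in the $\Bcal_{n,\alpha}$ formulas---is that a node is \emph{good} when $\alpha \in [\pi/2,\pi)$, i.e.\ when it is the \emph{right} child of its parent. Left children are bad. The two textual statements are simply swapped; everything else in the paper is consistent with right $=$ good.

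With that correction, your very first attempt is exactly the paper's proof, and there is no gap. A suffix of the pre-order leaf sequence decomposes into the right-sibling subtrees collected while walking up from the leaf at position $s-1$ to the root; each such block is a right child, hence good. The paper writes this concretely via the binary expansion $T+1-s = 2^{s_1}+\cdots+2^{s_k}$, iteratively peeling off the rightmost $2^{s_i}$ leaves: $p_{s:T} = p_{s:T_1}\cdot p_{T_1+1:T}$ with $T_1 = T-2^{s_1}$, then $p_{s:T_1} = p_{s:T_2}\cdot p_{T_2+1:T_1}$ with $T_2 = T_1 - 2^{s_2}$, and so on. Each peeled block $p_{T_i+1:T_{i-1}}$ is the rightmost node at its level within the relevant subtree, hence a right child, hence good.

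Everything after your first ``Hmm'' is an unnecessary detour caused by the swapped labels: there is no need for the mirror-tree argument, the root-is-both-good-and-bad workaround, or any excluded degenerate cuts. The case you worried about---the suffix equaling a full right subtree $\Pcal_{T/2,3\pi/4}$---is already a single good polynomial.
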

To see the above, consider the binary expansion of the number $T+1-s$, $\bits(T+1-s) = \{s_1, s_2 \ldots s_k\}$ defined by the unique decomposition, $T+1-s = 2^{s_1} + 2^{s_2} + \ldots + 2^{s_k}$ such that $s_1 > s_2 > \ldots s_k \ge 0$. We now perform the following iterative decomposition of the  polynomial $p_{s:T}$,
\begin{align*}
p_{s:T} &= p_{s:T_1} \cdot \goodcolor{p_{T_1+1:T}} &\text{where } T_1 := T - 2^{s_1}, \\
&= p_{s:T_2} \cdot \goodcolor{p_{T_2:T_1}} \cdot \goodcolor{p_{T_1+1:T}} &\text{where } T_2 := T_1 - 2^{s_2}, \\
&\ldots,
\end{align*}
until we reach $s:T_k$, which is the empty interval. It can be seen that every intermediate polynomial $p_{T_i + 1:T_{i-1}}$ produced is a good polynomial because each one is the rightmost node at level $s_i$ (i.e. with distance $\log_2 T - s_i$ from the root node), restricted to the subtree rooted at the lowest common ancestor of roots $s$ through $T_{i-1}$ (setting $T_0 := T$). An example of the above decomposition is highlighted in Figure \ref{fig:suffix_decomp}. Combining with statement (ii) in Lemma~\ref{lem:cheb-skew}, we get
\begin{equation}
\norm{p_{s:T}}_{L_\infty([-1,1])} \le \prod_{i=1}^k \frac{2}{1 + \Tcheb_{2^{s_i}}(\theta)}
\end{equation}

\begin{figure}
\captionsetup[subfigure]{justification=centering}
\centering
\begin{subfigure}[b]{0.5\linewidth}
\begin{tikzpicture}
[font=\small, level/.style={sibling distance=40mm/#1}, align=center ]
\node [circle,draw=gray,fill=gray] (z){}
  child {node [circle,draw=red, fill=red] (a){}
    child {node [circle,draw=red, fill=red]{}
        child{node[circle,draw=red, fill=red] (b){}
        }
        child{ node[circle,draw=blue, fill=blue] (c){} node[circle,draw=black, scale=1.7, line width=0.6mm] (c){}
            child { node[above=4mm] (o){$\sigma(2)$} edge from parent[draw=none]
            } 
        }
    }
    child {node [circle,draw=blue, fill=blue]{} node[circle,draw=black, scale=1.7, line width=0.6mm] {}
        child{ node[circle,draw=red, fill=red] (d){}
        }
        child{ node[circle,draw=blue, fill=blue] (e){}
        }
    }
  }
 child {node [circle,draw=blue, fill=blue] (g){} node[circle,draw=black, scale=1.7, line width=0.6mm] {}
    child {node[circle,draw=red, fill=red] {}
        child{ node[circle,draw=red, fill=red] (d){}
        }
        child{ node[circle,draw=blue, fill=blue] (e){}
        }
    }
    child {node[circle,draw=blue, fill=blue] (r){}
        child{ node[circle,draw=red, fill=red] (d){}
        }
        child{ node[circle,draw=blue, fill=blue] (e){}
            child { node [above=4mm] (n){$\sigma(8)$} 
        edge from parent[draw=none]
            }
        }
    }
 };
 \draw[<->](o) to node [above, midway] {$p_{2:8}$}(n);
\end{tikzpicture}
\caption{Suffix decomposition (eg. $p_{2:8}$) into good polynomials}
\label{fig:suffix_decomp}
\end{subfigure}%
~ 
\begin{subfigure}[b]{0.5\linewidth}
\begin{tikzpicture}
[font=\small, level/.style={sibling distance=40mm/#1}, align=center ]
\node [circle,draw=gray,fill=gray] (z){}
  child {node [circle,draw=red, fill=red] (a){} node[circle,draw=black, scale=1.7, line width=0.6mm] {}
    child {node [circle,draw=red, fill=red]{} 
        child{node[circle,draw=red, fill=red] (b){}
            child { node[above=4mm] (o){$\sigma(1)$} edge from parent[draw=none]
                }
        }
        child{ node[circle,draw=blue, fill=blue] (c){}
        }
    }
    child {node [circle,draw=blue, fill=blue]{}
        child{ node[circle,draw=red, fill=red] (d){}
        }
        child{ node[circle,draw=blue, fill=blue] (e){}
        }
    }
  }
 child {node [circle,draw=blue, fill=blue] (g) {}
    child {node[circle,draw=red, fill=red] {} node[circle,draw=black, scale=1.7, line width=0.6mm] {}
        child{ node[circle,draw=red, fill=red] (d){}
        }
        child{ node[circle,draw=blue, fill=blue] (e){}
            child { node [above=4mm] (n){$\sigma(6)$} 
        edge from parent[draw=none]
            }
        }
    }
    child {node[circle,draw=blue, fill=blue] (r){}
        child{ node[circle,draw=red, fill=red] (d){}
        }
        child{ node[circle,draw=blue, fill=blue] (e){}
        }
    }
 };
 \draw[<->](o) to node [above, midway] {$p_{1:6}$}(n);
\end{tikzpicture}
\caption{Prefix decomposition (e.g. $p_{1:6}$) into bad polynomials before exchange}
\label{fig:prefix_decomp}
\end{subfigure}
\end{figure}

\paragraph{Prefix polynomials:}

The prefix polynomials $p_{1:s}$ are more challenging to analyze, as the immediate approach of reversing the above construction gives a decomposition into bad polynomials only. 

To this end, consider the binary expansion of $s = 2^{s_1} + 2^{s_2} + \ldots + 2^{s_k}$ such that $s_1 > s_2 > \ldots s_1 \ge 0$. We decompose $p_{1:s}$ into products in the following manner: starting with $\{1, \ldots, s-1\}$, we iteratively partition

\begin{align*}
p_{1:s} &= \badcolor{p_{1:T_1}} \cdot p_{T_1+1:s} &\text{where } T_1 := 2^{s_1}, \\
&= \badcolor{p_{1:T_1}} \cdot \badcolor{p_{T_1 + 1:T_2}} \cdot p_{T_2+1:s}  &\text{where } T_2 := T_1 + 2^{s_2}, \\
&\ldots,
\end{align*}
until we reach $T_k+1:s$, which is the empty interval. Note that this partition results in all bad polynomials. An example of the above decomposition is highlighted in Figure \ref{fig:prefix_decomp}. We can in fact exactly characterize these polynomials. Define the angle recurrence $\alpha_1 = \frac{2^{s_1}}{T} \cdot \frac{\pi}{2}$ and $\alpha_{i+1} = \frac{\pi - \alpha_i}{2^{s_i - s_{i+1}}}$. It can be seen that
\begin{equation}
\label{eqn:prefixdecomp}
    p_{1:s}= \prod_{i=1}^k \Pcal_{2^{s_i}, \alpha_i}.
\end{equation}

To get a tight bound for the norms of these polynomials, we require another innovation from \cite{lebedev1971order}. This innovation can be captured as a \emph{tree exchange property} that allows us to switch two bad polynomials for a good and bad polynomial. Starting with a partition of the roots of the polynomial we want to analyze, applying this tree switching repeatedly allows us to convert a partition with multiple bad polynomials into a set containing only one bad polynomial (and the rest good).

To further elucidate this tree exchange trick, let us establish some notation. We will be manipulating upper bounds for norms of $\Pcal_{n,a}$, the product of which will serve as an upper bound for a prefix or suffix polynomial. Let
\[ \Bcal_{n, \alpha} := \norm{\Pcal_{n,\alpha}}_{L_\infty([-1,1])} = \max_{z \in [-1, 1]} \abs{ \frac{\Tcheb_n(z) - \cos(\alpha)}{\Tcheb_n(\theta) - \cos(\alpha)} }. \]
Note that the denominator of this fraction is positive independent of $z$. In the numerator, the maximum is achieved when $\Tcheb_n(z)$ is either $+1$ or $-1$, depending on the sign of $\cos(\alpha)$. When $\Pcal_{n,\alpha}$ is good, $\cos(\alpha) \le 0$, thus we have
\[
\Bcal_{n, \alpha} = \frac{1 - \cos(\alpha)}{\Tcheb_n(\theta) - \cos(\alpha)} = \frac{2 \sin^2\left(\frac{\alpha}{2}\right)}{\Tcheb_n(\theta) - \cos(\alpha)}.
\]
When $\Pcal_{n,\alpha}$ is bad, $\cos(\alpha) \ge 0$, thus we have
\[
\Bcal_{n, \alpha} = \frac{1 + \cos(\alpha)}{\Tcheb_n(\theta) - \cos(\alpha)} = \frac{2 \cos^2\left(\frac{\alpha}{2}\right)}{\Tcheb_n(\theta) - \cos(\alpha)}.
\]

With this notation, using \eqref{eqn:prefixdecomp}, we get that,
\[
\norm{p_{1:s}}_{L_\infty([-1,1])} \le \prod_{i=1}^k \Bcal_{2^{s_i}, \alpha_i}.
\]

Now, we introduce the key tool which will allow us to handle products of bad polynomials.
\begin{lemma}[Tree Exchange Property] \label{lem:tree-exchange} For any $0 < \alpha < \frac{\pi}{2}$, and integers $n \ge 2$, $r \geq 1$, we have
\[
\badcolor{\Bcal_{nr,\alpha}} \cdot \badcolor{\Bcal_{r,\frac{\pi - \alpha}{n}}} \leq \goodcolor{\Bcal_{nr, \pi-\alpha}} \cdot \badcolor{\Bcal_{r, \frac{\alpha}{n}}}.
\]
\end{lemma}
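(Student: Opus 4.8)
The plan is to unfold the four $\Bcal$ factors into closed form and reduce the statement to a one-variable monotonicity claim. From $\Bcal_{n,\beta}=\max_{z\in[-1,1]}|\Pcal_{n,\beta}(z)|$, the numerator $\Tcheb_n(z)-\cos\beta$ has maximum modulus $1+|\cos\beta|$ over $[-1,1]$ while the denominator $\Tcheb_n(\theta)-\cos\beta>0$ is constant, so $\Bcal_{n,\beta}=\frac{1+|\cos\beta|}{\Tcheb_n(\theta)-\cos\beta}$. Using the composition identity $\Tcheb_{nr}(\theta)=\Tcheb_n(\Tcheb_r(\theta))$, set $B:=\Tcheb_r(\theta)>1$, $A:=\Tcheb_{nr}(\theta)=\Tcheb_n(B)$, $c:=\cos\alpha$, $q:=\cos\tfrac{\alpha}{n}$, $p:=\cos\tfrac{\pi-\alpha}{n}$; since $\theta>1$ forces $A,B>1$ and every angle in play except $\pi-\alpha$ lies in $(0,\tfrac\pi2)$, the factor $1+\cos\alpha$ cancels from both sides and the lemma becomes
\[
(1+p)\,(A+c)\,(B-q)\ \le\ (1+q)\,(A-c)\,(B-p).
\]
Now exploit $\Tcheb_n(q)=\cos(n\cdot\tfrac\alpha n)=c$ and $\Tcheb_n(p)=\cos(\pi-\alpha)=-c$, so that $A-c=\Tcheb_n(B)-\Tcheb_n(q)$ and $A+c=\Tcheb_n(B)-\Tcheb_n(p)$; dividing by $(B-p)(B-q)>0$ rewrites the target as $G(p)\le G(q)$ where $G(x):=\frac{(1+x)(\Tcheb_n(B)-\Tcheb_n(x))}{B-x}$, a positive function on $[\cos\tfrac\pi n,1]$. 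As $\tfrac\alpha n,\tfrac{\pi-\alpha}{n}\in(0,\tfrac\pi n)$ we have $\cos\tfrac\pi n<p<q<1$, so it suffices to prove that $G$ is nondecreasing on $[\cos\tfrac\pi n,1]$. Recognizing this reformulation — that the Tree Exchange Property is just monotonicity of one explicit rational function of Chebyshev polynomials on the principal interval — is, I expect, the main conceptual hurdle, since the trigonometry in the original proof hides it.

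For the monotonicity I would compute $\frac{G'(x)}{G(x)}=\frac1{1+x}+\frac1{B-x}-\frac{\Tcheb_n'(x)}{\Tcheb_n(B)-\Tcheb_n(x)}$, so that $G'\ge0$ is equivalent, via $(B-x)+(1+x)=1+B$, to
\[
\Tcheb_n'(x)\,(1+x)(B-x)\ \le\ (1+B)\,\bigl(\Tcheb_n(B)-\Tcheb_n(x)\bigr),\qquad x\in[\cos\tfrac\pi n,1],\ B>1.
\]
To prove this, fix $x$ and let $H(B):=(1+B)\bigl(\Tcheb_n(B)-\Tcheb_n(x)\bigr)-(1+x)(B-x)\,\Tcheb_n'(x)$. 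Direct differentiation gives $H(x)=0$, $H'(x)=0$, and $H''(B)=2\Tcheb_n'(B)+(1+B)\Tcheb_n''(B)$, which is $\ge0$ for every $B\ge\cos\tfrac\pi n$: on $[1,\infty)$ this is the classical monotonicity and convexity of $\Tcheb_n$ (read off from $\Tcheb_n(\cosh\mu)=\cosh(n\mu)$), and on $[\cos\tfrac\pi n,1]$ the substitution $x=\cos\psi$, $\psi\in[0,\tfrac\pi n]$, gives $\Tcheb_n'(x)=\tfrac{n\sin(n\psi)}{\sin\psi}\ge0$ and $\Tcheb_n''(x)=-n\,g(\psi)/\sin^3\psi$ with $g(\psi)=n\sin\psi\cos(n\psi)-\cos\psi\sin(n\psi)$, where $g(0)=0$ and $g'(\psi)=(1-n^2)\sin\psi\sin(n\psi)\le0$ force $g\le0$ and hence $\Tcheb_n''\ge0$. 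Therefore $H$ is convex on $[x,\infty)$ with a double zero at $x$, so $H\ge0$ there; in particular $H\bigl(\Tcheb_r(\theta)\bigr)\ge0$, which is exactly the desired inequality.

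The one spot where genuine care is required is this last inequality: it is tight as $B\downarrow1$, where after the $x=\cos\psi$ substitution it degenerates to $u\le\tan u$ with $u=\tfrac{n\psi}{2}$, so any argument that discards constants too early will fail. Packaging the estimate through the convex function $H$ with a double zero at $x$ is precisely what retains this tightness while keeping the computation short; everything else — the closed form for $\Bcal_{n,\beta}$, the cancellation, and the logarithmic-derivative step — is routine.
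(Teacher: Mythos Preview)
Your proof is correct and takes a genuinely different route from the paper's.

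The paper expands the four $\Bcal$ factors via half-angle formulas, cross-multiplies, and then splits the resulting inequality into three separate pieces, each of which is dispatched by hand with trigonometric estimates: monotonicity of $\sin x/x$ on $(0,\pi/4]$, the bound $\tan x\le 4x/\pi$ on $[0,\pi/4]$, and the growth estimate $\Delta_{nr}(\theta)/\Delta_r(\theta)\ge n^2$ coming from $\Tcheb_n(1+\delta)\ge 1+n^2\delta$. Your argument, after the same opening simplification, makes the crucial observation that $\Tcheb_n(q)=c$ and $\Tcheb_n(p)=-c$, so the entire statement collapses to $G(p)\le G(q)$ for the single rational function $G(x)=\frac{(1+x)(\Tcheb_n(B)-\Tcheb_n(x))}{B-x}$. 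Proving this by showing the auxiliary polynomial $H(B)$ is convex with a double zero at $B=x$ is clean: the convexity reduces to $\Tcheb_n'\ge 0$ and $\Tcheb_n''\ge 0$ on $[\cos\frac{\pi}{n},\infty)$, which you verify with the $x=\cos\psi$ substitution and the one-line computation $g'(\psi)=(1-n^2)\sin\psi\sin(n\psi)\le 0$.

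What each approach buys: the paper's argument is entirely elementary trigonometry and never names the Chebyshev derivative, but it requires three separate case-specific inequalities and the numerical constant in the $\Delta_{nr}/\Delta_r$ bound. Your argument is more structural: it isolates the single mechanism (monotonicity and convexity of $\Tcheb_n$ on the principal interval) driving the lemma, avoids all the ad~hoc $\sin x/x$ and $\tan x$ estimates, and makes transparent why the inequality is tight as $B\downarrow 1$. Your double-zero packaging of $H$ is exactly what preserves this tightness without introducing slack.
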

If we view the arguments as indexing the corresponding subtrees in our construction, then the right hand side can be viewed as exchanging a (bad) $\badcolor{\Pcal_{nr,\alpha}}$ with its (good) sibling $\goodcolor{\Pcal_{nr,\pi - \alpha}}$ at the cost of exchanging $\badcolor{\Pcal_{r,\frac{\pi - \alpha}{n}}}$ with the leftmost degree-$r$ polynomial $\badcolor{\Pcal_{r,\frac{\alpha}{n}}}$ under $\badcolor{\Pcal_{nr,\alpha}}$ (see Figure \ref{fig:exchanges}). 

\begin{proof}
Using the derived bounds on $\Bcal$, our claim reduces to proving the inequality
\begin{align*}
\frac{\cos^2\left(\frac{\alpha}{2}\right)}{\Tcheb_{nr}(\theta) - \cos(\alpha)} \cdot  \frac{ \cos^2\left(\frac{\pi - \alpha}{2n}\right)}{\Tcheb_{r}(\theta) - \cos\left(\frac{\pi - \alpha}{n}\right)} &\le \frac{ \cos^2\left(\frac{\alpha}{2}\right)}{\Tcheb_{nr}(\theta) + \cos(\alpha)} \cdot \frac{\cos^2\left(\frac{\alpha}{2n}\right)}{\Tcheb_{r}(\theta) - \cos\left(\frac{\alpha}{n}\right)},
\end{align*}
which is equivalent to the following (since all terms are positive). 
\begin{align*}
\frac{\Tcheb_{nr}(\theta) - 1 + 2 \sin^2\left(\frac{\alpha}{2}\right)}{\cos^2\left(\frac{\alpha}{2}\right)} \cdot  \frac{\Tcheb_{r}(\theta) - 1 + 2\sin^2\left(\frac{\pi - \alpha}{2n}\right)}{\cos^2\left(\frac{\pi - \alpha}{2n}\right)} &\ge \frac{\Tcheb_{nr}(\theta) - 1 + 2\cos^2\left(\frac{\alpha}{2}\right)}{ \cos^2\left(\frac{\alpha}{2}\right)} \cdot \frac{\Tcheb_{r}(\theta) - 1 + 2 \sin^2\left(\frac{\alpha}{2n}\right)}{\cos^2\left(\frac{\alpha}{2n}\right)}.
\end{align*}
For ease of exposition, let us set $\Delta_{i}(\theta) = \Tcheb_{i}(\theta) - 1$. Note that by definition of $\theta$, we have $\Delta_i(\theta) \ge 0$. Observe that the second inequality holds if the following three inequalities are true,
\begin{align}
    &\frac{\Delta_{nr}(\theta)\Delta_{r}(\theta)}{\cos^2\left(\frac{\alpha}{2}\right)}\left(\frac{1}{\cos^2\left(\frac{\pi - \alpha}{2n}\right)} - \frac{1}{\cos^2\left(\frac{\alpha}{2n}\right)}\right) \ge 0 \label{eq:prod1}\\
    &\tan^2\left(\frac{\alpha}{2}\right)\tan^2\left(\frac{\pi - \alpha}{2n}\right) - \tan^2\left(\frac{\alpha}{2n}\right) \ge 0\label{eq:prod2}\\
    &\Delta_{nr}(\theta)\left(\frac{\sin^2\left(\frac{\pi - \alpha}{2n}\right) }{\cos^2\left(\frac{\pi - \alpha}{2n}\right)\cos^2\left(\frac{\alpha}{2}\right)} - \frac{ \sin^2\left(\frac{\alpha}{2n}\right)}{ \cos^2\left(\frac{\alpha}{2n}\right)\cos^2\left(\frac{\alpha}{2}\right)}\right)+   \Delta_{r}(\theta)\left(\frac{ \sin^2\left(\frac{\alpha}{2}\right)}{\cos^2\left(\frac{\alpha}{2}\right)\cos^2\left(\frac{\pi - \alpha}{2n}\right)} -  \frac{1}{\cos^2\left(\frac{\alpha}{2n}\right)}\right) \ge 0 \label{eq:prod3}
\end{align}
Note that Equation \eqref{eq:prod1} follows from the fact that $\cos^2\left(\frac{\alpha}{2n}\right) \ge \cos^2\left(\frac{\pi - \alpha}{2n}\right) > 1/2$ since  $\frac{\pi}{4} \ge \frac{\pi - \alpha}{2n} \ge \frac{\alpha}{2n} \ge 0$.

To prove Equation \eqref{eq:prod2}, observe that it is equivalent to proving
\begin{align*}
 \sin\left(\frac{\alpha}{2}\right) \sin\left(\frac{\pi - \alpha}{2n}\right) \cos\left(\frac{\alpha}{2n}\right) - \cos\left(\frac{\alpha}{2}\right) \cos\left(\frac{\pi - \alpha}{2n}\right) \sin\left(\frac{\alpha}{2n}\right) \ge 0  
\end{align*}
Further simplifying the left hand side, we get
\begin{align*}
    &2\left(\sin\left(\frac{\alpha}{2}\right) \sin\left(\frac{\pi - \alpha}{2n}\right) \cos\left(\frac{\alpha}{2n}\right) - \cos\left(\frac{\alpha}{2}\right) \cos\left(\frac{\pi - \alpha}{2n}\right) \sin\left(\frac{\alpha}{2n}\right) \right)\\
    &= \sin\left(\frac{\alpha}{2}\right)\left(\sin\left(\frac{\pi}{2n}\right) + \sin\left(\frac{\pi - 2\alpha}{2n}\right)\right) - \cos\left(\frac{\alpha}{2}\right)\left(\sin\left(\frac{\pi}{2n}\right) - \sin\left(\frac{\pi - 2\alpha}{2n}\right)\right) \\
    &= \sin\left(\frac{\pi}{2n}\right) \left(\sin\left(\frac{\alpha}{2}\right) - \cos\left(\frac{\alpha}{2}\right)\right) + \sin\left(\frac{\pi - 2\alpha}{2n}\right) \left(\sin\left(\frac{\alpha}{2}\right) + \cos\left(\frac{\alpha}{2}\right)\right)\\
    &= \sqrt{2}\left(\sin\left(\frac{\pi - 2\alpha}{2n}\right)\cos\left(\frac{\pi - 2\alpha}{4}\right)-\sin\left(\frac{\pi}{2n}\right)\sin\left(\frac{\pi - 2\alpha}{4}\right)\right).
\end{align*}
Observe that $\frac{\sin(x)}{x}$ is a decreasing function for $0 < x \le \frac{\pi}{4}$, therefore we have $\frac{\sin\left(\frac{\pi - 2\alpha}{2n}\right)}{\frac{\pi - 2\alpha}{2n}} \ge \frac{\sin\left(\frac{\pi}{2n}\right)}{\frac{\pi}{2n}}$. Substituting this back, we get
\begin{align*}
    &\sqrt{2}\left(\sin\left(\frac{\pi - 2\alpha}{2n}\right)\cos\left(\frac{\pi - 2\alpha}{4}\right)-\sin\left(\frac{\pi}{2n}\right)\sin\left(\frac{\pi - 2\alpha}{4}\right)\right)\\
    &\ge \sqrt{2}\sin\left(\frac{\pi}{2n}\right)\left(\frac{\pi - 2 \alpha}{\pi} \cos\left(\frac{\pi - 2\alpha}{4}\right) - \sin\left(\frac{\pi - 2\alpha}{4}\right)\right) \ge 0.
\end{align*}
Here the last inequality follows from observing that $\tan(x) \le \frac{4x}{\pi}$ for $0 \le x \le \frac{\pi}{4}$. This proves Equation \eqref{eq:prod2}. Now it remains to prove Equation \eqref{eq:prod3}. Further simplifying the equation, it is equivalent to,
\begin{align*}
    \frac{\Delta_{nr}(\theta)}{\Delta_{r}(\theta)}\left(\sin^2\left(\frac{\pi - \alpha}{2n}\right)\cos^2\left(\frac{\alpha}{2n}\right) - \sin^2\left(\frac{\alpha}{2n}\right)\cos^2\left(\frac{\pi - \alpha}{2n}\right) \right) +  \sin^2\left(\frac{\alpha}{2}\right)\cos^2\left(\frac{\alpha}{2n}\right)-  \cos^2\left(\frac{\alpha}{2}\right)\cos^2\left(\frac{\pi - \alpha}{2n}\right)\ge 0
\end{align*}
Let us prove this inequality. We have
\begin{align*}
    &    \frac{\Delta_{nr}(\theta)}{\Delta_{r}(\theta)}\left(\sin^2\left(\frac{\pi - \alpha}{2n}\right)\cos^2\left(\frac{\alpha}{2n}\right) - \sin^2\left(\frac{\alpha}{2n}\right)\cos^2\left(\frac{\pi - \alpha}{2n}\right) \right) +  \sin^2\left(\frac{\alpha}{2}\right)\cos^2\left(\frac{\alpha}{2n}\right)-  \cos^2\left(\frac{\alpha}{2}\right)\cos^2\left(\frac{\pi - \alpha}{2n}\right)\\
    &  =  \frac{\Delta_{nr}(\theta)}{\Delta_{r}(\theta)}\left(\sin^2\left(\frac{\pi - \alpha}{2n}\right) - \sin^2\left(\frac{\alpha}{2n}\right) \right) + \cos^2\left(\frac{\alpha}{2}\right)\left(\cos^2\left(\frac{\alpha}{2n}\right)- \cos^2\left(\frac{\pi - \alpha}{2n}\right)\right) +  \left(\sin^2\left(\frac{\alpha}{2}\right) - \cos^2\left(\frac{\alpha}{2}\right)\right)\cos^2\left(\frac{\alpha}{2n}\right)\\
    & = \frac{\Delta_{nr}(\theta)}{\Delta_{r}(\theta)}\sin\left(\frac{\pi - 2\alpha}{2n}\right)\sin\left(\frac{\pi}{2n}\right) +  \cos^2\left(\frac{\alpha}{2}\right)\left(\cos^2\left(\frac{\alpha}{2n}\right)- \cos^2\left(\frac{\pi - \alpha}{2n}\right)\right) -  \cos(\alpha)\cos^2\left(\frac{\alpha}{2n}\right)\\
    &\ge \frac{\Delta_{nr}(\theta)}{\Delta_{r}(\theta)}\sin\left(\frac{\pi - 2\alpha}{2n}\right)\sin\left(\frac{\pi}{2n}\right)  -  \sin\left(\frac{\pi - 2\alpha}{2}\right).
\end{align*}
Here the last inequality follows from the fact that $\cos\left(\frac{\alpha}{2n}\right) \ge \cos\left(\frac{\pi - \alpha}{2n}\right)$ since $\alpha \le \frac{\pi}{2}$ and $\cos(\alpha)\cos^2\left(\frac{\alpha}{2n}\right) \le \cos(\alpha) = \sin\left(\frac{\pi - 2\alpha}{2}\right)$.

By the composition property and part (ii) of Lemma~\ref{lem:cheb-expand}, we have
\begin{align*}
\frac{\Delta_{nr}(\theta)}{\Delta_{r}(\theta)} =
\frac{\Tcheb_n( \Tcheb_{r}(\theta) ) - 1}{\Tcheb_{r}(\theta) - 1} \leq
\frac{1 + n^2 ( \Tcheb_{r}(\theta) - 1 ) - 1}{\Tcheb_{r}(\theta) - 1} \leq n^2.
\end{align*}
We also know from before that $\frac{\sin(x)}{x}$ is decreasing in $0 < x \le \frac{\pi}{4}$, therefore, we have $\frac{\sin\left(\frac{\pi}{2n}\right)}{\frac{\pi}{2n}} \ge \frac{\sin\left(\frac{\pi}{4}\right)}{\frac{\pi}{4}}$ and $\frac{\sin\left(\frac{\pi - 2\alpha}{2n}\right)}{\frac{\pi - 2\alpha}{2n}} \ge \frac{\sin\left(\frac{\pi - 2\alpha}{2}\right)}{\frac{\pi - 2\alpha}{2}}$. Combining these and substituting back, we get
\begin{align*}
    \frac{\Delta_{nr}(\theta)}{\Delta_{r}(\theta)}\sin\left(\frac{\pi - 2\alpha}{2n}\right)\sin\left(\frac{\pi}{2n}\right)  -  \sin\left(\frac{\pi - 2\alpha}{2}\right) &\ge n^2 \cdot  \frac{\sin\left(\frac{\pi - 2\alpha}{2}\right)}{n} \cdot \frac{2\sin\left(\frac{\pi}{4}\right)}{n} - \sin\left(\frac{\pi - 2\alpha}{2}\right)\\
    &= (\sqrt{2} - 1)\sin\left(\frac{\pi - 2\alpha}{2}\right) \ge 0.
\end{align*}
This completes the proof of the tree exchange lemma.
\end{proof}

\subsection{Completing the main theorems}

\begin{manualtheorem}{\ref{thm:lebedev-main}}[Prefix and suffix bounds]
For a fractal Chebyshev schedule with $m,M,T$, and all $1 \leq s, t \leq T$:
\begin{enumerate}
    \item[(i)] $\norm{p_{1:t}}_{L_\infty([m,M])} \leq \pa{ \frac{M}{m} - 1 } \prod_{j \in \bits'(t)} \frac{2}{1+\Tcheb_{2^j}(\theta)}$;
    \item[(ii)] $\norm{p_{s:T}}_{L_\infty([m,M])} \leq \prod_{j \in \bits(T+1-s)} \frac{2}{1+\Tcheb_{2^j}(\theta)}$,
\end{enumerate}
where $\bits(n)$ denotes the indices in the binary expansion of $n$, formally  defined as the unique sequence $\{j_1, j_2 \ldots \}$ with $j_1 > j_2 > j_3 \ldots$ such that $n = \sum_{j_k \in \bits(n)} 2^{j_k}$. Further we define $\bits'(n) := \bits(n) \setminus j_1 $. For example, when $n = 6 = (110)_2$, $\bits(n) = \{2,1\}$, and $\bits'(n) = \{1\}$.
\end{manualtheorem}

Starting with a prefix decomposition and repeatedly applying Lemma~\ref{lem:tree-exchange}:
\begin{align*}
    \norm{p_{1:s}}_{L_\infty([-1,1])} &= \max_{-1 \le z \le 1}\left| \prod_{i=1}^k \badcolor{\Pcal_{2^{s_i}, \alpha_i}}(z)\right| \tag{Using \eqref{eqn:prefixdecomp}}\\
    & \le  \prod_{i=1}^k \max_{-1 \le z \le 1}\left| \badcolor{\Pcal_{2^{s_i}, \alpha_i}}(z)\right|\\
    & = \prod_{i=1}^k \badcolor{\Bcal_{2^{s_i}, \alpha_i}}\\
    &= \left(\prod_{i=1}^{k-2}\badcolor{\Bcal_{2^{s_i}, \alpha_i}}\right) \cdot \badcolor{\Bcal_{2^{s_{k-1}}, \alpha_{k-1}}} \cdot \badcolor{\Bcal_{2^{s_{k}}, \frac{\pi - \alpha_{k-1}}{2^{s_{k-1} - s_{k}}}}}\\
    &\le \goodcolor{\Bcal_{2^{s_{k-1}}, \pi - \alpha_{k-1}}} \cdot \left( \prod_{i=1}^{k-2}\badcolor{\Bcal_{2^{s_i}, \alpha_{k-1}}} \right)\cdot  \badcolor{\Bcal_{2^{s_{k}},\frac{ \alpha_{k-1}}{2^{s_{k-1} - s_{k}}}}} \tag{using Lemma \ref{lem:tree-exchange}}\\
    &= \goodcolor{\Bcal_{2^{s_{k-1}}, \pi - \alpha_{k-1}}} \cdot \left( \prod_{i=1}^{k-2}\badcolor{\Bcal_{2^{s_i}, \alpha_{k-1}}} \right) \cdot \badcolor{\Bcal_{2^{s_{k}},\frac{ \pi - \alpha_{k-2}}{2^{s_{k-2} - s_{k}}}}}  \tag{using recurrence angle relation}\\
    &= \goodcolor{\Bcal_{2^{s_{k-1}}, \pi - \alpha_{k-1}}} \cdot \left( \prod_{i=1}^{k-3}\badcolor{\Bcal_{2^{s_i}, \alpha_{k-1}}} \right) \cdot \badcolor{\Bcal_{2^{s_{k-2}}, \alpha_{k-2}}} \cdot \badcolor{\Bcal_{2^{s_{k}},\frac{ \pi - \alpha_{k-2}}{2^{s_{k-2} - s_{k}}}}} \\
    &= \goodcolor{\Bcal_{2^{s_{k-1}}, \pi - \alpha_{k-1}}}\cdot \goodcolor{\Bcal_{2^{s_{k-2}}, \pi - \alpha_{k-2}}} \left( \prod_{i=1}^{k-3}\badcolor{\Bcal_{2^{s_i}, \alpha_{k-1}}} \right) \cdot \badcolor{\Bcal_{2^{s_{k}},\frac{ \alpha_{k-2}}{2^{s_{k-2} - s_{k}}}}}  \tag{using Lemma \ref{lem:tree-exchange}}\\
    &\qquad\vdots  \tag{repeating this switching argument iteratively}\\
    &\le   \left(\prod_{i=1}^{k-1}\goodcolor{\Bcal_{2^{s_{i}}, \pi - \alpha_{i}}} \right)\cdot \badcolor{\Bcal_{2^{s_{k}}, \frac{\alpha_{1}}{2^{s_{1} - s_{k}}}}}\\
    &=   \left(\prod_{i=1}^{k-1}\goodcolor{\Bcal_{2^{s_{i}}, \pi - \alpha_{i}}}\right) \cdot \badcolor{\Bcal_{2^{s_{k}}, \frac{2^{s_k}\pi}{2T}}}.
\end{align*}

\begin{figure}
\captionsetup[subfigure]{justification=centering}
\centering
\begin{subfigure}{0.3\linewidth}
\begin{tikzpicture}
[scale=0.65, font=\small, level/.style={sibling distance=37mm/#1}, align=center ]
\node [circle,draw=gray,fill=gray] (a){}
  child {node [circle,draw=red, fill=red] (b){} node[circle,draw=black, scale=1.4, line width=0.3mm] {}
    child {node [circle,draw=red, fill=red] (c){} 
        child{node[circle,draw=red, fill=red] (d){}
        }
        child{ node[circle,draw=blue, fill=blue] (e){}
        }
    }
    child {node [circle,draw=blue, fill=blue] (f){}
        child{ node[circle,draw=red, fill=red] (g){}
        }
        child{ node[circle,draw=blue, fill=blue] (h){}
        }
    }
  }
 child {node [circle,draw=blue, fill=blue] (i) {}
    child {node[circle,draw=red, fill=red] (j){} node[circle,draw=black, scale=1.4, line width=0.3mm] {}
        child{ node[circle,draw=red, fill=red] (k){}
        }
        child{ node[circle,draw=blue, fill=blue] (l){}
        }
    }
    child {node[circle,draw=blue, fill=blue] (m){}
        child{ node[circle,draw=red, fill=red] (n){} node[circle,draw=black, scale=1.4, line width=0.3mm] {}
        }
        child{ node[circle,draw=blue, fill=blue] (o){}
        }
    }
 };
 
 \draw[->, line width=0.3mm](j) to (m);
 \draw[->, line width=0.3mm](n.south) to [out=210,in=330] (k.south);

\end{tikzpicture}
\end{subfigure}%
~$\quad$
\begin{subfigure}{0.3\linewidth}
\begin{tikzpicture}
[scale=0.65, font=\small, level/.style={sibling distance=37mm/#1}, align=center ]
\node [circle,draw=gray,fill=gray] (a){}
  child {node [circle,draw=red, fill=red] (b){} node[circle,draw=black, scale=1.4, line width=0.3mm] {}
    child {node [circle,draw=red, fill=red] (c){} 
        child{node[circle,draw=red, fill=red] (d){}
        }
        child{ node[circle,draw=blue, fill=blue] (e){}
        }
    }
    child {node [circle,draw=blue, fill=blue] (f){}
        child{ node[circle,draw=red, fill=red] (g){}
        }
        child{ node[circle,draw=blue, fill=blue] (h){}
        }
    }
  }
 child {node [circle,draw=blue, fill=blue] (i) {} 
    child {node[circle,draw=red, fill=red] (j){} 
        child{ node[circle,draw=red, fill=red] (k){} node[circle,draw=black, scale=1.4, line width=0.3mm] {}
        }
        child{ node[circle,draw=blue, fill=blue] (l){}
        }
    }
    child {node[circle,draw=blue, fill=blue] (m){} node[circle,draw=black, scale=1.4, line width=0.3mm] {}
        child{ node[circle,draw=red, fill=red] (n){} 
        }
        child{ node[circle,draw=blue, fill=blue] (o){}
        }
    }
 };
 \draw[->, line width=0.3mm](b) to (i);
 \draw[->, line width=0.3mm](k.south) to [out=210,in=330] (d.south);
\end{tikzpicture}
\end{subfigure}~$\quad$
\begin{subfigure}{0.3\linewidth}
\begin{tikzpicture}
[scale=0.65, font=\small, level/.style={sibling distance=35mm/#1}, align=center ]
\node [circle,draw=gray,fill=gray] (a){}
  child {node [circle,draw=red, fill=red] (b){} 
    child {node [circle,draw=red, fill=red] (c){} 
        child{node[circle,draw=red, fill=red] (d){} node[circle,draw=black, scale=1.4, line width=0.3mm] {}
        }
        child{ node[circle,draw=blue, fill=blue] (e){}
        }
    }
    child {node [circle,draw=blue, fill=blue] (f){}
        child{ node[circle,draw=red, fill=red] (g){}
        }
        child{ node[circle,draw=blue, fill=blue] (h){}
        }
    }
  }
 child {node [circle,draw=blue, fill=blue] (i) {} node[circle,draw=black, scale=1.4, line width=0.3mm] {}
    child {node[circle,draw=red, fill=red] (j){} 
        child{ node[circle,draw=red, fill=red] (k){} 
        }
        child{ node[circle,draw=blue, fill=blue] (l){}
        }
    }
    child {node[circle,draw=blue, fill=blue] (m){} node[circle,draw=black, scale=1.4, line width=0.3mm] {}
        child{ node[circle,draw=red, fill=red] (n){} 
        }
        child{ node[circle,draw=blue, fill=blue] (o){}
        }
    }
 };
\end{tikzpicture}
\end{subfigure}
\caption{An example of successive exchanges to fix a prefix polynomial ($p_{1:7}$). In every exchange, the product of two bad nodes are exchanged with a product of a good node and a bad node. Eventually, one is left with exactly one bad node and the remaining good nodes.}
\label{fig:exchanges}
\end{figure}

Note that this repeated exchange leads to only one bad polynomial and rest all good polynomials. Using (ii) and (iii) of Lemma \ref{lem:cheb-skew}, we get
\begin{align*}
    \norm{p_{1:s}}_{L_\infty([-1,1])} &\le \prod_{i=1}^{k-1}\frac{2}{1 + \Tcheb_{2^{s_{i}}}(\theta)} \cdot \frac{2}{4^{s_{k}}(\theta - 1)}\\
   & \le \frac{\frac{M}{m} - 1}{4^{s_k}} \cdot \prod_{i=1}^{k-1}\frac{2}{1 + \Tcheb_{2^{s_{i}}}(\theta)} \tag{using the definition of $\theta$}
\end{align*}

\section{Proofs for Section~\ref{sec:quadratic}}
\label{sec:appendix-proofs}

\subsection{Basic facts about the schedule}
\label{subsec:appendix-basics}

We provide full statements and proofs of Proposition~\ref{prop:cheb-basics}:
\begin{manualproposition}{\ref{prop:cheb-basics}}
For all $m, M, T$, the fractal Chebyshev step sizes $\gamma_t^{-1}$ satisfy the following:
\begin{enumerate}
    \item[(i)] $\frac{1}{M} < \gamma_t^{-1} < \frac{1}{m}$.
    \item[(ii)] The number of step sizes greater than $\frac{2}{M}$ is $\pa{ \frac{1}{2} - \eps }T$, where $0 \leq \eps \leq O(1/\kappahat)$ as $\kappahat \rightarrow \infty$.
    \item[(iii)] For $t \leq \frac{T}{2}$, we have $\gamma_t^{-1} < \frac{1}{m + \frac{2(M-m)t^2}{T^2}}$. Further,
    
    $\frac{1}{T}\sum_{t=1}^T \gamma_t^{-1} =  \frac{\tanh\pa{ T \acosh\left(\frac{2m}{M-m}\right)}}{\sqrt{Mm}} < \frac{1}{\sqrt{Mm}}.$
\end{enumerate}
\end{manualproposition}
(i) is obvious from the construction of $\gamma_t$, keeping in the mind the fact that $-1 < \cos(x) < 1$ for $x \in (0,\pi)$.
\begin{proof}[Proof of (ii).]
It is obvious that $\gamma_t^{-1} \geq 2/M$ for at most half of the indices $t = 1, \ldots, T$: the nodes $\gamma_t$ are symmetric with respect to reflection around the axis $\frac{M+m}{2} < \frac{M}{2}$, so at least half of them are greater than or equal to $\frac{M}{2}$. Now, let us establish the lower bound on the number of these steps. We have $\gamma_t^{-1} \leq 2/M$ if and only if
\begin{align*}
&\gamma_t = \frac{M+m}{2} - \frac{M-m}{2} \cos \frac{(t - 1/2) \pi}{T} \geq \frac{M}{2} \\
\Leftrightarrow \qquad &\cos \frac{(t - 1/2) \pi}{T} \leq \frac{m}{M-m} \\
\Leftrightarrow \qquad &t \geq \frac{1}{2} + \frac{\arccos(\frac{m}{M-m})}{\pi} \cdot T.
\end{align*}
Call the right hand side the \emph{threshold} $t^*$. Then, using the fact that $\arccos(x) \leq \frac{\pi}{2} - x$ for $x \in [0, 1]$, we have
\[ t^* \leq \frac{T}{2} + \frac{T}{\pi} \cdot \frac{m}{M-m} + \frac{1}{2}, \]
as required.
\end{proof}

\begin{proof}[Proof of (iii).]
The first statement follows from the upper bound $\cos \alpha \leq 1 - (\frac{2\alpha}{\pi})^2$, which is valid for $\alpha \leq \frac{\pi}{2}$:
\begin{align*}
\gamma_t^{-1} &= \frac{1}{\frac{M+m}{2} - \frac{M-m}{2} \cos\pa{ \frac{t-\frac{1}{2}}{T}\pi }}
< \frac{1}{\frac{M+m}{2} - \frac{M-m}{2} \cos\pa{ \frac{t}{T}\pi }}
\leq \frac{1}{\frac{M+m}{2} - \frac{M-m}{2} \pa{1 - \pa{\frac{2t}{T}}^2} },
\end{align*}
from which the claim follows.

A cheap version of the second statement, which is tight up to a constant, can be proven by viewing the summation above as a Riemann sum for a continuous integral. Interestingly, this also provides a way to prove bounds on all moments: for example, using the identity
\[ \int_{0}^1 \frac{1}{(c + x^2)^2} dx = \frac{1}{2} \pa{ \frac{\mathrm{arccot}(\sqrt{c})}{c^{3/2}} + \frac{1}{c^2+c} } \leq O(1/c^{3/2}), \]
one can verify that the root-mean-square $\sqrt{\frac{1}{T}\sum_{t=1}^T \gamma_t^{-2}}$ is bounded by $O(\kappahat^{3/4}/M)$.

The exact statement in (iii) is subtler. We suspect that this is known in the literature on Chebyshev spectral methods, but could not find a reference. Consider $q(\lambda) = x^T p(1/\lambda)$, which is $p(\lambda)$ with its coefficients reversed. Then, the sum of reciprocal roots of $p(\lambda)$ we want is the sum of roots of $q(\lambda)$. By Vi\`ete's formula, this is $-a_{T-1}/a_T$, where $a_i$ is the $\lambda^i$ coefficient of $q(\lambda)$. By the coefficient reversal, $a_T$ is the constant term of $p(\lambda)$, and $a_{T-1}$ is its linear term. Thus, we have:
\[\sum_{t=1}^T \gamma_t^{-1} = \frac{\frac{d}{d\lambda} p(\lambda) |_{\lambda = 0}}{p(0)} = \frac{2}{M-m}\frac{\frac{d}{dz}\Tcheb_{T}(z)|_{z=\theta}}{\Tcheb_T(\theta)}. \]
To reason about the derivatives, we need to introduce the Chebyshev polynomials of the \emph{second} kind, $\Ucheb_n(z)$. They can be defined as the unique polynomial satisfying
\[ \Ucheb_n( \cos \alpha ) \sin \alpha = \sin( (n+1)\alpha ).\]
From this, the cosine characterization of $\Tcheb_n(z)$, and the trigonometric substitution $z = \cos \alpha$, we have
\[ \frac{d}{dz} \Tcheb_n(z) = n \, \Ucheb_{n-1}(z). \]

By definition we have that 

\[\Ucheb_{n-1}(z) = \frac{(z + \sqrt{z^2 - 1})^n - (z - \sqrt{z^2 - 1})^n}{2 \sqrt{z^2 - 1}}, \quad \Tcheb_{n-1}(z) = \frac{(z + \sqrt{z^2 - 1})^n + (z - \sqrt{z^2 - 1})^n}{2}.\]

Therefore,
\[\frac{\Ucheb_{n-1}(z)}{\Tcheb_{n}(z)} = \frac{(z + \sqrt{z^2 - 1})^n - (z - \sqrt{z^2 - 1})^n}{(z + \sqrt{z^2 - 1})^n + (z - \sqrt{z^2 - 1})^n } \cdot \frac{1}{\sqrt{z^2 - 1}} = \frac{\tanh(n\,\acosh(z))}{\sqrt{z^2 - 1}}. \]

Substituting these back, we get that

\[\sum_{t=1}^T \gamma_t^{-1} =  \frac{2}{M-m}\frac{\frac{d}{dz}\Tcheb_{T}(z)|_{z=\theta}}{\Tcheb_T(\theta)} = \frac{2T\tanh(T\,\acosh(\theta))}{(M-m)\sqrt{\theta^2 - 1}}
= \frac{T\tanh(T\,\acosh(\theta))}{\sqrt{Mm}}.\]

\end{proof}

Finally, we prove the simple observations about the fractal permutation:
\begin{manualproposition}{\ref{prop:cheb-fractal-basis}}
For all $m,M,T$ and $0 \leq i \leq \log_2 T$:
\begin{enumerate}
    \item[(i)] The largest $\frac{T}{2^i}$ steps $\eta_t$ in the fractal Chebyshev schedule occur when $t = 1 + 2^i(\tau-1)$, with $\tau = 1, \ldots, \frac{T}{2^i}$. 
    \item[(ii)] The subsampled sequence $\{\eta_{1+2^i(\tau-1)}\}$ has the same ordering as the fractal permutation of the same length:
    \[ \eta_{1+2^i\tau} = \gamma^{-1}_{1+2^i(\tau' - 1)}, \quad \text{ where } \tau' = \sigma_{T/2^i}(\tau). \]
\end{enumerate}
\end{manualproposition}
\begin{proof}
(i) and (ii) are true by the recursive interlacing construction. Let $T_i := \frac{T}{2^i}$. The interlacing step makes it true that the indices in $\sigma_{T_i}$ are every other element of $\sigma_{2T_i}$, the indices in $\sigma_{2T_i}$ are every other element in $\sigma_{4T_i}$, and so forth. Note that $\gamma_t^{-1}$ are decreasing in $t$.
\end{proof}

\subsection{Infix polynomial bounds}
\label{subsec:appendix-infix-proof}

\begin{manualtheorem}{\ref{thm:infix-bound}}
For the fractal Chebyshev schedule with $m,M,T$, and all $1 \leq s \leq t \leq T$:
\[     \norm{p_{s:t}}_{[m,M]} \le \left(\frac{M}{m} - 1\right)  \cdot \prod_{i \in \bits(\zeta+1-s)}  \frac{2}{1 + \Tcheb_{2^{i}}(\theta)} \cdot \prod_{i \in \bits^\prime(t-\zeta)}\frac{2}{1 + \Tcheb_{2^{i}}(\theta)}, \]
where $\zeta$ is the index such that $s-1 \leq \zeta \leq t$ and $\mathrm{lca}(\zeta, \zeta+1)$ is maximized, where
\[ \mathrm{lca}(a,b) := \max \{ j : j \in \bits(a) \text{ xor } j \in \bits(b) \} \]
is the index of the most significant bit at which the binary decompositions of $a,b$ differ.
\end{manualtheorem}
\begin{figure}
\centering
\begin{tikzpicture}
[font=\small,
level 1/.style ={sibling distance=80mm},
    level 2/.style ={sibling distance=40mm},
    level 3/.style ={sibling distance=20mm},
    level 4/.style ={sibling distance=10mm},align=center ]
\node [circle,draw=gray,fill=gray] {}
  child {node [circle,draw=red, fill=red] (a){}
    node[circle,draw=black, scale=1.7, line width=0.3mm] {}
    child {node [circle,draw=red, fill=red]{}
        child{node[circle,draw=red, fill=red] {}
            child {node[circle,draw=red, fill=red] {}
            }
            child {node[circle,draw=blue, fill=blue] {}
                child { node[above=4mm] (o){$\sigma(2)$} edge from parent[draw=none]
                }
            }
        }
        child{ node[circle,draw=blue, fill=blue] {}
            child {node[circle,draw=red, fill=red] {}
            }
            child {node[circle,draw=blue, fill=blue] {}
            }
        }
    }
    child {node [circle,draw=blue, fill=blue]{} 
        child{ node[circle,draw=red, fill=red] {}
            child {node[circle,draw=red, fill=red] {}
            }
            child {node[circle,draw=blue, fill=blue] {}
            }
        }
        child{ node[circle,draw=blue, fill=blue] {}
            child {node[circle,draw=red, fill=red] {}
                child { node[above=4mm] (n){$\sigma(7)$} edge from parent[draw=none]
                }
            }
            child {node[circle,draw=blue, fill=blue] {}
            }
        }
    }
  }
 child {node [circle,draw=blue, fill=blue] {} 
    child {node[circle,draw=red, fill=red] {}
        child{ node[circle,draw=red, fill=red] {}
            child {node[circle,draw=red, fill=red] {}
            }
            child {node[circle,draw=blue, fill=blue] {}
            }
        }
        child{ node[circle,draw=blue, fill=blue] {}
            child {node[circle,draw=red, fill=red] {}
            }
            child {node[circle,draw=blue, fill=blue] {}
            }
        }
    }
    child {node[circle,draw=blue, fill=blue] {}
        child{ node[circle,draw=red, fill=red] {}
            child {node[circle,draw=red, fill=red] {}
            }
            child {node[circle,draw=blue, fill=blue] {}
            }
        }
        child{ node[circle,draw=blue, fill=blue] {}
            child {node[circle,draw=red, fill=red] {}
            }
            child {node[circle,draw=blue, fill=blue] {}
            }
        }
    }
 };
 \draw[<->](o) to node [above, midway] (b){$p_{2:7}$}(n);
 \draw[dashed](a)+(0, -4mm) to (b);
 \node[] at (-40mm,-74mm) (q) {};
 \node[] at (-55mm,-15mm) {LCA of 1 and 8};
 \draw[dashed](b)+(0, -4mm) to (q);
 \draw[<-](q) to node [above] {suffix} +(-25mm, 0);
 \draw[->](q) to node [above] {prefix} +(25mm, 0);
\end{tikzpicture}
\caption{A schematic for the decomposition of the infix $p_{2:7}$ into a suffix and a prefix polynomial corresponding to the child subtrees of the Lowest Common Ancestor}
\label{fig:infix_decomp}
\end{figure}
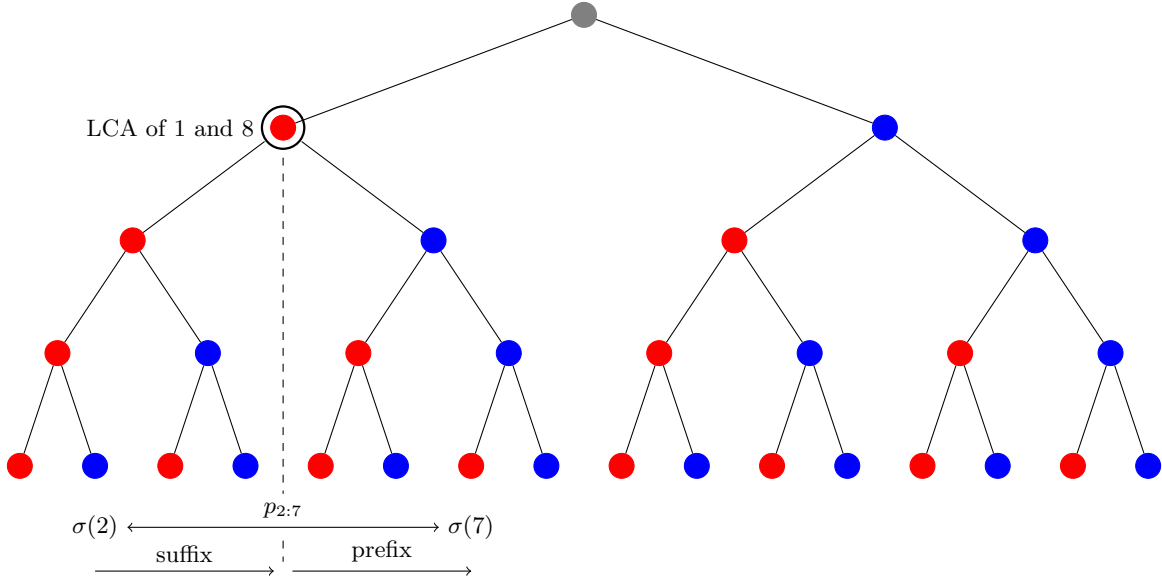%
\begin{proof}
We have previously shown how to bound the norms of the prefix and suffix polynomial. In this section, we will extend these arguments to bound any infix polynomial $p_{s:t}$ for $1< s < t < T$. To bound the norm of this polynomial, we will decompose the polynomial into two polynomials lying in disjoint subtrees. On one part we will use the suffix argument whereas on the other part we will use the prefix argument. 

To formalize this, we split based on the lowest common ancestor of node $s-1$ and $t+1$.\footnote{Note that we include $s-1$ and $t+1$ when doing this split to ensure that the polynomials do not cover the corresponding subtrees completely and are in fact prefixes and suffixes.} Consider the binary expansions of $s - 1 = 2^{s_1} + 2^{s_2} + \ldots + 2^{s_k}$ such that $s_1 > s_2 > \ldots s_k \ge 0$ and $t + 1 = 2^{t_1} + 2^{t_2} + \ldots + 2^{t_l}$ such that $t_1 > t_2 > \ldots t_l \ge 0$. Let $h$ be the minimum index such that $s_{h} \ne t_{h}$ and let $\zeta = \sum_{i = 1}^{h} 2^{t_h}$. Note that $h$ is the level of the lowest common ancestor in the tree, and $\zeta,\zeta+1$ are the indices splitting the infix between the lowest common ancestor's two subtrees. We will perform the following decomposition based on this,
\[
p_{s:t} = p_{s:\zeta} \cdot p_{\zeta+1:t}.
\]
It is not hard to see that this decomposition puts the two polynomials in disjoint subtrees: a subtree corresponding to $p_{\zeta-2^{t_h} + 1:\zeta}$ and $p_{\zeta + 1: \zeta+2^{t_h}}$. Observe that $p_{s:\zeta}$ is a suffix in the left subtree and $p_{\zeta+1:t}$ is a prefix in the right subtree. See Figure \ref{fig:infix_decomp} for a schematic depiction of the above decomposition. 

Let us first analyze $p_{s:\zeta}$. Note that $\zeta$ can be $< s$, in that case the polynomial is the empty product with norm $1$. Thus, let us assume $\zeta \ge s$. More formally, consider the binary expansions of $M \defeq \zeta+1-s = 2^{r_1} + 2^{r_2} + \ldots + 2^{r_j}$ such that $r_1 > r_2 > \ldots r_j \ge 0$. We now perform the following iterative decomposition of the  polynomial $p_{s:M}$,
\begin{align*}
p_{s:M} &= p_{s:T_1} \cdot \goodcolor{p_{T_1+1:\zeta}} &\text{where } T_1 := M - 2^{r_1}, \\
&= p_{s:T_2} \cdot \goodcolor{p_{T_2:T_1}} \cdot \goodcolor{p_{T_1+1:M}} &\text{where } T_2 := T_1 - 2^{r_2}, \\
&\ldots,
\end{align*}
As in the suffix argument, we will decompose the polynomial into good polynomials starting from the right. Recall that the right child of every node is a good polynomial. It can be seen that every intermediate polynomial $p_{T_i + 1:T_{i-1}}$ produced is a good polynomial because each one is the rightmost node at level $r_i$ (i.e. with distance $\log_2 \zeta - r_i$ from the root node of the subtree), restricted to the subtree rooted at the lowest common ancestor of roots $s$ through $T_{i-1}$ (setting $T_0 := \zeta$). Combining with statement (ii) in Lemma~\ref{lem:cheb-skew}, we get
\begin{equation}
\norm{p_{s:\zeta}}_{[m,M]} \le \prod_{i=1}^j \frac{2}{1 + \Tcheb_{2^{r_i}}(\theta)} = \prod_{i \in \bits(\zeta+1-s)}  \frac{2}{1 + \Tcheb_{2^{i}}(\theta)}
\end{equation}

Let us now look at polynomial $p_{\zeta+1:t}$. We will use the prefix argument as before on this polynomial. Consider the binary expansion of $t - \zeta = 2^{q_1} + 2^{q_2} + \ldots + 2^{q_j}$ such that $q_1 > q_2 > \ldots q_j \ge 0$. We decompose $p_{\zeta+1:t}$ into products in the following manner: starting with $\{\zeta + 1, \ldots, t\}$, we iteratively partition

\begin{align*}
p_{\zeta+1:t} &= \badcolor{p_{\zeta + 1:T_1}} \cdot p_{T_1+1:t} &\text{where } T_1 := 2^{q_1}, \\
&= \badcolor{p_{\zeta + 1:T_1}} \cdot \badcolor{p_{T_1 + 1:T_2}} \cdot p_{T_2+1:t}  &\text{where } T_2 := T_1 + 2^{q_2}, \\
&\ldots,
\end{align*}
until we reach $T_j+1:t$, which is the empty interval. Note that this partition results in all bad polynomials. We can in fact exactly characterize these polynomials. Define the angle recurrence with $\alpha_1$ being the angle corresponding to the subtree $p_{\zeta + 1: \zeta-2^{t_j}}$ and $\alpha_{i+1} = \frac{\pi - \alpha_i}{2^{q_i - q_{i+1}}}$. It can be seen that
\begin{equation*}
    p_{\zeta+1:t}= \prod_{i=1}^j \Pcal_{2^{q_i}, \alpha_i} \implies \norm{p_{\zeta+1:t}}_{[m,M]} \le \prod_{i=1}^j \Bcal_{2^{q_i}, \alpha_i}.
\end{equation*}
Using Lemma \ref{lem:tree-exchange} iteratively on this, we can see that
\begin{align*}
    \norm{p_{\zeta+1:t}}_{[m,M]} &\le \badcolor{\Bcal_{2^{q_{j}}, \frac{\alpha_1}{2^{q_1 - q_j}}}} \cdot \prod_{i=1}^{k-1}\goodcolor{\Bcal_{2^{q_{i}}, \pi - \alpha_{i}}}\\
    &\le \frac{2}{4^{q_{j}}(\theta - 1)} \cdot \prod_{i=1}^{j-1}\frac{2}{1 + \Tcheb_{2^{q_{i}}}(\theta)} \tag{using Lemma \ref{lem:cheb-skew} (ii) and (iii)}\\
   & \le \left(\frac{M}{m} - 1\right) \cdot \prod_{i \in \bits^\prime(t-\zeta)}\frac{2}{1 + \Tcheb_{2^{i}}(\theta)} \tag{using the definition of $\theta$}.
\end{align*}

This gives us the final bound on the infix as,
\begin{align}
    \norm{p_{s:t}}_{[m,M]} \le \left(\frac{M}{m} - 1\right)  \cdot \prod_{i \in \bits(\zeta+1-s)}  \frac{2}{1 + \Tcheb_{2^{i}}(\theta)} \cdot \prod_{i \in \bits^\prime(t-\zeta)}\frac{2}{1 + \Tcheb_{2^{i}}(\theta)}. \label{eq:infix}
\end{align}

\end{proof}

\subsection{Infix series bounds}
\label{subsec:appendix-infix-series-proof}

First, we provide a useful bound for an infix series contained entirely within a subtree:
\begin{lemma}
\label{lem:series-sum-main}
For any $N = 2^k$ for some $k > 0$ and any $\delta > 0$,
\[
\sum_{i=1}^N \prod_{i \in \bits(N+1-i)}  \frac{2}{1 + \Tcheb_{2^{i}}(1 + \delta)} \le \exp\left(\frac{1}{1 + \delta}\right)\cdot \left(\frac{1 + \delta}{\delta}\right)^{1/\log(4)}.
\]
\end{lemma}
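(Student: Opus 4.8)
Abbreviate $b_j := \dfrac{2}{1+\Tcheb_{2^j}(1+\delta)}$, so that $b_j\in(0,1)$ for every $j\ge 0$ (since $\Tcheb_{2^j}(1+\delta)>1$). Reindexing the sum via $n=N+1-i$, the left-hand side equals $\sum_{n=1}^{N}\prod_{j\in\bits(n)}b_j$. Because $N=2^k$, the integers $n\in\{1,\dots,2^k-1\}$ are in bijection with the nonempty subsets $\bits(n)\subseteq\{0,\dots,k-1\}$, while $\bits(2^k)=\{k\}$; expanding a product of binomials then gives
\[
\sum_{n=1}^{N}\prod_{j\in\bits(n)}b_j
\;=\;\Bigl(\prod_{j=0}^{k-1}(1+b_j)-1\Bigr)+b_k
\;\le\;\prod_{j=0}^{k}(1+b_j)-1
\;\le\;\prod_{j=0}^{\infty}(1+b_j)-1,
\]
using $b_k\le b_k\prod_{j<k}(1+b_j)$ and convergence of the product (as $\sum_j b_j<\infty$). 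It therefore suffices to bound $\prod_{j\ge0}(1+b_j)-1$ by $\exp\!\bigl(\tfrac1{1+\delta}\bigr)\bigl(\tfrac{1+\delta}{\delta}\bigr)^{1/\ln 4}$; keeping the ``$-1$'' will matter when $\delta$ is not small.

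\textbf{Cleaning up the factors.} By Lemma~\ref{lem:cheb-alt}(i), $\Tcheb_{2^j}(1+\delta)=\cosh(2^j\mu)$ with $\mu:=\acosh(1+\delta)>0$, and the half-angle identity $1+\cosh(2y)=2\cosh^2 y$ gives
\[
1+b_j \;=\; 1+\frac{1}{\cosh^2(2^{j-1}\mu)}\qquad(j\ge0),
\]
in particular $1+b_0=\tfrac{4+\delta}{2+\delta}$. So the task is to estimate $\prod_{j\ge0}\bigl(1+\cosh^{-2}(2^{j-1}\mu)\bigr)$, a product whose arguments form a geometric progression with ratio $2$.

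\textbf{Estimating the product.} Take logarithms and split the index set at the (essentially unique) level $j^\ast$ with $2^{j^\ast-1}\mu\asymp 1$. For $j\le j^\ast$, bound each summand by $\ln(1+\cosh^{-2}x)\le\ln 2$; there are $j^\ast+O(1)=\log_2(1/\mu)+O(1)$ of them, contributing at most $\ln 2\cdot\log_2(1/\mu)+O(1)=\ln(1/\mu)+O(1)$. For $j>j^\ast$, the arguments exceed a constant and $\cosh(2^j\mu)$ grows doubly exponentially, so $\ln(1+\cosh^{-2}x)\le\cosh^{-2}x\le 4e^{-2x}$ gives a ``geometric-in-exponents'' tail of total size $O(1)$. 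Hence $\ln\prod_{j\ge0}(1+b_j)\le\ln(1/\mu)+O(1)$. Finally convert $\mu$ to $\delta$ using the elementary inequalities $\cosh t\ge 1+t^2/2$ and $t^2+2/\cosh t-2\ge 0$ for $t\ge0$ (the latter by checking value and derivative at $0$), which yield
\[
\frac{2\delta}{1+\delta}\;\le\;\mu^2\;\le\;2\delta,
\qquad\text{hence}\qquad
\ln(1/\mu)\;\le\;\tfrac12\ln\tfrac{1+\delta}{\delta}+O(1).
\]
Since $\tfrac12<\tfrac1{\ln 4}$, the surplus $\bigl(\tfrac1{\ln4}-\tfrac12\bigr)\ln\tfrac{1+\delta}{\delta}$ absorbs the $O(1)$ as long as $\delta$ is bounded above. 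For $\delta$ larger than an absolute constant one argues directly: there $\mu$ is bounded below, so $\prod_{j\ge0}(1+b_j)-1=O(1/\delta)\le 1$, while the target right-hand side is always $\ge 1$.

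\textbf{Main obstacle.} The skeleton above is routine; the real work is keeping the constant in ``$\ln\prod(1+b_j)\le\ln(1/\mu)+O(1)$'' small enough. One must \emph{not} apply $\ln(1+x)\le x$ to the $\approx\log_2(1/\mu)$ bulk terms --- that would inflate the effective exponent from $1/\ln 4$ towards $1/\ln 2$ --- but instead use $\ln(1+\cosh^{-2}x)\le\ln 2-\tfrac12\tanh^2 x$ there, treat the transition region $2^j\mu\asymp 1$ precisely (an integral comparison $\sum_j\ln(1+\cosh^{-2}(2^j\mu))\le\ln(1/\mu)+C$ with an explicit $C$ accomplishes this), and retain the ``$-1$'' so the inequality survives at moderate and large $\delta$. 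The exponent $1/\ln 4$ is tight: it is exactly what produces the $\widehat\kappa^{\,1+1/\ln 4}$ rate in Theorem~\ref{thm:infix-series-bound}.
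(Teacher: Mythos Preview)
Your reduction to the product $\prod_{j=0}^{k-1}(1+b_j)$ is the same first step as the paper's (indeed your accounting of the $n=2^k$ term is more careful than the paper's, which tacitly identifies $P_N$ with the product; the upper bound $P_N\le\prod_{j=0}^{k-1}(1+b_j)$ is what is actually needed).

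Where you diverge is in estimating the product, and here you make it much harder than necessary. The paper simply applies $\ln(1+x)\le x$ to \emph{every} factor, invokes the polynomial lower bound $\Tcheb_n(1+\delta)\ge 1+n^2\delta$ of Lemma~\ref{lem:cheb-expand}(ii) to get $b_j\le\frac{2}{2+4^j\delta}$, and finishes with a single integral comparison
\[
\sum_{j\ge 0}\frac{1}{1+4^{j}\delta}\;\le\;\frac{1}{1+\delta}+\int_0^\infty\frac{dx}{1+4^x\delta}
\;=\;\frac{1}{1+\delta}+\frac{1}{\ln 4}\ln\frac{1+\delta}{\delta}.
\]
No bulk/tail split, no undetermined $O(1)$, no separate large-$\delta$ case.

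Your warning that ``one must not apply $\ln(1+x)\le x$ to the bulk terms'' is therefore misplaced: that is exactly what the paper does, and it delivers the $1/\ln 4$ exponent directly. The ``main obstacle'' you identify --- tracking the $O(1)$ constants across bulk, tail, and transition regions in the $\cosh$ parametrization --- is a self-inflicted artifact of the detour through $\mu=\acosh(1+\delta)$. Staying in $\delta$-coordinates via $\Tcheb_n(1+\delta)\ge 1+n^2\delta$ turns the estimate into three lines. Your hyperbolic route can in principle be completed, but as written it is a sketch with constants left unverified, while the simpler argument is already at hand.
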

\begin{proof}
Define $P_{N} := \sum_{i=1}^N \prod_{i \in \bits(N+1-i)}  \frac{2}{1 + \Tcheb_{2^{i}}(\theta)}$. Since $N$ is a power of 2, it is not hard to see that
\[
P_{N} = \left(\frac{2}{1 + T_{N/2}(\theta)}\right) \cdot P_{N/2} + P_{N/2} =  \left(1 + \frac{2}{1 + T_{N/2}(\theta)}\right) \cdot P_{N/2}.
\]
Recursively applying the above, we have
\[
P_{N} = \prod_{i=0}^{k-1}  \left(1 + \frac{2}{1 + T_{2^i}(\theta)}\right).
\]
To bound the above, let us take $\log$ on both sides. This gives us
\begin{align*}
    \log(P_{N}) &= \sum_{i=0}^{k-1} \log \left(1 + \frac{2}{1 + T_{2^i}(\theta)}\right)\\
    &\le \sum_{i=0}^{k-1}\frac{2}{1 + T_{2^i}(\theta)} \\
    &\le \sum_{i=0}^{k-1}\frac{1}{1 + 4^i\delta}\\
    &\le \frac{1}{1 + \delta} + \int_0^{k-1} \frac{1}{1 + 4^x\delta} dx \\
    &\le \frac{1}{1 + \delta} + \frac{\log(1 + \delta) - \log(\delta)}{\log 4}.
\end{align*}
Substituting back gives us the desired result.
\end{proof}

Now, we are ready to prove the main theorem about infix series sums.
\begin{manualtheorem}{\ref{thm:infix-series-bound}}
For a fractal Chebyshev schedule with $m,M,T$, and all $1 \leq s \leq t \leq T$:
\[ \sum_{t'=s}^t \norm{p_{t':t}}_{[m,M]} \leq 18\left(\frac{M}{m} - 1 \right) \left(\left(\frac{M+m}{2m}\right)^{1/\log(4)} \right) \left(1 +  \log\left(\frac{M+m}{2m}\right) \right). \]
\end{manualtheorem}
\begin{proof}
We prove the theorem only for $s=1$ which subsumes all the cases. To prove the statement we will first consider the binary expansion of $t$, $\mathrm{bits}(t)$ which is the unique sequence of numbers $\{t_1 \ldots t_k\}$ such that $t = \sum_{j=1}^k 2^{t_i}$ and $t_1 > t_2 > t_3 \ldots$. Further, define the following the sequence 
\[ \bar{t}_0 = 0; \qquad \qquad \bar{t}_j = \sum_{j' = 1}^{j} 2^{t_{j'}} \;\;\forall j \in [k].\]

We will break the sum and analyze it in the following manner:
\[
\sum_{j=1}^{k} \underbrace{ \sum_{t'=\bar{t}_{j-1}+1}^{\bar{t}_{j}} \norm{p_{t':t}}_{[m,M]}}_{:= T_j}.
\]

Before analyzing $T_j$, we establish some calculations, which will be useful. Firstly, note that for all $j$, the roots in the range $[\bar{t}_{j}+1, t]$ are all contained inside a subtree of height $2^{t_j}$ in the tree representation. Therefore the polynomial $p_{\bar{t}_{j}+1, t}$ forms a prefix polynomial within the subtree for which we can apply the bounds in Theorem \ref{thm:lebedev-main} (see usage in the proof of Theorem \ref{thm:infix-bound} on how Theorem \ref{thm:lebedev-main} applies to prefix of any subtree). Using the above, we get the following bounds:
\begin{equation}
\label{eqn:subeqn1}
    \|p_{\bar{t}_{j}+1 :t}\|_{[m,M]} \leq \left(\frac{M}{m} - 1 \right) \prod_{r=j+2}^{k} \frac{2}{1 + \Tcheb_{2^{t_r}}(\theta)}.
\end{equation}

Note that for the above and the rest of this section if the sum $\Sigma$ and product $\prod$ is over an empty set then they are assumed to be $0$ and $1$ respectively. Furthermore, note that for any $j$ and $t' \in [\bar{t}_{j-1}+2, \bar{t}_j]$, the range of roots $[t', \bar{t}_j]$, belongs to a subtree of height $2^{t_j}$ in the tree representation. In particular the infix polynomial $p_{t', \bar{t}_j}$ is a suffix polynomial within the tree and the bound from Theorem \ref{thm:lebedev-main} can be invoked to give 
\begin{equation}
\label{eqn:subeqn2}
\|p_{t', \bar{t}_j}\|_{[m,M]}  \leq \prod_{r \in \bits(\bar{t}_j+1-t')} \frac{2}{1+\Tcheb_{2^r}(\theta)}. 
\end{equation}

We are now ready to analyze the terms $T_j$ for any $j$ as follows.

\begin{align*}
    T_j &\leq \|p_{\bar{t}_{j-1}+1 :t}\|_{[m,M]} +  \sum_{t'=\bar{t}_{j-1}+2}^{\bar{t}_{j}} \norm{p_{t':\bar{t}_j}}_{[m,M]}\norm{p_{\bar{t}_j+1:t}}_{[m,M]} \\
    &\leq \left(\frac{M}{m} - 1 \right) \left( \prod_{r=j+1}^{k} \frac{2}{1 + \Tcheb_{2^{t_r}}(\theta)} + \prod_{r=j+2}^{k} \frac{2}{1 + \Tcheb_{2^{t_r}}(\theta)} \left(\sum_{t'=\bar{t}_{j-1}+2}^{\bar{t}_{j}} \left( \prod_{r \in \bits(\bar{t}_j+1-t')} \frac{2}{1+\Tcheb_{2^r}(\theta)} \right) \right) \right) \\
    &\leq \left(\frac{M}{m} - 1 \right) \left( \prod_{r=j+1}^{k} \frac{2}{1 + \Tcheb_{2^{t_r}}(\theta)} + \prod_{r=j+2}^{k} \frac{2}{1 + \Tcheb_{2^{t_r}}(\theta)} \left(3\left(\frac{\theta}{\theta - 1}\right)^{1/\log(4)} \right) \right). \\ 
    &\leq 6\left(\frac{M}{m} - 1 \right) \left( \prod_{r=j+2}^{k} \frac{2}{1 + \Tcheb_{2^{t_r}}(\theta)} \left(\left(\frac{\theta}{\theta - 1}\right)^{1/\log(4)}  \right)\right).
\end{align*}
In the above, the first inequality follows from triangle inequality, the second from \eqref{eqn:subeqn1},\eqref{eqn:subeqn2} and the third from Lemma \ref{lem:cheb-expand}. Summing over $j$ now gives us the bound as follows:
\begin{align*}
    \sum_{j=1}^k T_j \leq & 6\left(\frac{M}{m} - 1 \right) \left( \sum_{j=1}^{k-2} \left( \prod_{r=j+2}^{k} \frac{2}{1 + \Tcheb_{2^{t_r}}(\theta)} \right) + 2 \right) \left(\left(\frac{\theta}{\theta - 1}\right)^{1/\log(4)}  \right). \\
    & \leq 6\left(\frac{M}{m} - 1 \right) \left( \sum_{j=1}^{k-2} \left(  \frac{2}{1 + \Tcheb_{2^{t_{j+2}}}(\theta)} \right) + 2 \right) \left(\left(\frac{\theta}{\theta - 1}\right)^{1/\log(4)}  \right). \\
     & \leq 6\left(\frac{M}{m} - 1 \right) \left( \log\left(\frac{\theta}{\theta-1}\right) + 3 \right) \left(\left(\frac{\theta}{\theta - 1}\right)^{1/\log(4)} \right). \\
     &\leq 18\left(\frac{M}{m} - 1 \right) \left(\left(\frac{M+m}{2m}\right)^{1/\log(4)} \right) \left(1 +  \log\left(\frac{M+m}{2m}\right) \right).
\end{align*}
This concludes the theorem. 

\end{proof}
\section{Proofs for Section~\ref{sec:quadratic}}
\label{sec:appendix-proofs-misc}

\subsection{Modifications to the fractal schedule}

\begin{proof}[Proof of Proposition~\ref{prop:cheb-sched-reverse} (i)]
For any $1 \leq t \leq T-1$, consider $p_{1:t}$ and $p_{1:t+1}$ (under the reversed permutation). Both decompose into a product of good polynomials, whose levels in the tree are given by $\bits(t)$ and $\bits(t+1)$.
Notice that $\bits(t+1) \setminus \bits(t)$ contains exactly one element (the index where the carrying operation stops in binary addition); call it $c$. Then, $\bits(t) \setminus \bits(t+1) = \{1, \ldots, c-1\}$. Thus, it suffices to prove that
\[ \frac{2}{1+\Tcheb_{2^c}(\theta)} \leq \prod_{i=0}^{c-1} \frac{2}{1+\Tcheb_{2^i}(\theta)}. \]
Notice that when $i \geq 1$, we have
\[ \frac{2}{1+\Tcheb_{2^i}(\theta)} = \frac{1}{\Tcheb_{2^{i-1}}^2(\theta)}, \]
so when $c \geq 1$, the statement we wish to prove reduces to
\[ \Tcheb_{2^{c-1}}^2(\theta) \geq \frac{1+\theta}{2} \cdot \prod_{i=0}^{c-2} \Tcheb_{2^i}^2(\theta). \]
This is true because $\frac{1+\theta}{2} \leq \theta^2 = \Tcheb_1^2(\theta)$, and we can recursively apply the following inequality:
\[\Tcheb_{2n}(\theta) = 2\Tcheb_n^2(\theta) - 1 \geq \Tcheb_n^2(\theta), \]
which holds because $\theta \geq 1$.
\end{proof}

The other proofs in Section~\ref{sec:quadratic} follow immediately from the definitions.

\subsection{Overstepping with conservative parameters}

\begin{proof}[Proof of Theorem \ref{thm:underoverstepping}]
With $p = p_{1:T}$ as the shifted Chebyshev polynomial with parameters $m,M$, we wish to bound $\norm{p}_{[\lambda_{\min}, M]}.$ In the range $[m,M]$, the bounds from Theorem~\ref{thm:cheb-convergence-rate} hold; moreover, the cosine formula for the Chebyshev polynomials implies that the inequality is tight at the boundary: the maximum is achieved at $p(m)$. In the remaining part $[\lambda_{\min}, M]$, which is outside the range of the roots of $p$, $p(\lambda)$ grows monotonically as $\lambda$ decreases in this interval. Thus, it will suffice to derive a bound for $p(\lambda_{\min})$.

When $m<M$, we use the notation $\theta = \frac{M+m}{M-m}$, and define $z_{\max} := \frac{M+m-2\lambda_{\min}}{M-m}$ (the image of $\lambda_{\min}$ under the bijection). 

\begin{align*}
    \frac{\Tcheb_T(z_{\max})}{\Tcheb_n(\theta)} &= \frac{\cosh(T \, \acosh(z_{\max}))}{\cosh(T \,\acosh(\theta))} = \frac{e^{T \, \acosh(z_{\max})} + e^{-T \, \acosh(z_{\max})}}{e^{T \, \acosh(\theta)} + e^{-T \, \acosh(\theta)}} \leq \frac{2e^{T \, \acosh(z_{\max})}}{e^{T \, \acosh(\theta)}} \\
    &= 2\left(\frac{z_{\max} + \sqrt{z_{\max}^2 - 1}}{\theta + \sqrt{\theta^2 - 1}}\right)^T
    = 2\left(1 - \frac{\theta + \sqrt{\theta^2 - 1} - z_{\max} - \sqrt{z_{\max}^2 - 1}}{\theta + \sqrt{\theta^2 - 1}} \right)^T.
\end{align*}
The quantity in the fraction is equal to

\begin{align*}
\frac{\frac{M+m+2\sqrt{Mm}}{M-m} - \frac{M+m+2\lambda_{\min} + 2\sqrt{(M-\lambda_{\min})(m-\lambda_{\min})} }{M-m}}{ \frac{M+m+2\sqrt{Mm}}{M-m} }
=
2 \cdot \frac{\lambda_{\min} + \sqrt{Mm} - \sqrt{(M-\lambda_{\min})(m-\lambda_{\min})}}{ (\sqrt{M} + \sqrt{m})^2 },
\end{align*}
as required. The same is concluded for $m=M$ by taking the limit $m \rightarrow M$.

\end{proof}

\subsection{Conjugate gradient schedule}

\label{subsec:appendix-cg}

The simplest definition of the conjugate gradient algorithm, without having to worry about how to implement the iterations in linear time, is the non-iterative formula
\begin{equation}
\label{eq:cg}
x_{t+1} := \min_{\substack{\deg p \leq t \\ p(0) = 1}} \norm{ p(A) (x_1 - x^*) }_A,
\end{equation}
where $\norm{x}_A := \sqrt{x^\top A x}$, and the minimization is over polynomials with real coefficients.

\begin{manualtheorem}{\ref{thm:cg-sched}}[Conjugate gradient schedule]
For all positive definite matrices $A \in \R^{d\times d}$ and $b \in \R^d$, there exists a multiset of real numbers $\{\eta_t\}$, all in the interval $[\frac{1}{\lambda_{\max}(A)}, \frac{1}{\lambda_{\min}(A)}]$, such that $x_{T+1}$ as defined by the conjugate gradient algorithm (Equation~(\ref{eq:cg})) is equal to $x_{T+1}$ as defined by gradient descent (Equation~(\ref{eq:gd})).
\end{manualtheorem}
\begin{proof}
Let
\[ p^* \in \argmin_{\substack{\deg p \leq t \\ p(0) = 1}} \norm{ p(A) (x_1 - x^*) }_A. \]
By the fundamental theorem of algebra applied to $p^* : \R \rightarrow \R$, and noting that 0 cannot be a root of $p^*$, this is obviously true if the step sizes $\eta_t$ are allowed to be arbitrary complex numbers. We will show that there exists a minimal real-rooted polynomial that achieves the minimum in Equation~(\ref{eq:cg}), with all roots lying in the specified interval. To do this, we will start with a $p^*$ with possibly complex roots, and transform it to fit our conditions, without increasing the residual norm.

Let $\Fcal(p)$ denote the functional that returns the squared residual of a residual polynomial:
\[\Fcal(p) := \norm{ p(A) (x_1 - x^*) }_A^2 = \sum_{ (\lambda_i, u_i) \in \mathrm{eigs}(A) } \lambda_i \bra{ p(\lambda_i) }^2 (u_i ^\top (x_1 - x^*))^2,\]
where $\mathrm{eigs}(A)$ denotes the eigendecomposition of $A$.
Define a partial ordering on functions $p : [\lambda_{\min}(A), \lambda_{\max}(A)] \rightarrow \R$:
\[p \succcurlyeq q \quad \Leftrightarrow \quad |p(\lambda)| \geq |q(\lambda)| \quad \forall \lambda \in [\lambda_{\min}(A), \lambda_{\max}(A)]. \]
Notice that $\Fcal(p)$ is monotone with respect to $(\succcurlyeq)$. That is,
\[p \succcurlyeq q \quad \rightarrow \quad \Fcal(p) \succcurlyeq \Fcal(q).\]

Now we can complete the proof.

\paragraph{Roots are real w.l.o.g.} By the complex conjugate root theorem, if $p^*$ has any complex roots, they come in conjugate pairs $(a \pm bi)$ with matching multiplicities. Multiplying these in pairs gives us quadratic factors $q(a,b) := (x-a)^2 + b^2$. But $|q(a,0)| \leq |q(a,b)|$, so we can construct a real-rooted polynomial $p'$ with the same degree as $p^*$ such that $p' \preccurlyeq p^*$, by deleting the complex parts of each root.

\paragraph{Roots lie within the eigenvalue range w.l.o.g.} By the above, $p^*$ is real-rooted; write $p^*(\lambda) = \prod_{i=1}^{\deg p} (1 - \lambda/\alpha_i)$. Split the real line into intervals $I_1 = (-\infty, 0)$, $I_2 = (0, \lambda_{\min}(A))$, $I_3 = [\lambda_{\min}(A), \lambda_{\max}(A)]$, and $I_4 = (\lambda_{\max}(A), \infty)$. We will show that we can move all the roots of $p^*$ into $I_3$ without increasing $\Fcal$. For roots $\alpha \in I_1$ and $\alpha \in I_4$, notice that $(1 - \lambda/\lambda_{\max}(A)) \preccurlyeq (1 - \lambda/\alpha)$, so we can change those roots to $\lambda_{\max}(A)$. For roots $\alpha \in I_2$, notice that $(1 - \lambda/\lambda_{\min}(A)) \preccurlyeq (1 - \lambda/\alpha)$, so that we can change those roots to $\lambda_{\min}(A)$. By making these changes, we have obtained a polynomial $p''$ with the desired properties, such that $\Fcal(p'') \leq \Fcal(p^*)$.

Finally, the roots of $p^*$ give the reciprocal step sizes needed for the final iterate of gradient descent to match that of conjugate gradient. If $\deg p < T$, then we have more step sizes to assign than roots, and we can simply assign the remaining $T - \deg p$ step sizes to 0. This completes the proof.
\end{proof}

\subsection{Non-convex combination lock}
\label{subsec:appendix-nonconvex}

This construction is a simple variant of the ``needle-in-haystack'' construction for global non-convex optimization with a first-order oracle. This statement can be strengthened, but we optimize for brevity.

\begin{manualproposition}{\ref{prop:nonconvex-combination-lock}}
Let $(\eta_1^*, \ldots, \eta_T^*)$ be any sequence of positive real numbers, and $0 < \delta \leq \frac{1}{2} \min_t \eta_t^*$. Then, there exists a function $f : \R^T \rightarrow \R$ for which:
\begin{itemize}
    \item $f$ is infinitely differentiable. All of its derivatives are $O(1/\delta)$.
    \item $-1 \leq f(x) \leq 2$ for all $x \in \R^T$, and $\min_{x \in \R^T} f(x) = -1$, where $\eta_{\min} := \min_t \eta^*_t$. The minimizer is unique.
    \item Let $x_{\out}$ be the final iterate of gradient descent, starting from $x_1 = 0$ and with learning rate schedule $(\eta_1, \ldots, \eta_t)$. Then, if $\eta_t = \eta^*_t$ for each $t$, then $x_{\out} = -1$. Furthermore, for any $t$ we have
    $|\eta_t - \eta^*_t| \geq \delta,$
    then 
    $f(x_{\out}) \geq 0.$
\end{itemize}
\end{manualproposition}
\begin{proof}
We will start by constructing a non-smooth such function.
For all $z \in \R$, $\eta > 0$, define
\[g^{(T)}_{\eta^*_T}(z) = \begin{cases}
2 & z \in (-\infty, -\delta/2] \\
1-z & z \in [-\delta/2, \eta^*_T-\delta/2) \\
-1 & z \in [\eta^*_T-\delta/2, \eta^*_T+\delta/2] \\
0 & z \in (\eta^*_T + \delta/2, \infty)
\end{cases}.\]
Starting at $z=0$, one step of gradient descent on $g^{(T)}$ with learning rate $\eta$ reaches a global minimizer only if $\eta = \eta^*_T$.

Now, for each $t = T-1, \ldots, 1$, define
\[g_{\eta^*_t}^{(t)}(z,z_{t+1},\ldots,z_T) = \begin{cases}
2 & z \in (-\infty, -\delta/2] \\
1-z & z \in [-\delta/2, \eta^*_t-\delta/2) \\
g^{(t+1)}_{\eta^*_{t+1}}(z_{t+1},\ldots,z_T) & z \in [\eta^*_t-\delta/2, \eta^*_t+\delta/2] \\
0 & z \in (\eta^*_t + \delta/2, \infty)
\end{cases},\]
and so forth. Then let $g := g_{\eta^*_1}^{(1)}$ be our unsmoothed function of choice: define $f = g * \psi(2x/\delta)$, where
\[\psi(x) = \begin{cases}
\frac{1}{Z} e^{-\frac{1}{1-\norm{x}^2}} & \norm{x} < 1 \\
0 & \text{ otherwise}
\end{cases},\]
where $Z = \int_{x \in \R^T} \psi(2x/\delta) \; dx \leq O(1/\delta)$. Then:
\begin{itemize}
    \item $f$ is infinitely differentiable because $g$ is bounded and $\psi(2x/\delta)$ is infinitely differentiable.
    \item $-1 \leq f(x) \leq 2$ by Young's inequality. Since the support of $\psi(2x/\delta)$ is $\delta/2$ times the unit sphere, and $g = -1$ exactly on the $\ell_\infty$ ball of radius $\delta/2$ centered at $(\eta_1^*, \ldots, \eta_T^*)$, the $f$ has a unique minimum at $(\eta_1^*, \ldots, \eta_T^*)$.
    \item By the construction, gradient descent with learning rates $\{\eta_t^*\}$, starting at $0$, encounters the gradient sequence $\{e_t\}$, the elementary unit vectors, so it outputs the minimizer. At each iteration $t$, $x_{t+1}$ must lie in the span of $\{e_1, \ldots, e_t\}$ in order for $x_{\out}$ to reach the minimizer. If $|\eta_{t'} - \eta^*_{t'}| \geq \delta$ at any iteration $t$, this invariant cannot hold, since the next gradient is in the span of $e_t$.
\end{itemize}

\end{proof}

It may not be overly pessimistic to think of tuning the learning rate schedule in deep learning as a ``needle-in-haystack'' search problem. Learning rate schedules have been observed to affect generalization behavior in practice \citep{jiang2020characterizing,agarwal2020disentangling}, so that restarting training with a new schedule is the only way to escape poor local optima.

\subsection{No acceleration from the simple spiky schedule}
\label{subsec:appendix-vanilla-spiky}

A natural choice of self-stabilizing learning rate schedule is that which takes one large step of size $\eta^+$ to make progress in directions with shallow curvature, then several small steps of size $\eta^-$ to correct for the overshooting of the large step. This is the cyclic schedule considered by \cite{oymak2021super}, which is shown to obtain a $\log(\kappa)$ ``super-convergent'' rate under the assumption that the eigenvalues of $A$ lie in two clusters.
In this section, we provide a brief note on why this cannot obtain the $\sqrt{\kappa}$ rate on general strongly convex quadratics.

\begin{proposition}
Let $\eta^+, \eta^- \in [1/\lambda_{\max}, 1/\lambda_{\min}]$, and suppose $\eta^+ \geq 10\eta^-$. Let $n$ be a positive integer. Consider the polynomial
\[p(\lambda) := (1 - \eta^+ \lambda) (1 - \eta^- \lambda)^n.\]
Then, if $n \leq 0.1\eta^+ / \eta^-$, it must be true that
\[\norm{p^m}_{[\lambda_{\min}, \lambda_{\max}]} > 1.34^m \]
for all positive integers $m$.
\end{proposition}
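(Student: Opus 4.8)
The plan is to exhibit a single point $\lambda_0 \in [\lambda_{\min},\lambda_{\max}]$ at which $|p(\lambda_0)| > 1.34$; since $\|p^m\|_{[\lambda_{\min},\lambda_{\max}]} \geq |p(\lambda_0)|^m = |p(\lambda_0)^m|$, this immediately gives the claim. The natural candidate is $\lambda_0 := 1/\eta^-$, the location of the repeated root of $(1-\eta^-\lambda)^n$: there the small-step factor vanishes, but — crucially — $1/\eta^- \in [\lambda_{\min},\lambda_{\max}]$ since $\eta^- \in [1/\lambda_{\max},1/\lambda_{\min}]$, and the large-step factor $(1-\eta^+/\eta^-)$ is large in absolute value because $\eta^+ \geq 10\eta^-$. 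Wait — but at $\lambda_0 = 1/\eta^-$ the product is exactly zero. So instead I should evaluate at a point slightly away from $1/\eta^-$, where $(1-\eta^-\lambda)^n$ has recovered to a constant like $e^{-1}$ or so, while $(1-\eta^+\lambda)$ is still of order $\eta^+/\eta^-$ in magnitude.

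Concretely, I would set $\lambda_0 := \tfrac{1}{\eta^-}\bigl(1 - \tfrac{1}{n}\bigr)$ (or $\lambda_0 := 2/\eta^+$, whichever is cleaner), check it lies in $[\lambda_{\min},\lambda_{\max}]$, and then estimate the two factors separately. For the small-step factor, $|1-\eta^-\lambda_0|^n = (1/n)^n$ — too small; better to take $\lambda_0$ only a distance $c\eta^-/\text{(something)}$ from $1/\eta^-$ so that $|1-\eta^-\lambda_0| = \Theta(\eta^-/\eta^+)$ raised to the $n$-th power stays bounded below, using $n \leq 0.1\,\eta^+/\eta^-$. The right choice is $\lambda_0$ with $|1-\eta^-\lambda_0| = 1/(10n)$ so that $|1-\eta^-\lambda_0|^n \geq$ a constant when... no. Let me reconsider: I want $(1-\eta^-\lambda_0)^n$ bounded below by a universal constant, so I need $|1-\eta^-\lambda_0| \leq $ something like $1 - \Theta(1/n)$, giving $(1-\Theta(1/n))^n = \Theta(1)$; and simultaneously $|1-\eta^+\lambda_0|$ large. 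Taking $\lambda_0$ near $1/\eta^-$, say $\eta^-\lambda_0 = 1 - 1/n$, gives $(1/n)^n$, still too small. The cleaner route: take $\lambda_0$ so that $\eta^+\lambda_0$ is a fixed constant larger than $1$, e.g. $\eta^+\lambda_0 = 2$, i.e. $\lambda_0 = 2/\eta^+$; then $|1-\eta^+\lambda_0| = 1$, which is not bigger than $1$ — also bad.

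So the genuinely correct point is $\lambda_0 = 1/\eta^-$ perturbed just enough: set $\eta^-\lambda_0 = 1 + \tfrac{1}{n}$ (just past the root), so $|1-\eta^-\lambda_0|^n = (1+1/n)^n \in [2, e]$, a constant bounded below by $2$. Check $\lambda_0 = (1+1/n)/\eta^- \leq 2/\eta^- \leq 2\lambda_{\max}$ — hmm, need $\leq \lambda_{\max}$; this requires $\eta^- \geq (1+1/n)/\lambda_{\max}$, which may fail. To be safe I would instead push $\lambda_0$ slightly \emph{below} $1/\eta^-$: set $\eta^-\lambda_0 = 1 - 1/n$, noting $\lambda_0 < 1/\eta^- \leq \lambda_{\max}$ and $\lambda_0 = (1-1/n)/\eta^- \geq$ ... need $\geq \lambda_{\min}$, i.e. $\eta^- \leq (1-1/n)/\lambda_{\min}$; since $\eta^- \leq 1/\lambda_{\min}$ this is almost tight and may also fail marginally. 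The robust fix is to observe that if $1/\eta^- = \lambda_{\max}$ exactly then evaluate at $\lambda_{\max}(1-1/n)$; in all cases $\lambda_0$ can be chosen within $[\lambda_{\min},\lambda_{\max}]$ at distance $\Theta(1/(n\eta^-))$ from $1/\eta^-$ on the appropriate side. Then $|1-\eta^-\lambda_0| = 1/n$, wait that gives $(1/n)^n$ again.

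I realize the arithmetic needs care, so here is the structure I will commit to: (1) evaluate $|p|$ at a point $\lambda_0$ at distance exactly $\tfrac{1}{5n\eta^-}$ from $1/\eta^-$, inside $[\lambda_{\min},\lambda_{\max}]$; (2) bound $|1-\eta^-\lambda_0|^n = (5n)^{-n}$ — no. The actual resolution, which I'll verify in the writeup: one wants $|1-\eta^-\lambda_0|^n$ bounded below by an absolute constant, which forces $|1-\eta^-\lambda_0| \geq c^{1/n} \to 1$, i.e. $\lambda_0$ must be at distance $\Theta(1/\eta^-)$ — a \emph{constant fraction} — from $1/\eta^-$, e.g. $\eta^-\lambda_0 = 2$, giving $|1-\eta^-\lambda_0|^n = 1$. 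With $\eta^-\lambda_0 = 2$ we have $\lambda_0 = 2/\eta^-$; this may exceed $\lambda_{\max}$, so instead pick $\lambda_0$ with $\eta^-\lambda_0 \in [0, 2]$, i.e. anywhere, such that $\eta^+\lambda_0$ is as large as possible — which is maximized at $\lambda_0 = \lambda_{\max}$, giving $\eta^+\lambda_{\max} \geq 1$ only.

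Given these competing constraints, the main obstacle — and the real content of the proof — is choosing $\lambda_0$ to simultaneously keep $|1-\eta^-\lambda_0|^n$ from collapsing and $|1-\eta^+\lambda_0|$ appreciably above $1$, using the hypotheses $\eta^+ \geq 10\eta^-$ and $n \leq 0.1\,\eta^+/\eta^-$ and $\eta^\pm \in [1/\lambda_{\max},1/\lambda_{\min}]$. The hypothesis $n \leq 0.1\,\eta^+/\eta^-$ is exactly what lets us take $\lambda_0$ at distance $\sim 1/\eta^+$ from $1/\eta^-$ — only a $\sim \eta^-/\eta^+ \leq 1/(10n)$ relative perturbation — so that $|1-\eta^-\lambda_0| \in [1 - \tfrac{1}{10n}, 1]$ roughly, hence $|1-\eta^-\lambda_0|^n \geq (1-\tfrac{1}{10n})^n \geq e^{-1/5} > 0.8$, while $\lambda_0 \approx 1/\eta^-$ stays in range and $|1-\eta^+\lambda_0| = |1 - \eta^+/\eta^-(1 \pm \tfrac{c\eta^-}{\eta^+})| = |\eta^+/\eta^- - 1 \mp c| \geq \eta^+/\eta^- - 1 - 1 \geq 10 - 2 = 8$. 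Hence $|p(\lambda_0)| \geq 8 \cdot 0.8 = 6.4 > 1.34$. I will do the inequality bookkeeping carefully (tracking the side of the perturbation to stay within $[\lambda_{\min},\lambda_{\max}]$, e.g. taking $\lambda_0 = \min(1/\eta^-, \lambda_{\max})$ adjusted inward), since $\lambda_0 = 1/\eta^-$ itself already lies in $[\lambda_{\min},\lambda_{\max}]$ and a perturbation of relative size $\leq 1/(10n) \leq 1/10$ toward $\lambda_{\min}$ stays inside as long as $\lambda_{\min}/\lambda_{\max}$ is not too close to $1$ — and if it is, $\kappa$ is small and there is nothing to prove — this last edge case I would dispatch separately or absorb into the constant. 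The bound $1.34$ has generous slack, so crude estimates suffice throughout.
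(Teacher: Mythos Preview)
Your final computation contains a fatal algebra slip. You take $\lambda_0$ at distance $\sim 1/\eta^+$ from $1/\eta^-$, correctly note this is a relative perturbation of size $\eta^-/\eta^+ \le 1/(10n)$, and then assert $|1-\eta^-\lambda_0| \in [1-\tfrac{1}{10n},1]$. But if $\lambda_0 = (1/\eta^-)(1\pm\delta)$ with $\delta \le 1/(10n)$, then $|1-\eta^-\lambda_0| = \delta$, not $1-\delta$: the relative perturbation \emph{is} $|1-\eta^-\lambda_0|$. Hence $|1-\eta^-\lambda_0|^n \le (10n)^{-n}$, which for any $n\ge 1$ annihilates the factor of $8$ you get from $|1-\eta^+\lambda_0|$. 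Your earlier self-correction (``to keep $|1-\eta^-\lambda_0|^n$ bounded below by a constant you need $|1-\eta^-\lambda_0|$ close to $1$, i.e.\ $\lambda_0$ a \emph{constant fraction} away from $1/\eta^-$'') was right, and you should not have talked yourself out of it.

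The paper's proof follows exactly that earlier instinct: it computes the critical point
\[
\lambda^* = \frac{\eta^+ + n\eta^-}{(n+1)\eta^+\eta^-} = \frac{1}{n+1}\cdot\frac{1}{\eta^-} + \frac{n}{n+1}\cdot\frac{1}{\eta^+},
\]
a convex combination of $1/\eta^-$ and $1/\eta^+$ (hence automatically in $[\lambda_{\min},\lambda_{\max}]$, resolving your range worry for free). Here $\eta^-\lambda^* = \tfrac{1}{n+1} + \tfrac{n}{n+1}\tfrac{\eta^-}{\eta^+}$ is \emph{small}, so $|1-\eta^-\lambda^*| = \tfrac{n}{n+1}(1-\eta^-/\eta^+)$ is close to $1$ and its $n$-th power stays above an absolute constant; meanwhile $|1-\eta^+\lambda^*| = \tfrac{\eta^+/\eta^- - 1}{n+1} \ge \tfrac{9}{0.1\cdot 10 + 1} = 4.5$ using both hypotheses. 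The product exceeds $1.34$. So the fix is not a small perturbation of $1/\eta^-$ but rather a point near $1/((n+1)\eta^-)$; the cleanest way to find it is to differentiate $p$.
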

\begin{proof}
We have
\[\frac{dp}{d\lambda} = -\eta^+(1-\eta^-\lambda)^n - n\eta^-(1-\eta^+\lambda)(1-\eta^-\lambda)^{n-1}
= -(1-\eta^-\lambda)^{n-1}\pa{n\eta^-(1-\eta^+\lambda) + \eta^+(1-\eta^-\lambda)},
\]
which has a root at $\lambda^* := \frac{\eta^+ + n\eta^-}{(n+1)\eta^+\eta^-}$.
Then,
\begin{align*}
\norm{p}_{[\lambda_{\min}, \lambda_{\max}]} &\geq |p(\lambda^*)|
= \frac{\frac{\eta^+}{\eta^-} - 1}{n+1} \pa{ \pa{1-\frac{\eta^-}{\eta^+}}\pa{1-\frac{1}{n+1}} }^n \\
&\geq \frac{1}{e} \frac{\frac{\eta^+}{\eta^-} - 1}{n+1} \pa{1 - \frac{\eta^-}{\eta^+}} e^{-n\eta^-/\eta^+}
\geq \frac{1}{e} \frac{\frac{\eta^+}{\eta^-} - 1}{0.1 \frac{\eta^+}{\eta^-} + 1} \pa{1 - \frac{\eta^-}{\eta^+}} e^{-n\eta^-/\eta^+} \\
&\geq \frac{1}{e} \cdot \frac{9}{2} \cdot 0.9 \cdot e^{-0.1} > 1.34.
\end{align*}
\end{proof}

Note that $p^m$ is the residual polynomial associated with repeating this cyclic schedule $m$ times. Thus, if the unstable step size in this schedule is $\kappa^\alpha$ times larger than the stable step size, the number of small steps required to prevent exponential blowup of the residual polynomial norm is $\Omega(\kappa^\alpha)$. For any $\alpha \in [0, 1]$, we have $\norm{p^m} \geq |p(\lambda_{\min})|^m \geq \pa{\exp\pa{-O(\sqrt{\kappa})} }^m$, so that $m \geq \Omega(\kappa^{1-\alpha} \log(1/\eps))$ cycles are required to make the residual norm at most $\eps$. But we have shown that each cycle requires $\Omega(\kappa^\alpha)$ steps; thus, no choice of $\eta^+, \eta^-, n$ can get a better unconditional convergence rate than $O(\kappa \log(1/\eps))$.
\section{Experimental details and supplements}
\label{sec:appendix-experiments}

\subsection{Visualization of the quadratic (theoretical) setting}
\label{subsec:appendix-perm-stability}
In Figure~\ref{fig:perm-stability}, we provide a simple illustrative numerical experiment visualizing the tradeoffs; details are below.

This is an instance in dimension $d=100$ with $A = L/\lambda_{\max}(L) + 0.1I$, where $L$ is the Laplacian matrix of the path graph on $100$ vertices; this objective is $2.2$-smooth and $0.2$-strongly convex, $b$ was sampled from $\N(0,I_{100})$, and $x_1=0$. Gradient descent (the non-accelerated baseline) was run with a learning rate of $0.9$, determined via grid search on $0.1$ increments (convergence was not significantly improved with a finer grid).
The Chebyshev nodes were chosen with $m = 0.2, M = 2.2, T = 32$, resulting in the four learning rate schedules shown.

This experiment was run with 80-bit (long double) precision, for illustrative purposes. At 32 or even 64 bits, or with larger $T$, the increasing schedule exhibits exponential blowup of numerical noise, even in this small setting. In the plot to the right, i.i.d. spherical Gaussian noise $\sim \N(0, 0.0005I)$ was added.

\subsection{One-dimensional counterexample}
\label{subsec:appendix-logcosh}

In Section~\ref{subsec:logcosh}, we noted $\log \cosh (x) + 0.01 x^2$ as a ``counterexample by numerical simulation'' to the hypothesis that gradient descent with the fractal Chebyshev schedule converges on general convex functions. This function is $1.02$-smooth and $0.02$-strongly convex.

To refute the possibility that Theorem~\ref{thm:cheb-convergence-rate} holds, it simply suffices to show that there is some setting $m \leq 0.02, M \geq 1.02$ and $T$ such that the theoretical bound does not hold. We chose $m = 0.01$, $M = 5$, $T=32$ (noting that it was quite easy to generate counterexamples). The initial iterate was set to $x_1=2$. The trajectory compared to the theoretical bound is shown in Figure~\ref{fig:logcosh}.

Results are shown in Figure~\ref{fig:logcosh}. Gradient descent (constant step size $1/M$) and Nesterov's accelerated gradient (constant step size $1/M$; momentum parameter $\gamma = 1 - \sqrt{1/0.02}$) are shown for comparison. Of course, none of these parameters are optimized; in the one-dimensional case, it is possible to reach the exact minimizer in one iteration.

\begin{figure}
    \centering
    \includegraphics[width=0.5\linewidth]{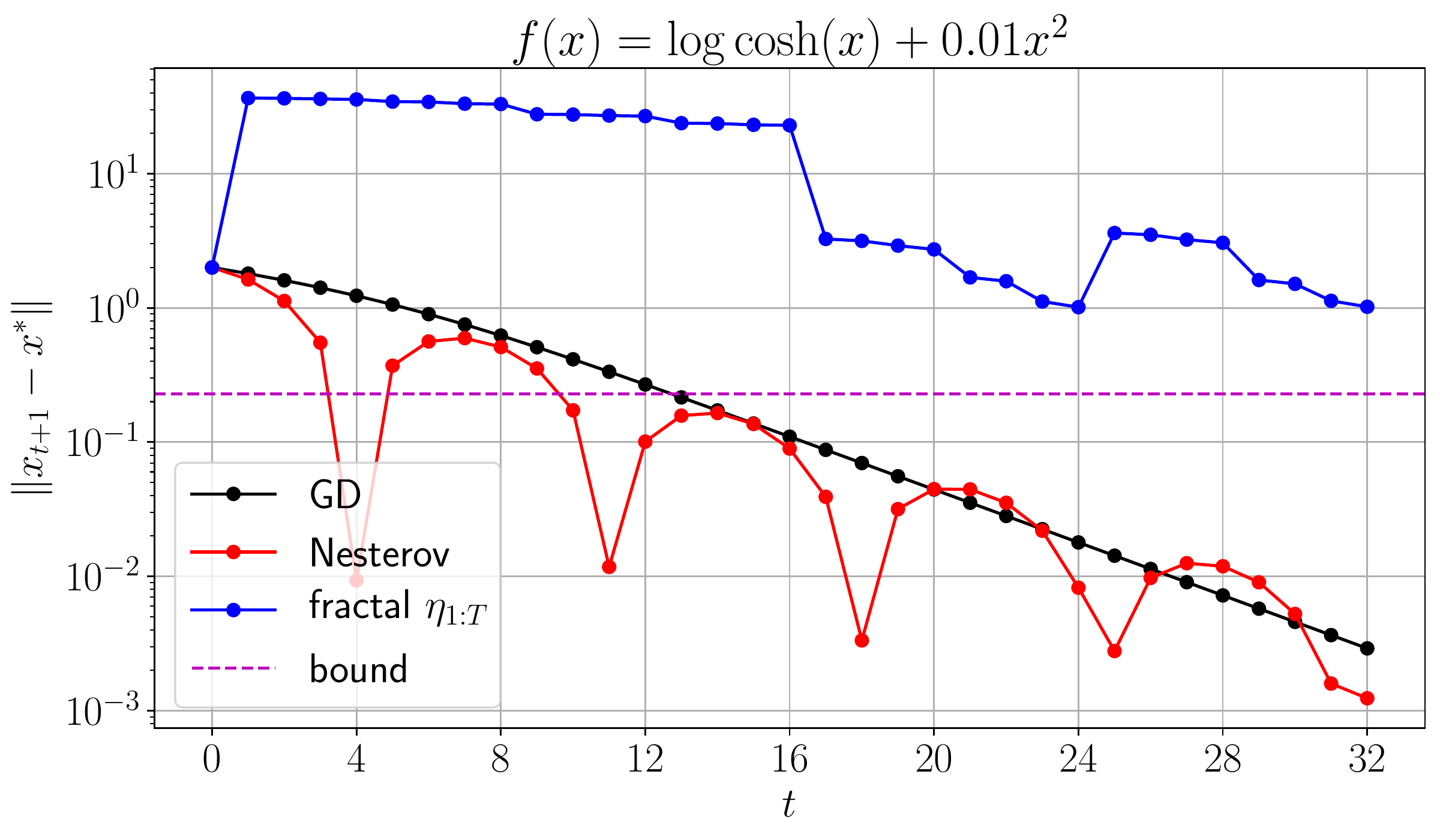}
    \caption{Non-convergent behavior of the fractal Chebyshev schedule on $f(x) = \log \cosh (x) + 0.01x^2$. The final iterate fails to follow the convergence bound from Theorem~\ref{thm:cheb-convergence-rate}.}
    \label{fig:logcosh}
\end{figure}

We conjecture that stronger negative results (say, an infinite family of counterexamples for all $T$) can be constructed.

\subsection{Convex experiments}
\label{subsec:appendix-convex-experiments}

To examine the empirical behavior of gradient descent with the fractal Chebyshev learning rate schedule on a deterministic higher-dimensional convex (but not quadratic) loss, we used the benchmark of logistic regression (with trainable biases, thus totaling $d=7850$ trainable parameters) on the MNIST dataset \cite{lecun1989backpropagation} with normalized raw pixel features, with an $\ell_2$ regularization coefficient of $10^{-3}$. The initial iterate was set to zero during all runs, for a completely deterministic setting. To measure the global minimum, we ran L-BFGS \cite{liu1989limited} until convergence to the 64-bit numerical precision floor.

\begin{figure}
    \centering
    \includegraphics[width=0.8\linewidth]{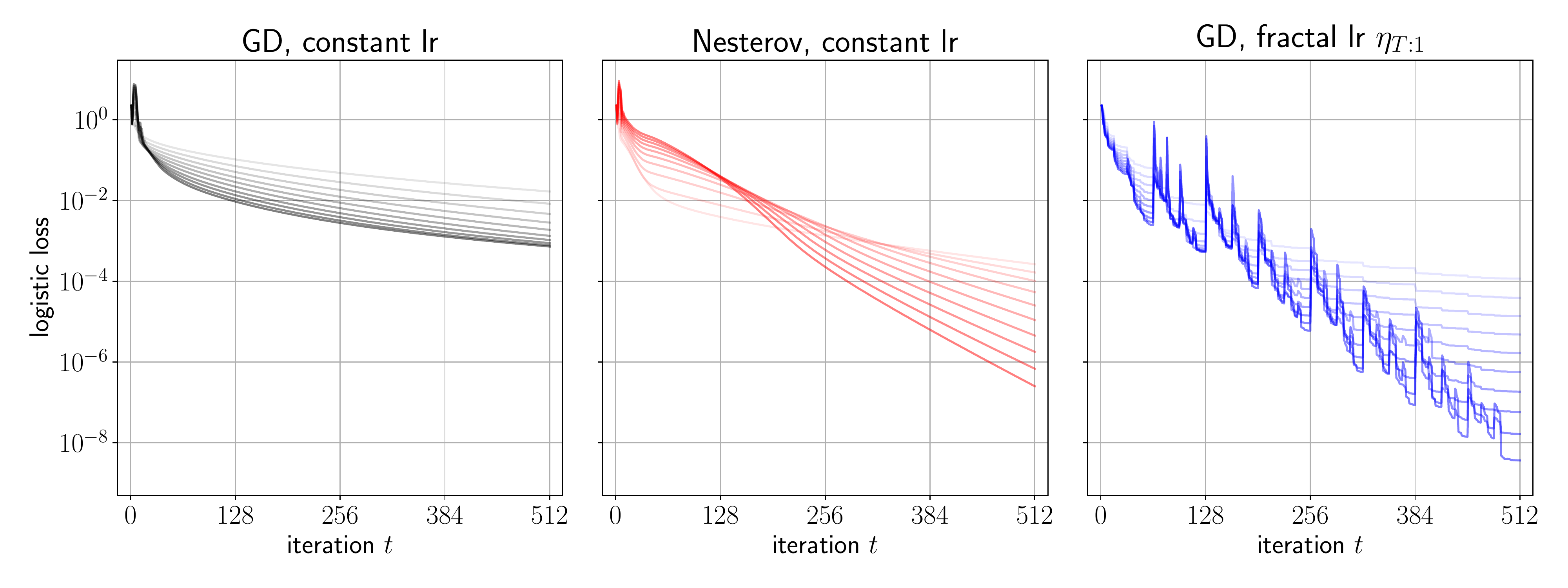}
    \caption{Convex deterministic MNIST experiments: comparison of classic non-accelerated and accelerated algorithms, and the fractal Chebyshev schedule. Most opaque curves correspond to the optimal tuned constant multiplier; lower opacity corresponds to shrinking the steps in equally spaced increments, down from $0.5$ to $0.1$. Gradient descent with the fractal schedule makes non-local progress and converges at the accelerated rate in practice.}
    \label{fig:convex-large}
\end{figure}

We compared three iterative algorithms: gradient descent with a constant learning rate (known in theory to get the slow rate), Nesterov's accelerated gradient descent with a constant learning rate and momentum parameter $0.9$ (known in theory to get the accelerated rate), and gradient descent with the reversed fractal schedule $\eta_{T:1}$ (no known theoretical guarantees in this setting). We tuned the constant learning rates in increments of $0.1$ until divergence (arriving at $0.5$ as the largest stable learning rate). For the fractal schedule, we chose $m=0.0006,M=5$, and tuned the global learning rate multiplier in increments of $0.1$ (arriving also at $0.5$). The step sizes in the schedule were in the range $[0.05, 408.65]$, with a mean of $4.56$ (\emph{much} larger than the maximum stable constant learning rate).

Figure~\ref{fig:convex-large} shows our results: in this setting, gradient descent can achieve accelerated convergence by overstepping the threshold of guaranteed local progress. We used the reverse schedule here (largest step last), as suggested for parameter stability in the noiseless setting. The forward schedule converged with accelerated rates for some choices of hyperparameters, but convergence on this non-quadratic objective was sensitive to initial large steps.

It is not the purpose of Figure~\ref{fig:convex-large} to demonstrate a comparison between Nesterov's acceleration and the fractal schedule; this is a somewhat brittle comparison and is sensitive to hyperparameter choice and floating-point precision. The quantitative comparison from this experiment is between gradient descent with the optimal constant learning rate and any fractal Chebyshev schedule which outperforms it. The Nesterov training loss curves are provided as an illustration only.

\subsection{Deep learning experiments}
\label{subsec:appendix-deeplearning}

We present some simple experiments for the fractal Chebyshev schedule on deep neural networks.
The purpose of this preliminary study is to demonstrate that the constant learning rate ``edge of stability'' can be overstepped without causing training to diverge, using a carefully designed schedule. We do not make claims about end-to-end performance improvements that are robust under ablation and tuning other hyperparameters.
A more systematic examination of the behavior of ``spiky'' learning rate schedules in deep learning is left for future work.

In the deep learning experiments, it is most convenient to think of a learning rate schedule as a time-varying multiplier on top of an existing baseline. Thus, it is most helpful to set the scaling hyperparameters to let the fractal schedule act as a ``learning rate bonus'': set $M = 1$, so that the smallest multiplier is approximately $1$, the largest $\approx 1/m$, and the mean $\approx 1/\sqrt{m}$.

\paragraph{CIFAR-10/ResNet experiments.} The experiments were conducted on the CIFAR-10 dataset on a pre-activation ResNet-18 model \cite{he2016identity} with $d\approx 11\mathrm{M}$ parameters. As a baseline, we trained the network with vanilla minibatch SGD with batch size 8192; the choice of a large batch was made to reduce confounding factors arising from stochasticity, and we omitted the usual practice of momentum in order to remove temporal correlations between step sizes. To find the edge of stability for training with a constant learning rate, we searched over the fixed learning rate parameter on an exponential grid of powers of 2, as depicted in Figure~\ref{fig:cifar-large} (right): the learning rate of $0.05$ leads to stable and the best results; at $0.1$, training is subject to destabilizing outliers, and at $0.2$, the model does not train at all; this is summarized in the fainter training loss curves in Figure~\ref{fig:cifar-large} (left).

\begin{figure}
    \centering
    \includegraphics[width=0.8\linewidth]{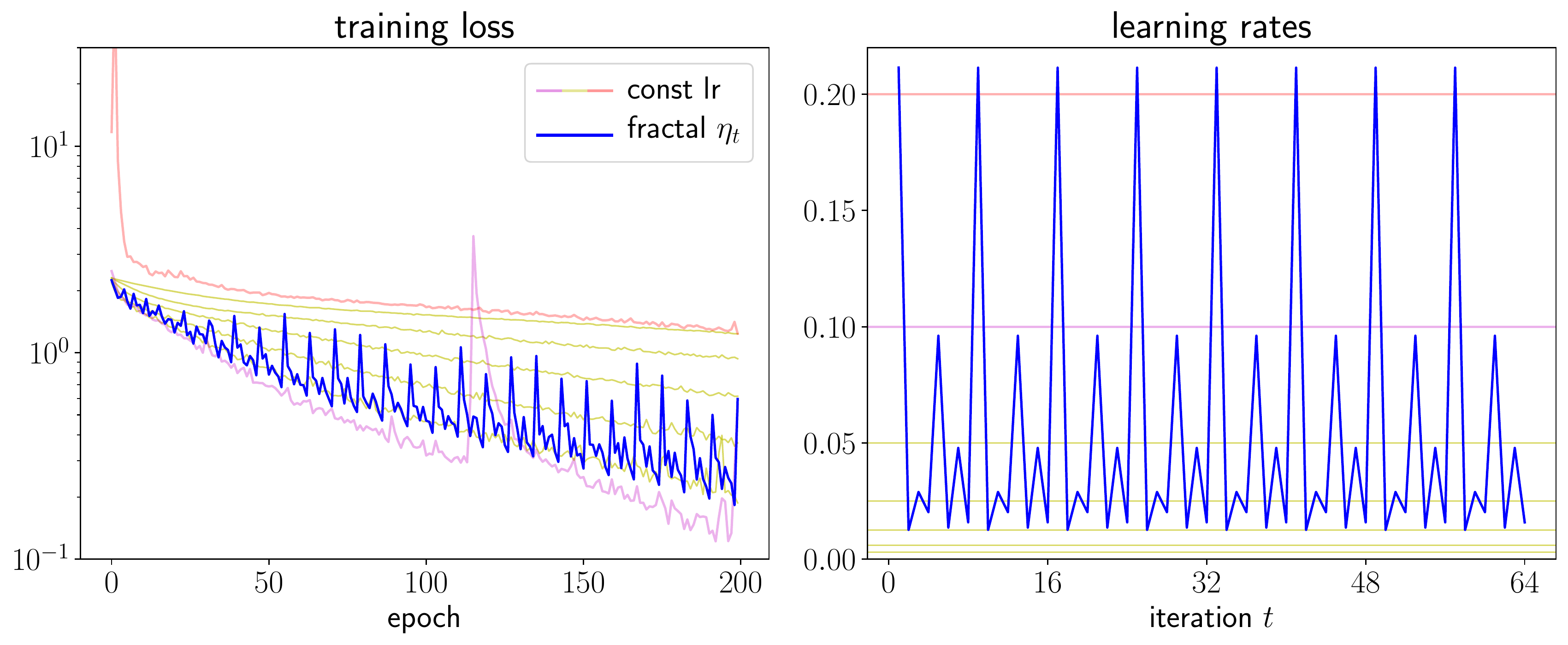}
    \caption{ResNet-18/CIFAR-10 training with batch size $8192$ and a repeated $T=8$ fractal Chebyshev schedule. \emph{Left:} Training loss curves. \emph{Right:} Learning rates; the schedule pokes through the edge of stability (magenta and red) without destabilizing training.}
    \label{fig:cifar-large}
\end{figure}

We applied a cyclic fractal schedule with $m=0.05$, $M=1$, and $T=8$, as a periodic multiplier on top of the constant learning rate $0.125$; this is pictured in Figure~\ref{fig:cifar-large} (right) as the blue curve. Although this schedule uses large learning rates that would cause unstable training, the fractal Chebyshev schedule periodically surpasses these learning rates while maintaining stable training.

We did not evaluate the model based on generalization performance (indeed, we have removed the usual practices of momentum, random cropping image augmentation, and a decaying learning rate schedule), but in this set of experiments we found the test accuracy to be slightly higher (83\%) than the best constant learning rate baseline (81\%). The stability results were consistent over 5 trials. These experiments were run in PyTorch with an $8\times$ NVIDIA Tesla V100 GPU machine, and each run took less than 30 minutes for 200 epochs.

\paragraph{MNIST experiments with a small neural network.} We chose a simpler and cheaper-to-train model to present a few more empirical insights on the behavior of fractal Chebyshev schedules beyond known theory. Namely, we use the model for MNIST classification from the PyTorch tutorial\footnote{\texttt{https://pytorch.org/tutorials/recipes/recipes/defining\_a\_neural\_network.html}}: two convolutional layers, followed by two fully-connected layers, with a total of $\sim1.2\textrm{M}$ parameters. The model was trained with SGD with batch size 1024.

\begin{figure}
    \centering
    \includegraphics[width=0.95\linewidth]{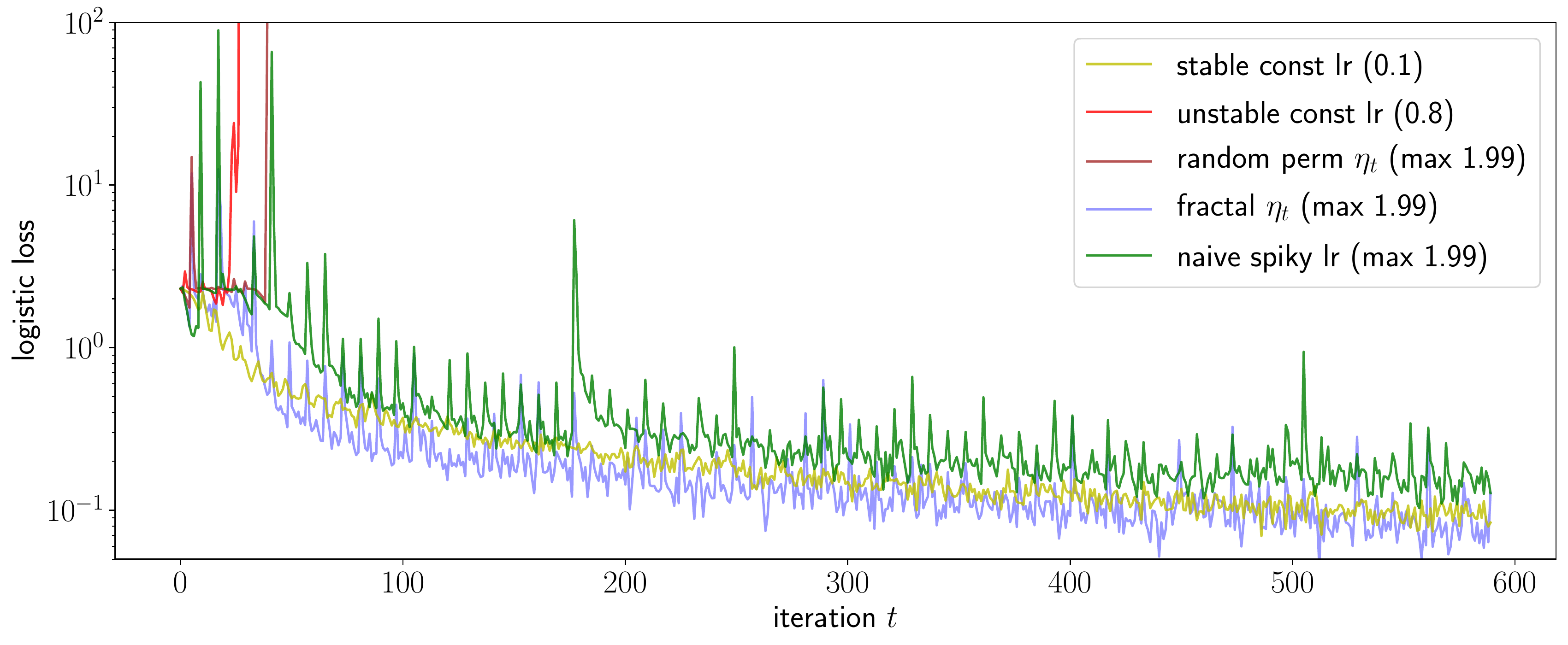}
    \caption{MNIST experiments, to show that baselines fail to stabilize training as successfully as the fractal Chebyshev schedule. The constant learning rate edge of stability is surpassed at a constant learning rate of 0.8, while the fractal schedule can take steps of up to 1.99. Random permutations of the same schedule cause divergent training, as does a simple ``spiky'' schedule which only oversteps once per cycle.}
    \label{fig:mnist}
\end{figure}

In this setup, the same methodology as the larger-scale experiments was used on a finer grid (linearly spaced between 0.1 and 0.8) to determine a stable constant learning rate (0.1) and an unstable one (0.8). A fractal Chebyshev schedule with $m=1/20, M=1, T=64$ accelerated convergence when applied to the stable constant learning rate baseline. However, randomly permuting this schedule caused divergent training. Furthermore, applying a periodic multiplier of $(20,1,1,1,1,1,1,1)$ resulted in \emph{worse} convergence. This exploratory study suggests that some of the self-stabilizing behavior of the fractal schedule in the theoretical setting (where large steps are stabilized by internal copies of Chebyshev iteration, which also consist of large steps) may hold, even for deep networks. Results were consistent over 10 trials.

These experiments were run in PyTorch on a $1\times$ NVIDIA Tesla P100 GPU machine, and each run took around less than 1 minute for 10 epochs.
\section{Additional discussion on related work}
\label{sec:appendix-literature}

\subsection{Fractal cyclic Chebyshev iterative methods}
\label{subsec:appendix-lebedev-literature}
We provide a review of the line of work that serves as the origin of these fractal permutations. These were motivated by the setting of \emph{cyclic iteration methods} for solving linear equations by least-squares in Banach spaces, a primitive in finite element methods for solving partial differential equations. All citations we could find for this line of work have been in the context of numerical methods for least-squares; much of it is in the Russian language, untranslated. We have not encountered prior work linking these methods to machine learning.

\citet{lebedev1971order} construct the fractal permutation seen in this paper, and proved the prefix and suffix bounds, as well a series bound for all prefixes (as opposed to infixes). This remarkable paper is the starting point for us (as well as the authors, evidently). Appendix~\ref{sec:appendix-lebedev} is an attempt to make that paper more accessible (it is far longer than the original paper).

\citet{lebedev1973solution} consider generalized versions of the construction, where $T$ is any positive integer, and the polynomial splitting is performed with the prime factors. They describe general conditions under which stability of a cyclic method (thus only prefix, suffix, and series bounds) can be achieved, and prove stability theorems like the previous work about constructions where the only prime factors of $T$ and 2 and 3.

\citet{lebedev1976utilization}, working in this generalized setting, also analyze the stability of a single cycle of a fractal Chebyshev schedule. They consider series sums of infixes where the series terminate at indices $d_n$ which form a divisor chain of $T$, rather than general indices.

\citet{lebedev2002construction,lebedev2004construction} provide English-language overviews of selected methods and theorems from this line of work.

In all of these works, the authors were motivated by high-precision settings in computational physics, rather than statistical or model error. This is perhaps why they were motivated to generalize the results of \citep{lebedev1971order} in their follow-up papers in a different way than ours. Thus, even though the ideas and motivations overlap with the ones considered in our work, especially in \citep{lebedev1976utilization}, the authors might not have found it important to bound the noise stability of every intermediate iteration of the algorithm. However, thinking of the perturbations as arising from statistical error or model misspecification, this is a natural notion for our setting. We could not find a way to immediately derive our estimates from any theorem or intermediate lemma in \citep{lebedev1976utilization}.

It remains an interesting direction for future work to find efficient algorithms to compute stable schedules for general $T$, and analyze their every-iterate stability like in our work. We could not see immediate ways to extend Theorems~\ref{thm:infix-bound} and \ref{thm:infix-series-bound} to their more general classes of schedules.

\subsection{Learning rate schedules and tradeoffs in practice}
\label{subsec:appendix-lr-schedule-practice}

State-of-the-art models do not show any signs of consensus towards principled or fully-understood learning rate schedules, adaptive or otherwise. A common practice has been to use a cosine learning rate schedule, originally proposed for cyclic warm restarts \cite{loshchilov2016sgdr} but widely adopted in its one-cycle form. For example, GPT-3 \citep{brown2020language} was trained with a cosine schedule. Large-scale empirical studies \cite{shallue2019measuring} indicate that the optimal choice of learning rate schedule is sensitive to the batch size. See the discussion on learning rate schedules in \cite{you2019large} for a discussion of recent empirical observations in pretraining large-scale language models.

Several papers study the theoretical tradeoffs between stability and acceleration in large-scale stochastic optimization: \cite{bottou2007tradeoffs,devolder2014first,chen2018stability,agarwal2020stochastic}. A common message throughout these papers is that the best choice of iterative optimization algorithm depends in general on the data, model, and computational resources.

\cite{cohen2020gd} provide an empirical account of the insufficiency of second-order Taylor approximations of the loss function in deciding the correct learning rate. \cite{agarwal2020disentangling} point out that learning rate schedules are entangled with adaptive gradient methods.


\subsection{Learning rate schedules in theory}
\label{subsec:appendix-lr-schedule-theory}

While learning rate schedules while ubiquitously used in practice, the diversity of existing practical learning rate schedules has received little theoretical treatment. In convex optimization, learning rate schedules have primarily been employed in stochastic settings, in particular to correctly average the zero-mean noise. It is well known in the stochastic and online optimization literature that a step schedule akin to $t^{-1/2}$ is necessary for the convergence of stochastic gradient descent. In the case of zero-mean bounded variance stochastic noise, the AC-SA algorithm proposed by \cite{lan2012optimal} which achieves optimal convergence rates employs an effective step decay schedule of $t^{-3/2}$. In a complementary line of work, \cite{ge2019step} show that for the streaming least squares regression problems, no polynomial decay of learning rates achieves the minimax rate; on the other hand the rate is achieved by the geometric decay learning rate schedule which is very popular in practice. 

An alternative point of view towards the power of learning rate schedules arises from the Polyak step size \cite{polyak1987introduction, hazan2019revisiting}, which is a single learning rate per step which generalizes the classical gradient descent oblivious to the smoothness/strong convexity properties of the function. The Polyak step size requires the knowledge of the optimality gap at any point. The vanilla version of the Polyak step size is unable to provide accelerated rates; an extension of these ideas to momentum has been carried out by \citet{barre2020complexity}. 

Practical deep learning models due to the presence of normalization layers lead to homogeneous models. For such models, \citet{li2019exponential} perhaps surprisingly show that the standard training algorithm which includes weight decay and momentum is equivalent to performing an exponentially increasing learning rate schedule. \citet{li2020reconciling} further explore the intricate interaction of weight decay and learning rates in such models proposing the notion of an intrinsic learning rate. The practice of using a large initial learning rate in optimization from the point of view of better generalization has been theoretically investigated in \cite{li2019towards} (see references herein for a detailed treatment of the topic). 

A line of work \citep{orabona2017training,cutkosky2018black} derives parameter-free algorithms for selecting learning rates which are optimal in the noise-dominated (as opposed to curvature-dominated) regime. These algorithms are shown to be practical for training deep neural networks with small batch size (e.g. a convolutional network for CIFAR-10 with batch size 128). The theory presented in this paper is only applicable to large batch/curvature-dominated settings which is the regime, where one might hope to isolate the benefits of acceleration. In small-batch/noise-dominated settings, the precise role of acceleration/learning rate schedules is muddled with confounding factors (e.g. variance reduction); see the next section for a discussion of this point. Designing adaptive algorithms which interpolate between these results and ours, like the analysis of Nesterov's acceleration under additive noise \cite{lan2012optimal}, is an interesting direction for future work; we hope that this will lead to new practical algorithms for large-scale settings.

\subsection{Acceleration methods and momentum} 

The phenomenon of acceleration in numerical analysis and optimization is a classical concept which has manifested through a large variety of viewpoints , algorithms, and analyses over the years. We provide a very short and limited summary of these manifestations, focusing on more modern machine learning focused developments. For an in-depth treatment, we strongly recommend the reader to refer the recent monograph \cite{d2021acceleration}. Possibly the earliest works on non-linear acceleration in numerical analyses date back to Aitken's $\Delta^2$ \cite{aitken1927xxv}, Wynn's epsilon algorithm \cite{wynn1956device}, and Anderson acceleration \cite{anderson1965iterative} (see \cite{sidi1986acceleration} for an in-depth survey, or the blogpost \cite{BachBlog} for a condensed description). The recent work of \cite{li2020fast} establishes an optimum rate for an Anderson acceleration method based on Chebyshev polynomials. The more standard suite of acceleration algorithms applied in machine learning arise from the direct acceleration algorithms like Polyak's momentum (also known as the heavy ball algorithm) \cite{POLYAK19641} and Nesterov's breakthrough result \cite{nesterov1983method} which established the optimal rates for general smooth convex optimization.

More recently, various acceleration algorithms \cite{allen2014linear, bubeck2015geometric} have been proposed, with more intuitive analyses than Nesterov's. Another line of work stemming from the work of \cite{su2014differential, wibisono2015accelerated} derives Nesterov-like methods via discretizations of appropriate continuous-time differential equations. A lesser known (but relevant to our work) version of direct acceleration is Nemirovski's acceleration based on a line search (\emph{not} a search over $\eta_t$ like the greedy steepest descent method); see \cite{BubeckBlogNemirovski} for a concise exposition. An alternative methodology for acceleration \cite{monteiro2013accelerated, lin2018catalyst} comes about via iteratively solving appropriate (strongly convex) proximal point problems using classical iterative methods. This latter line of work has been influential in deriving optimal accelerated versions of higher-order methods \cite{nesterov2008accelerating, bubeck2019near}.

In stochastic and/or non-convex settings (including deep learning), the role of acceleration is not fully clear. In the general convex case, worst case theory \cite{lan2012optimal} suggests that acceleration leads to benefits only in curvature-dominated regime (as opposed to the noise dominated regime). Nevertheless, heavy-ball momentum and Nesterov acceleration are part of the core toolkit in state-of-the-art optimization for optimization in various batch size regimes \cite{sutskever2013importance, kingma2014adam, dozat2016incorporating}. Recent theoretical work \cite{cutkosky2019momentum} suggests that momentum can implicitly perform variance reduction (akin to a low-pass filter), leading to improved convergence rates for stochastic optimization in non-convex problems. Understanding the variance-reducing mechanisms of the fractal schedules (or any learning rate schedule in general) is an interesting direction for future work.
 
In a recent orthogonal line of inquiry into momentum methods, \citet{pmlr-v119-pedregosa20a, pmlr-v119-scieur20a} analyze an average case setting of the quadratic model, and establish the universality of Polyak momentum as an optimal algorithm. The analysis of globally-optimized learning rate schedules in average-case settings is an interesting direction for future work.

\subsection{Optimization as a dynamical system}

Our approach to analyzing stability is most similar to the view of optimization algorithms as dynamical systems \cite{lessard2016analysis,li2017stochastic}. Of course, beyond the simplest objectives and noise models, optimization algorithms are nonlinear dynamical systems; thus, theory under this very general abstraction is very limited. \citet{bousquet2002stability} define related but stronger notions of stability, which can lead to generalization properties \cite{hardt2016train,chen2018stability,agarwal2020stochastic}.

In the dynamical systems view, our work shows that stable acceleration is obtained by treating the learning rate schedule as a long-horizon planning problem, accounting for the interactions between the choices of $\eta_t$ at different times and the global curvature of the loss. Even an open-loop control sequence (i.e. non-adaptive schedule) designed with global objectives has a provable benefit over a naive closed-loop controller (i.e. adaptive line search) which only uses instantaneous feedback (i.e. $x_t, g_t$). Thus, it may be beneficial for any closed-loop controller for the learning rate schedule to depend on global context or curvature, and possibly make negative local progress. In light of this, neural and reinforcement learning-based optimizer search \citep{bello2017neural} may be an enticing solution to the empirical problem of scheduling the learning rate with awareness of global curvature.

\end{document}